\def\cG{{\mathcal{G}}}
\def\cH{{\mathcal{H}}}
\def\cI{{\mathcal{I}}}
\def\cL{{\mathcal{L}}}
\def\cO{{\mathcal{O}}}
\def\cS{{\mathcal{S}}}
\def\cT{{\mathcal{T}}}
\def\cV{{\mathcal{V}}}
\def\bF{{\mathbb{F}}}
\def\bN{{\mathbb{N}}}
\def\bR{{\mathbb{R}}}
\def\bU{{\mathbb{U}}}
\def\bZ{{\mathbb{Z}}}
\theoremstyle{plain} \numberwithin{equation}{section}
\newtheorem{theorem}{Theorem}[section]
\numberwithin{theorem}{section}
\newtheorem{lemma}[theorem]{Lemma}
\newtheorem{corollary}[theorem]{Corollary}
\newtheorem{proposition}[theorem]{Proposition}
\theoremstyle{definition}
\newtheorem{definition}[theorem]{Definition}
\newtheorem{remark}[theorem]{Remark}
\theoremstyle{plain}
\newcommand{\ring}{K[X]}
\newcommand{\fring}[1]{\mathbb{F}_{{#1}}[x_1,\ldots, x_n]}
\newcommand{\ideal}[1]{\langle{#1}\rangle}
\newcommand{\lt}[1]{\texttt{LT}({#1})}
\newcommand{\lm}[1]{\texttt{LM}({#1})}
\newcommand{\ltp}{\texttt{LT}}
\newcommand{\lcp}{\texttt{LC}}
\newcommand{\fs}{f_1,\ldots, f_r}
\newcommand{\gs}{g_1,\ldots, g_s}
\newcommand{\xs}{x_1,\ldots, x_n}
\newcommand{\os}{o_1,\ldots, o_\nu}
\newcommand{\lcm}[1]{\mathrm{lcm}({#1})}
\newcommand{\codim}[1]{\mathrm{codim}({#1})}
\newcommand{\maxdeg}[1]{\mathrm{maxdeg}({#1})}
\newcommand{\gb}{Gr\"{o}bner }
\colorlet{pink}{red!40}
\colorlet{lightblue}{blue!30}
\colorlet{lightgreen}{green!30}
\definecolor{skyblue}{RGB}{135,206,235}    %
\definecolor{peach}{RGB}{255,218,185}  %
\newcolumntype{Y}{>{\centering\arraybackslash}X}
\newacronym{dl}{DL}{Deep Learning}
\newacronym{bba}{BBA}{Border Basis Algorithm}
\newacronym{obba}{OBBA}{Oracle Border Basis Algorithm}
\title{Computational Algebra with Attention:\\ Transformer Oracles for Border Basis Algorithms}
\author{%
Hiroshi Kera\thanks{Equal contribution.\\\hspace*{1.8em}Correspondence to \texttt{\href{mailto:kera@chiba-u.jp}{kera@chiba-u.jp}} or \texttt{\href{mailto:pelleriti@zib.de}{pelleriti@zib.de}}.}$^{1 2}$ 
\quad
Nico Pelleriti\footnotemark[1]$^{2}$ 
\quad
Yuki Ishihara$^{3}$ 
\quad
Max Zimmer$^{2}$ 
\quad
Sebastian Pokutta$^{2}$ 
}
\begin{document}

\maketitle

\begin{abstract}
Solving systems of polynomial equations, particularly those with finitely many solutions, is a crucial challenge across many scientific fields. Traditional methods like Gröbner and Border bases are fundamental but suffer from high computational costs, which have motivated recent Deep Learning approaches to improve efficiency, albeit at the expense of output correctness. In this work, we introduce the \textsc{Oracle Border Basis Algorithm}, the first Deep Learning approach that accelerates Border basis computation while maintaining output guarantees. To this end, we design and train a Transformer-based \emph{oracle} that identifies and eliminates computationally expensive reduction steps, which we find to dominate the algorithm's runtime. By selectively invoking this oracle during critical phases of computation, we achieve substantial speedup factors of up to 3.5x compared to the base algorithm, without compromising the correctness of results. %
To generate the training data, we develop a sampling method and provide the first sampling theorem for border bases. We construct a tokenization and embedding scheme tailored to monomial-centered algebraic computations, resulting in a compact and expressive input representation, which reduces the number of tokens to encode an $n$-variate polynomial by a factor of $\cO(n)$. Our learning approach is data efficient, stable, and a practical enhancement to traditional computer algebra algorithms and symbolic computation.
\end{abstract}

\section{Introduction}
Polynomial equation systems serve as a fundamental model in a wide range of fields, including dynamical systems (such as those arising from the Lorenz attractor~\cite{lorenz1963deterministic} or from digitalized gene-regulatory networks~\cite{laubenbacher2004computational,laubenbacher2009computer}), cryptography where multivariate systems~\cite{UOV99,HFERP18} form the basis of signature schemes~\cite{MQchallenge}, in data-manifold representation~\cite{pelleriti2025latent}, camera-pose estimation~\cite{GBComputerVision}, or optimization~\cite{lasserre2001global,abrilbucero2016border}. Solving these systems, even with a small probability of success, reveals critical properties such as stationary points~\cite{datta2003using} and optimal control strategies~\cite{GBControl}, highlighting the fundamental importance of polynomial system solving. Many of these applications involve so-called \emph{zero-dimensional systems} with finitely many solutions, which are the focus of this work.

However, solving polynomial systems is notoriously hard: Even for zero-dimensional systems with finitely many solutions, the worst-case complexity is exponential in the number of variates~\cite{MM1982ComplexityGB, Dube90complexityGB} and becomes practically intractable with as few as five variables. A most fundamental tool for their solution is the computation of \emph{\gb bases}~\cite{buchberger1965algorithm,cox2015ideals} or \emph{Border bases}~\cite{kehrein2005characterizations,kehrein2006computing,kreuzer2005computational2}, the former being interpretable as a nonlinear analogue of Gaussian elimination and the latter being a generalization of the former in the zero-dimensional case.

This work focuses on border basis computation, which is essentially an iterative process that extends a polynomial basis set: at each step, new candidate polynomials are generated from the current basis and \textit{reduced} to determine whether they extend the basis span. Crucially, reducing candidates that fail to extend the basis set is computationally wasteful and leads to long runtime of the algorithm. While several computer-algebraic techniques, including heuristics, address this~\cite{kehrein2006computing, horaceck2016}, the advances in the field of \gls{dl} suggest new directions for symbolic computation~\cite{lample2020deep,kera2024learning,alfarano2024global}: A seminal work~\cite{Peifer2020} introduced a Reinforcement Learning approach to optimize candidate polynomial construction in \gb basis computation, though their method was limited to binomial \gb bases and inputs. %
A recent work~\cite{kera2024learning} used a Transformer~\cite{vaswani2017attention} to directly predict \gb bases from input systems, but lacks output guarantees - verifying the predicted basis requires a full \gb computation, nullifying any efficiency gains.

In this study, we address this problem from the \emph{Algorithms with Predictions} perspective - enhancing algorithms with predictions for better performance, without sacrificing the correctness of the output. To that end, we propose the \textsc{Oracle Border Basis Algorithm}, the first output-certified algorithm for solving zero-dimensional polynomial systems using \gls{dl}. Precisely, we develop a supervised training framework of a Transformer-based oracle that eliminates unnecessary reduction candidates, thereby significantly improving the efficiency of border basis computation.

\textbf{Contributions.} Our contributions can be summarized as follows:
\begin{enumerate}
    \item \textbf{Algorithm for supervised data generation:} 
    To generate the oracle's training data, we develop a sampling method for diverse polynomial sets, realizing the random generation of diverse generators of zero-dimensional ideals. We present the first sampling theorem for border bases and also generalize the \textit{ideal-invariant} transformation theorem \cite{kera2024learning}.

    \item \textbf{Efficient monomial-level token embedding:} 
    We propose a tokenization and embedding scheme tailored to monomial-centered algebraic computations, resulting in a compact and expressive input representation. For $n$-variate polynomials, it reduces token count by $\cO(n)$ and attention memory by $\cO(n^2)$ while improving predictive accuracy.

    \item \textbf{Efficient oracle-guided algorithm with correctness guarantees:}
    Finally, we propose a Transformer oracle specialized to the final stage of border basis computation, along with an effective heuristic for determining when to invoke the oracle. The resulting \textsc{Oracle Border Basis Algorithm} eliminates unnecessary reduction candidates and significantly improves the efficiency of border basis computation, validated across diverse parameter settings and prediction tasks.
\end{enumerate}

A key insight is that border basis computation proceeds degree-by-degree, enabling us to collect labeled training examples by recording which reductions extend the basis at each degree. In contrast, \gb basis computation lacks this natural decomposition, which is why the previous study~\cite{Peifer2020} had to rely on reinforcement learning - a less data-efficient and less stable approach than our supervised learning framework ~\cite{NEURIPS2024_da84e39a, pmlr-sun17d, pmlr-v15-ross11a, zare2024imitation}. Since border bases generalize \gb bases in the zero-dimensional case, we can efficiently recover \gb bases from border bases when they exist.%

\textbf{Further related work.}
Buchberger's algorithm \cite{buchberger1965algorithm} computes Gröbner bases by iteratively forming and reducing S-polynomials until achieving closure under a monomial order \cite{cox2015ideals}. Faugère's F4/F5 algorithms \cite{faugere1999f4,faugere2002f5} improve upon this via sparse linear algebra and signature-based redundancy detection. Various heuristics \cite{giovini1991one} help mitigate combinatorial explosion. Border bases do not require a fixed variable ordering and are a term-order-free \cite{BP2009}, numerically stable alternative to Gröbner bases, defined relative to a chosen monomial set~$\mathcal O$ with leading terms on its \emph{border} \citep{kehrein2005characterizations, kehrein2006computing}.

Apart from the previously mentioned works on using machine learning for symbolic computation, \cite{huang2019using} enhanced cylindrical algebraic decomposition, while \cite{xu2019online} improved the stability of Gröbner basis solvers. Our work further connects to the broader field of learning-guided algorithms, where predictions have proven effective at steering heuristic decisions across combinatorial optimization problems \cite{mitzenmacher2020}. For example, in the context of mixed integer programming, \cite{alvarez2017machine} and \cite{khalil2016learning} demonstrated the effectiveness of learning-based approaches to improve branching decisions.

\section{Preliminaries}
\begin{figure}[t]
    
    \centering
    \includegraphics[width=1\textwidth]{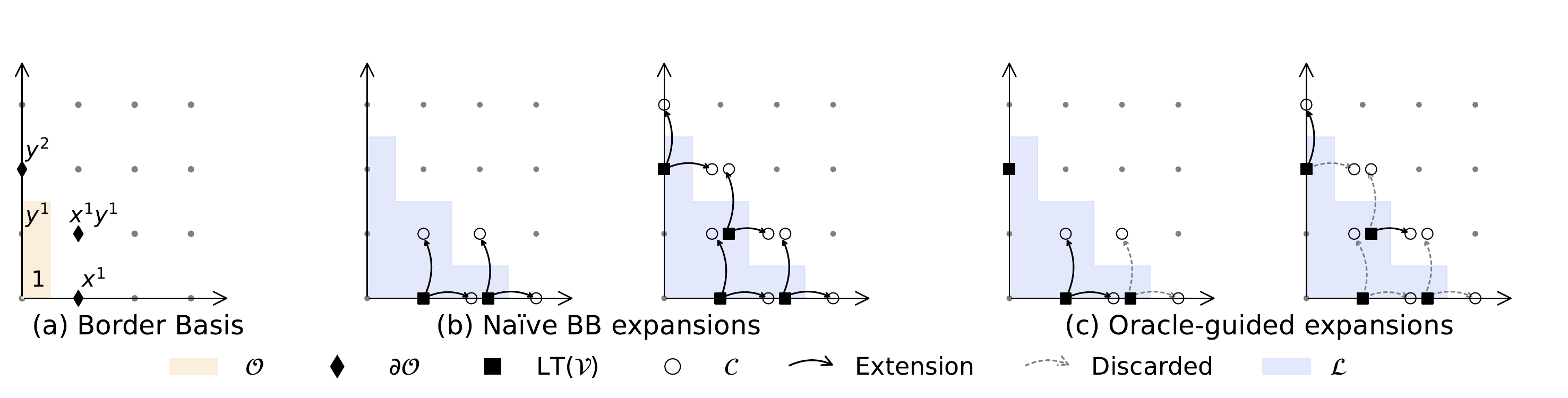}
    \caption{Border basis concepts: (a) A border basis with order ideal $\{1,y\}$ and border terms $\{y^2,xy,x\}$. (b) BBA's iterative expansion of $\mathcal{V}$, showing leading terms: two initial polynomials yield four expansions, then eight more - though only two out of twelve were necessary. (c) The oracle approach achieves the same result with just four targeted expansions.}
    \label{fig:border-basis-algo-visualization}
\end{figure}
We introduce the necessary notation as well as the core concepts of border bases and their computation. For comprehensive treatments, see \cite{kehrein2005characterizations, kehrein2006computing} on border bases and \cite{cox1992ideals} on ideals and polynomials.

An $n$-variate term or monomial $x^{\bm{a}} \coloneqq x_1^{a_1}\cdots x_n^{a_n}$ is defined by an exponent vector $\bm{a} \in \bZ_{\ge 0}^n$. We denote by $\cT_n$ the set of all such terms and by $\ring = K[x_1,\ldots,x_n]$ the polynomial ring over a field $K$. We use $\ring_{\le d}$ for its restriction to polynomials of degree at most $d$.
We use $\prec$ to denote a term ordering on $\ring$. The leading term $\lt{f}$ of a polynomial $f$ is its largest term under $\prec$. For a polynomial set $F = \{f_1,\ldots,f_r\}$, $\lt{F} = \{\lt{f_1}\ldots, \lt{f_r}\}$ denotes its leading terms. The set of polynomials $\cI = \ideal{\fs} = \qty{ f_1h_1 + \cdots + f_rh_r \mid h_1,\ldots, h_r \in \ring}$ is called the ideal generated by $F$ in $\ring$. If $\lt{\ring}\setminus\lt{I}$ is a finite set, the ideal is called \emph{zero-dimensional}. The cardinality of the set denoted is by $\abs{\,\cdot\,}$. For an ideal $\cI \subset \ring$, the quotient ring $\ring/\cI$ denotes the factor ring of $\ring$ modulo $\cI$. Its elements are equivalence classes of polynomials under the relation $f \sim g \iff f - g \in \cI$. The symbol $\oplus$ denotes the direct sum of vector spaces: for subspaces $V, W \subset \ring$, $V \oplus W$ means every element can be uniquely written as $v + w$ with $v \in V$, $w \in W$, and $V \cap W = \{0\}$.

\subsection{Defining border bases}
A border basis of ideal $\cI$ is a set of polynomials $\{g_1, \ldots, g_s\}$ that generates $\cI$ with specific properties and is defined relative to an \emph{order ideal} $\mathcal{O}$, a subset of monomials that is closed under division.
\begin{definition}
A finite set $\mathcal{O}\subset \c{T}_n$ is an \emph{order ideal} if whenever $t \in \cT_n$ divides any $o\in\mathcal{O}$, then $t \in \mathcal{O}$. Its \emph{border} $\partial\mathcal{O} = \qty(\bigcup_{k=1}^n x_k\mathcal{O}) \backslash \mathcal{O}$ consists of terms obtained by multiplying elements of $\mathcal{O}$ by variables which are not in $\mathcal{O}$ itself.
\end{definition}

\begin{definition}\label{def:border-basis}
Let $\mathcal{O} \subset \mathcal{T}_n$ be an order ideal with border $\partial\cO = \{b_1, \ldots, b_s\}$. A set of polynomials $\cG = \qty{\gs}$ is an \emph{$\mathcal{O}$-border prebasis} if for all $i=1,\ldots, s$,
\begin{align}
	g_i = b_i - \sum_{t\in \cO} c_t t, 
\end{align}
with $c_t \in K$. If $\mathcal{O}$ forms a basis of $\ring/\ideal{G}$, then $G$ is the \emph{$\mathcal{O}$-border basis} of $\ideal{G}$. In other words, with the span of vector space $\ideal{\cO}_K := \{ \sum_
{t \in \cI} c_t t \mid c_t \in K \}$, we have $\ring = \cI \oplus \ideal{\cO}_K$. 
\end{definition}

To illustrate, consider a simple polynomial system over $\mathbb{Q}$ with polynomials $x^2 + y^2 - 1$ and $x - 1$. Taking the order ideal $\mathcal{O} = \{1, y\}$, we obtain the border $\partial \mathcal{O} = \{x, yx, y^2\}$. The corresponding border basis is $\{y^2, x - 1, xy - y\}$. It generates the original ideal $\ideal{x^2 + y^2 - 1, x - 1}$ and has the required structure: each polynomial's leading term lies in $\partial \mathcal{O}$ while its remaining terms are in $\mathcal{O}$.

\subsection{Computing border bases}
For clarity, we defer the full border basis algorithm to the appendix and focus on its most important component: the \emph{L-stable-span} step~\citep{kehrein2006computing}. This step is central to the \gls{bba}, and accelerating it is the primary objective of our work.

The L-stable-span step operates as follows. We begin with a finite set $\mathcal{L} = \{x^\alpha : \|\alpha\|_1 \leq d\}$, which serves as our computational universe of monomials up to degree $d$. We are also given an initial set of polynomials $\mathcal{V}_0 \subseteq \mathrm{span}(\mathcal{L})$, each with a distinct leading term.

At each iteration, we \emph{expand} the current set $\mathcal{V}$ by multiplying every polynomial $v \in \mathcal{V}$ by each variable $x_j$, for $j = 1, \ldots, n$. This produces the set $\mathcal{V}^+ = \{x_j v \mid v \in \mathcal{V},\; j = 1, \ldots, n\}$. Next, we perform a linear algebra operation called $\mathrm{BasisExtension}$: we compute a basis for the span of the expanded set, and then restrict this basis to those polynomials whose terms lie within $\mathcal{L}$. If this process does not yield any new elements, the routine terminates.

\begin{wrapfigure}[14]{r}{0.45\textwidth}
    \vspace{-1em}
    \hspace{5pt}
    \begin{minipage}{0.43\textwidth}
    \SetAlgoNoLine  
        \begin{algorithm}[H]
            \caption{L-Stable Span computation in \colorbox{peach}{BBA} and \colorbox{skyblue}{OBBA}}
            \label{alg:border-basis}
            \SetKwInOut{Input}{Input}
            \Input{Polynomials $\mathcal{V}_0$, universe $\mathcal{L}$;}
            $i \gets 0$\\
            \Repeat{$\mathcal{V}_i = \mathcal{V}_{i+1}$}{
                \colorbox{peach}{$\mathcal{C}_i \gets \mathcal{V}_i^+$}\\
                \colorbox{skyblue}{$\mathcal{C}_i \gets \mathrm{Oracle}(\mathcal{L}, \mathcal{V}_i)$}\\
                $\mathcal{V}_{i+1} \gets \text{BasisExtension}(\mathcal{V}_i, \mathcal{C}_i, \mathcal{L})$\\
                $i \gets i + 1$
            }
        \end{algorithm}
    \end{minipage}
\end{wrapfigure}
    
To clarify the algorithmic process, consider again $\mathcal{V} = \{x - 1, x^2 + y^2 - 1\}$ and the initial computational universe $\mathcal{L} = \{1, x, y, x^2, xy, y^2\}$. 
Multiplying each $v \in \mathcal{V}$ by each variable yields four candidates
(cf. \cref{fig:border-basis-algo-visualization} b)). After reduction modulo $\mathcal{V}$, two candidates have leading terms in $\mathcal{L}$: $x \cdot (x - 1) = x^2 - x$ and $y \cdot (x - 1) = yx - y$. 
Reducing $x^2 - x$ modulo $\mathcal{V}$ gives $x^2 - x - (x^2 + y^2 - 1) = -x - y^2 + 1$. Thus, $\mathcal{V}$ is extended to include $y^2 + x - 1$ and $yx - y$.

Most importantly for our setting, \gls{bba} identifies the successful candidates for the border basis only in \emph{hindsight}. To address this, we introduce an oracle that predicts these key polynomials in advance, reducing unnecessary iterations and improving efficiency.

\section{The Oracle Border Basis Algorithm}\label{sec:obba}

In practice, most elements of $\mathcal{C}_i$ are redundant—they vanish after reduction modulo $\mathcal{V}_i$.  
To eliminate this inefficiency, we introduce an \emph{expansion oracle} that selects a much smaller subset ${\mathcal{C}}_i \subseteq \mathcal{V}_i^{+}$; only these polynomials proceed to the reduction step. We refer to the resulting algorithm as the \gls{obba}. The difference to \gls{bba} is highlighted in \cref{alg:border-basis}: instead of setting \colorbox{peach}{$\mathcal{C}_i \gets \mathcal{V}_i^+$} as \gls{bba} does, we use the oracle \colorbox{skyblue}{$\mathcal{C}_i \gets \mathrm{Oracle}(\mathcal{L}_i, \mathcal{V}_i)$}.

The oracle is a lightweight Transformer model (detailed in the next section) that takes a tokenized version of the current computational universe $\mathcal{L}_i$ and generator set $\mathcal{V}_i$ as input, and outputs a set of pairs $(x_\ell, v_m)$, where $x_\ell$ is a variable and $v_m = \lt{p_m}$ is a distinct leading term in $\mathcal{V}_i$. Each pair corresponds to a candidate polynomial $x_\ell p_m \in \mathcal{V}_i^{+}$.

\subsection{Termination of the algorithm}

We allow the oracle to propose a reduced candidate set up to $k$ times. After the $k$-th invocation, the algorithm defaults to the standard \gls{bba} expansion, i.e., $\mathcal{C}_i = \mathcal{V}_i^{+}$. This strict cap ensures correctness: the oracle can override the vanilla expansion at most $k$ times, after which the algorithm reverts to the standard procedure, preserving both termination and exactness:

\begin{theorem}
    \gls{obba} terminates and returns a correct border basis.
  \end{theorem}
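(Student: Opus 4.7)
The plan is to decompose the claim into a termination argument and a correctness argument, then exploit the fact that after at most $k$ oracle invocations the procedure is \emph{syntactically identical} to \gls{bba}, whose correctness and termination are already established in \cite{kehrein2006computing}.

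\textbf{Termination.} First I would observe the invariant that $\mathrm{span}(\mathcal{V}_i) \subseteq \mathrm{span}(\mathcal{L})$ for every $i$, because $\mathrm{BasisExtension}(\mathcal{V}_i, \mathcal{C}_i, \mathcal{L})$ explicitly restricts the output to polynomials whose support lies in $\mathcal{L}$. Since $\mathcal{L}$ is finite, $\mathrm{span}(\mathcal{L})$ is a finite-dimensional $K$-vector space, so at every iteration $\dim \mathrm{span}(\mathcal{V}_{i+1}) \geq \dim \mathrm{span}(\mathcal{V}_i)$, with strict inequality whenever $\mathcal{V}_i \neq \mathcal{V}_{i+1}$. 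Thus the loop either strictly increases the dimension (at most $|\mathcal{L}|$ times) or terminates. This bound is independent of whether the candidate set is produced by the oracle or by the vanilla expansion, so termination follows immediately.

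\textbf{Correctness.} The core observation is that $\mathrm{Oracle}(\mathcal{L}, \mathcal{V}_i) \subseteq \mathcal{V}_i^+$ by construction, so after applying $\mathrm{BasisExtension}$ the resulting $\mathcal{V}_{i+1}$ is a subspace of the $\mathcal{V}_{i+1}$ that \gls{bba} would have produced at the same step. In particular, the ideal membership invariant $\mathrm{span}(\mathcal{V}_i) \subseteq \cI$ is preserved. The only remaining concern is that the terminating condition $\mathcal{V}_i = \mathcal{V}_{i+1}$ might be triggered \emph{prematurely} by an oracle iteration that merely fails to find extensions, rather than reflecting true $\mathcal{L}$-stability. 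This is exactly where the cap matters: after the $k$-th oracle call, the algorithm falls back to $\mathcal{C}_i \gets \mathcal{V}_i^+$, so from that point on the iterates coincide with those of \gls{bba} applied to the current $\mathcal{V}_i$. Hence the terminating iteration is always a vanilla \gls{bba} step, and for such a step $\mathcal{V}_i = \mathcal{V}_{i+1}$ is equivalent to $\mathcal{L}$-stability. Correctness of the returned border basis then follows from the correctness of \gls{bba}.

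\textbf{Main obstacle.} The delicate point is the correctness step, specifically ruling out premature termination triggered by an oracle iteration. I would make this rigorous by reading the loop as follows: the terminating iteration $i^\star$ satisfies $\mathcal{V}_{i^\star} = \mathcal{V}_{i^\star+1}$, and by the cap there exists some $i^\star \geq k$ (or the algorithm has already exhausted its oracle budget by then) such that from that iteration onward $\mathcal{C}_{i} = \mathcal{V}_{i}^+$. If an earlier oracle iteration would apparently satisfy the termination predicate, the cap still forces at least one subsequent vanilla \gls{bba} expansion before we accept termination, and equality then genuinely certifies $\mathcal{L}$-stability. Once this is pinned down, invoking the correctness of \gls{bba} from \cite{kehrein2006computing} closes the argument.
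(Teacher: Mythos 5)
Your proof is correct and takes essentially the same route as the paper's: reduce correctness to the fact that once the oracle is exhausted (or its proposal must be verified), the algorithm coincides with vanilla \gls{bba} run from the current iterate $\mathcal{V}_i$, and then appeal to the known correctness of \gls{bba}. You add a termination argument (dimension count inside the finite-dimensional space $\mathrm{span}(\mathcal{L})$) that the paper's proof leaves implicit, and you state explicitly the containment $\mathrm{Oracle}(\mathcal{L},\mathcal{V}_i)\subseteq\mathcal{V}_i^+$, which is the fact underlying the invariant that $\mathrm{span}(\mathcal{V}_i)$ stays sandwiched between $\mathrm{span}(\mathcal{V}_0)$ and its $\mathcal{L}$-stable span; this is what justifies the paper's one-line remark ``we could have started with the current iterate $\mathcal{V}$.'' Your handling of the premature-termination concern matches the paper's mechanism of requiring a full expansion before accepting termination. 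Minor remark: the statement ``the terminating iteration is always a vanilla \gls{bba} step'' does not follow from the $k$-cap alone (termination could, on the face of the pseudocode, be triggered during an earlier oracle step); it follows from the additional verification rule that you correctly identify and the paper also invokes, namely that termination is never accepted from an oracle proposal without a confirming full expansion.
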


Limiting the oracle to $k$ non-standard expansions prevents repeated expansion in the same direction indefinitely and guarantees termination. For brevity, we present only our main conclusions here and defer detailed empirical and theoretical results to the appendix.

\subsection{Allocating the $k$ oracle calls}
As the algorithm progresses, the generator set $\mathcal{V}$ expands, increasing the number of polynomials to reduce each iteration. Empirically (cf. \cref{tab:ratio-analysis}), we find that between $70\%$ to $95 \%$ of the runtime is spent in the \emph{final stage}, i.e., the iterations that follow the last enlargement of the computational universe~$\mathcal{L}$. Since the final iterations dominate the runtime, we aim to spend the oracle budget exclusively there, where every avoided reduction yields the greatest benefit.

A simple yet effective heuristic for recognizing this final stage is to monitor the gap $g \coloneqq \lvert\mathcal{L}\rvert - \lvert\mathcal{V}\rvert$. When $g$ approaches the size of the (unknown) order ideal~$\mathcal{O}$ of the target border basis, no further universe expansion is required (cf. \cref{{lem:final-stage}}) and the relative version of this gap, $\frac{|\mathcal{V}|}{|\mathcal{L}|}$ has a strong relationship with the remaining required expansions (cf. Figures~\ref{fig:border-gap-n3}--\ref{fig:border-gap-n5})).

Invoking the oracle too early can be counterproductive: an inaccurate prediction may enlarge $\mathcal{V}$ while still leaving many expansions to run, so the algorithm ends up performing more reductions on a bigger set. Reserving the $k$-oracle calls for the final stage prevents these costs (cf. \cref{thm:obba-runtime}). %

\section{Designing the transformer oracle}\label{sec:transformer-oracle-design}
This section presents the architecture and training of the Transformer oracle 
\begin{align}
    \textsc{Oracle}: (\cL, \cV) \mapsto \cS, \quad \text{where}\quad \cL \subset \cT_n,\ \cV \subset \ring,\ \cS \subset \{\xs\} \times \cV.
\end{align}
We assume a standard encoder--decoder Transformer~\cite{vaswani2017attention}, a general model for sequence-to-sequence tasks. We collect a large number of training samples $((\cL, \cV), \cS)$ by running the border basis algorithm. Collecting diverse samples is non-trivial, and we develop new techniques to address this task. Since input sequences to the Transformer oracle are often extremely long (typically tens of thousands of tokens), we also develop methods to substantially reduce their length. All proofs are provided in \cref{sec:border-basis-sampling-app,sec:backward-transformation-proof}.

\subsection{Dataset generation}\label{sec:dataset-generation}
Our overall goal is to collect many input--output instances to train the Transformer oracle on, which we could achieve by running the \gls{bba} over many sets of polynomials. A crucial challenge is to generate \emph{diverse} sets of polynomials as the following observation highlights. Recall that we are interested in zero-dimensional ideals and let $F \subset\ring$ be a collection of randomly sampled polynomials (see \cref{sec:random-sampling-polynomial} regarding the sampling from $\ring$). If $|F| < n$, $\ideal{F}$ is not zero-dimensional. If however $|F| > n$, $\ideal{F}$ is generally the unit ideal, i.e., $\ideal{F} = \ring$. Thus, random sampling only works for $|F| = n$, which is a strong restriction.

We will now address this issue in two steps.
First, instead of sampling polynomial systems at random, we propose \textit{border basis sampling} to generate diverse border bases $\{G_i\}_{i}$—a problem that, to the best of our knowledge, has not been explored in the literature. Secondly, generalizing the results in~\cite{kera2024learning}, our \textit{ideal-invariant generator transform} converts each $G_i$ into a non-basis $F_i$ such that $\ideal{G_i} = \ideal{F_i}$. This backward approach not only yields a diverse set of polynomial systems with $|F| > n$, but also provides direct control over the complexity of the corresponding border bases, as the sizes and degrees of the border bases (particularly, the order ideals) can be predetermined.

\subsubsection{Border basis sampling}\label{sec:border-basis-sampling}

Our algorithm first samples order ideals, and then constructs border bases supported by them. Recall that a finite set $\cO \subset \cT_n$ is called an order ideal if for any $t\in\cO$, its divisors are all included in $\cO$. Thus, for $t \in \cT_n$, $\cO_{t} := \{\text{all the divisors of }t\}$ is an order ideal, and the union $\bigcup_{i=1}^q\cO_{t_i}$ for $t_1\ldots, t_q\in \cT_n$ is also an order ideal. The latter observation provides a strategy for the sampling of order ideals, requiring refinement of this approach, with the formal algorithm deferred to \cref{sec:order-ideal-sampling}.

Now that we can sample order ideals, let $\cO = \{o_1, \ldots, o_\nu\}$ be an order ideal and $\partial \cO = \{b_1, \ldots, b_\mu\}$ its border. A border \emph{prebasis} $G = \{g_1, \ldots, g_\mu\}$ takes the form $g_i = b_i - \sum_{j=1}^{\nu} c_{ij} o_j$ for $i = 1, \ldots, \mu$, with arbitrary coefficients $c_{ij} \in K$. To obtain a true border basis, these coefficients must satisfy algebraic conditions ensuring that $\cO$ spans the $K$-vector space $\ring / \ideal{G}$. 

In consequence, random coefficients do not produce border bases. A crucial observation is that the coefficients of $G$ can be readily obtained via simple linear algebra for so-called \emph{vanishing ideals}, which we introduce next. 

\begin{definition}\label{def:vanishing-ideal}
Let $P = \{\bm{p}_1, \ldots, \bm{p}_\nu\}\subset K^n$ be a set of points. The \emph{vanishing ideal} of $P$ is the set of all polynomials that vanish on $P$, namely, $\cI(P) = \{g \in \ring \mid  g(\bm{p}_i) = 0,\ i=1,\ldots,\nu \}$.    
\end{definition}

The following theorem formalizes the construction of a border basis from an order ideal.

\begin{restatable}{theorem}{BorderBasisSampling}\label{thm:border-basis-sampling}
    Let $\cO = \{o_1, \ldots, o_\nu\}$ and $\partial \cO = \{b_1, \ldots, b_\mu\}$ be an order ideal and its border, respectively.
    Let $P = \{\bm{p}_1, \ldots, \bm{p}_{\nu}\} \subset K^n$ be a set of $\nu$ distinct points. Let $\partial\cO(P) = \mqty(b_{j}(\bm{p}_i))_{ij} \in K^{\nu \times \mu}$ and $\cO(P) = \mqty(o_{j}(\bm{p}_i))_{ij} \in K^{\nu \times \nu}$. 
    If $\cO(P)$ is full-rank, the nullspace of 
    $M(P) := \mqty[ \partial \cO(P) & \cO(P) ] \in K^{\nu\times (\mu+\nu)}$ 
    is $\mu$-dimensional and spanned by $\{\bm{v}_i\}_{i=1}^{\mu}$, where $\bm{v}_i = (0, \ldots, 1, \ldots, 0, c_{i1}, \ldots, c_{i\nu})^{\top}$, with the first $\mu$ entries being zero except for a $1$ in the $i$-th position. The set $\{g_i = b_i - \sum_{j=1}^{\nu}c_{ij}o_j\}_{i=1}^{\mu}$ is the $\cO$-border basis of the vanishing ideal $\cI(P)$.
\end{restatable}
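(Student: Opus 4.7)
The plan is to prove the theorem in three stages: establish the dimension and explicit form of $\ker M(P)$; observe that the resulting polynomials automatically form an $\cO$-border prebasis lying in $\cI(P)$; and upgrade ``prebasis'' to ``border basis'' by a dimension-count argument that pins down $\ideal{G}=\cI(P)$ and makes $\cO$ a $K$-basis of $\ring/\ideal{G}$.

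First, I would compute $\dim\ker M(P)$. Since $\cO(P)\in K^{\nu\times\nu}$ is full-rank by hypothesis and appears as a submatrix of $M(P)\in K^{\nu\times(\mu+\nu)}$, we have $\mathrm{rank}(M(P))=\nu$, hence $\dim\ker M(P)=\mu$. For the explicit spanning set, I would fix $i\in\{1,\dots,\mu\}$, force the ``border part'' of the unknown vector to equal the standard basis vector $\bm{e}_i$, and solve the resulting linear system $\cO(P)\,\bm{c}_i = -b_i(P)$ where $b_i(P)=(b_i(\bm{p}_1),\dots,b_i(\bm{p}_\nu))^\top$; invertibility of $\cO(P)$ gives a unique $\bm{c}_i=(c_{i1},\dots,c_{i\nu})$. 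The $\mu$ resulting vectors $\bm{v}_i$ are linearly independent (their border parts are distinct standard basis vectors) and, by the dimension count, form a basis of $\ker M(P)$.

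Next, I would read off the polynomials. By construction each $g_i=b_i-\sum_j c_{ij}o_j$ has leading term $b_i\in\partial\cO$ and all other terms in $\cO$, so $G=\{g_1,\dots,g_\mu\}$ is an $\cO$-border prebasis. The nullspace equation $M(P)\bm{v}_i=0$ evaluated at each $\bm{p}_k$ is precisely $g_i(\bm{p}_k)=0$ (modulo the sign convention built into the definition of the $c_{ij}$), so every $g_i\in\cI(P)$ and therefore $\ideal{G}\subseteq\cI(P)$.

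The main obstacle, and the last stage, is proving that $\cO$ is actually a $K$-basis of $\ring/\ideal{G}$, which is what elevates the prebasis to a border basis. I would combine three inputs. (i) Using the prebasis structure, the standard border-division procedure reduces any $f\in\ring$ modulo $G$ to a $K$-linear combination of monomials in $\cO$, so $\ring = \ideal{G} + \mathrm{span}_K(\cO)$ and hence $\dim_K(\ring/\ideal{G})\le\nu$. (ii) Because $P$ consists of $\nu$ distinct points, a classical Lagrange-interpolation/CRT argument yields $\dim_K(\ring/\cI(P))=\nu$. (iii) The hypothesis that $\cO(P)$ is full-rank says exactly that $\cO$ is $K$-linearly independent modulo $\cI(P)$, so $\cO$ is already a $K$-basis of $\ring/\cI(P)$. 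The surjection $\ring/\ideal{G}\twoheadrightarrow\ring/\cI(P)$ together with $\dim(\ring/\ideal{G})\le\nu=\dim(\ring/\cI(P))$ forces $\ideal{G}=\cI(P)$ and $\dim(\ring/\ideal{G})=\nu$. The image of $\mathrm{span}_K(\cO)$ then spans $\ring/\ideal{G}$ and has the right dimension, giving $\ring=\ideal{G}\oplus\mathrm{span}_K(\cO)$ as required by Definition~\ref{def:border-basis}. Thus $G$ is the $\cO$-border basis of $\cI(P)$. The delicate point is really this dimension squeeze: all three facts—prebasis rewriting, the size of $\cI(P)$, and the full-rank hypothesis—must line up at the same integer $\nu$.
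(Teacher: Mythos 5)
Your argument is correct, and you were right to flag the sign: as stated, a vector $\bm{v}_i=(0,\dots,1,\dots,0,c_{i1},\dots,c_{i\nu})^\top$ in $\ker M(P)$ satisfies $b_i(\bm{p}_k)+\sum_j c_{ij}o_j(\bm{p}_k)=0$, so the polynomial that actually vanishes on $P$ is $b_i+\sum_j c_{ij}o_j$, not $b_i-\sum_j c_{ij}o_j$; the theorem statement carries a benign sign typo, absorbed into the definition of the $c_{ij}$.

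Your route differs from the paper's in the third ingredient. The paper proves an intermediate lemma: if $G$ is an $\cO$-border prebasis with $G\subset\cI(P)$ and the evaluation vectors $o_1(P),\dots,o_\nu(P)$ span $K^\nu$, then $G$ is the $\cO$-border basis of $\cI(P)$. Its proof uses border division to get $\ring=\ideal{G}+\operatorname{span}_K(\cO)$, then observes that any $h\in\operatorname{span}_K(\cO)$ with $h(P)=\bm{0}$ must be $0$ (full rank of $\cO(P)$), which combined with $\ideal{G}\subset\cI(P)$ gives $\ideal{G}\cap\operatorname{span}_K(\cO)=\{0\}$ and hence the direct sum; it then extracts $\cI(P)\subset\ideal{G}$ directly from that decomposition. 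You instead invoke the classical count $\dim_K(\ring/\cI(P))=\nu$ for $\nu$ distinct points (CRT / Lagrange interpolation) and squeeze: $\dim_K(\ring/\ideal{G})\le\nu$ by border division, while the surjection $\ring/\ideal{G}\twoheadrightarrow\ring/\cI(P)$ forces equality. Both approaches use the same two pillars (border division and linear independence of the $o_j(P)$); the paper's version avoids importing the CRT dimension formula and is therefore slightly more self-contained, while yours makes the ``everything equals $\nu$'' coincidence explicit, which is arguably clearer as exposition. Either is a valid proof.
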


\begin{remark}
    Several algorithms~\cite{heldt2009approximate,limbeck2013computation,kera2022border,wirth2023approximate} can compute a border basis of a vanishing ideal $\cI(P)$ from a set of points $P$. However, it only leads to a special type of border bases, which are Zariski-closed (i.e., zero-measure set). Our method covers more general ones, refer to Appendix~\ref{sec:comparison-with-bm-algorithm}. 
\end{remark}

\begin{remark}
    Non-trivial ideals from random generators are generically \textit{radical}, and any radical ideal $\mathcal{I}=\langle f_1,\ldots,f_r\rangle$ is vanishing ideal $\cI(P)$ if the solution set $P$ of $f_1(X)=\cdots =f_r(X)=0$ is a finite subset of $K^n$. A previous work~\cite{kera2024learning} assumes \textit{ideals in shape position}, and vanishing ideals involve them in this setup. Further generalization to non-radical, positive dimensional ideals is still an open problem.
\end{remark}

\subsubsection{Ideal-invariant generator transform}\label{sec:backward-transform}
Now, we are able to obtain a border basis $G=\{\gs\}$ of a vanishing ideal $\cI(P)$.
Next, we design the following matrix $A \in \ring^{r \times s}$ to transform a border basis $G = \{\gs\}$:
\begin{align}
    F = AG, \quad \mathrm{s.t.}\ \ideal{F} = \ideal{G}.
\end{align}

This effectively allows us to sample generating sets of $\cI(P)$. \cite{kera2024learning} recently identified a sufficient condition: if $A$ has a left inverse then the ideal-invariant condition $\ideal{F} = \ideal{G}$ holds. A subset of such matrices are given by a Bruhat decomposition $A = U_1 P \mqty[U_2^{\top} & O]^{\top}$ with unimodular upper-triangular matrices $U_1 \in \ring^{r\times r}, U_2 \in \ring^{s\times s}$ and a permutation matrix $P$ of size $r$. However, the key assumption $|F| \ge |G|$ fails in our case, as typically $\abs{G} \gg n$.

We therefore propose a more general construction of $A$ satisfying the ideal invariance condition.

\begin{restatable}{theorem}{BackwardTransform}\label{thm:backward-transform}
    Let $\cI$ be a zero-dimensional ideal, and let $G\in \ring^s$ be a generating set of it. Let $F = AG \in \ring^{r}$ be a set of polynomials given by a generic matrix $A \in \ring^{r \times s}$.
    \begin{enumerate}
     \item If $r \le n$ and $G$ is a border basis, we have $\ideal{F} \ne \ideal{G}$. 
     \item If $r > n$, we have $\ideal{F} = \ideal{G}$. 
    \end{enumerate}
\end{restatable}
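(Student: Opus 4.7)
The two assertions require different tools, so I would address them separately. For assertion (1) I split on the value of $r$. When $r<n$, Krull's principal ideal theorem gives that every minimal prime over $\ideal{F}$ has height at most $r<n$, so $\dim(\ring/\ideal{F})\ge n-r\ge 1$, while $\cI=\ideal{G}$ is zero-dimensional by hypothesis, forcing $\ideal{F}\ne\ideal{G}$. When $r=n$, I argue by contradiction: if $\ideal{F}=\cI$ then $V(F)=V(\cI)$. Since each $f_i\in\cI$, we can write $V(f_i)=V(\cI)\cup E_i$ with $E_i$ the residual hypersurface; a Bertini-type dimension count shows that for $r=n$ generic polynomials in $\cI$ with entries of positive degree the intersection $\bigcap_i E_i$ is generically non-empty of dimension $0$. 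Hence $V(F)\supsetneq V(\cI)$ and, by the Nullstellensatz, $\sqrt{\ideal{F}}\ne\sqrt{\cI}$, contradicting the assumption.

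For assertion (2) with $r>n$, my plan is to combine a Forster-type generation bound with an upper-semi-continuity argument. At any maximal ideal $\mathfrak{m}\supseteq\cI$ the localization $\ring_\mathfrak{m}$ is a regular local ring of dimension $n$, so $\cI_\mathfrak{m}$ needs at most $n$ generators; at primes not containing $\cI$ the localization is the unit ideal and needs one generator. Forster's theorem then produces a global generating set $\{h_1,\ldots,h_{n+1}\}\subset\cI$ of size $n+1$. Writing each $h_i=\sum_j a_{ij}^{\star}g_j$ and padding with zero rows gives a concrete matrix $A_0\in\ring^{r\times s}$ (for any $r\ge n+1$) with $\ideal{A_0G}=\cI$, certifying that the "good" set of matrices is non-empty.

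To upgrade non-emptiness to genericity, I would fix a finite-dimensional parameter space $\cP$ of matrices with entries of degree at most some $D$ chosen so that $A_0\in\cP$. Since $\ideal{AG}\subseteq\cI$ for every $A$, the function $A\mapsto\dim_K\ring/\ideal{AG}$ is bounded below by $|\cO|=\dim_K\ring/\cI$ and is upper semi-continuous on $\cP$. The minimum value $|\cO|$ is attained at $A_0$ and is therefore attained on a non-empty Zariski-open subset of $\cP$; precisely on this subset one has $\ideal{AG}=\cI$.

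The hard part will be rigorously verifying the upper semi-continuity of $A\mapsto\dim_K\ring/\ideal{AG}$ on $\cP$, since this is what upgrades the single explicit $A_0$ obtained from Forster's theorem into a genuinely generic statement; I would do this by comparing Hilbert functions truncated at a sufficiently large degree and invoking standard flat-family arguments. A secondary subtlety is aligning the notion of "generic" across the two parts: the Bertini step in (1) requires entries of positive degree, so the parameter space should exclude the Zariski-closed locus of constant-entry matrices for the two "genericity" statements to be mutually compatible.
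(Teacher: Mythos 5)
Your approach is genuinely different from the paper's. The paper works in the parametric ring $K[\mathcal{A},X]$ with parametric entries $h_{ij}=\sum_{|\alpha|\le d}a_{\alpha,ij}X^\alpha$, computes Gr\"obner bases of $\langle\overline A G\rangle$ over $K(X)[\mathcal{A}]$, obtains the decomposition $\langle\overline A G\rangle=\langle G\rangle\cap J$ from a splitting-tool/ideal-quotient argument, and then specializes via the comprehensive-Gr\"obner-basis machinery (Lemma of Suzuki--Sato). You instead argue via varieties, Bertini, Forster, and Hilbert-function upper semi-continuity. Your approach is conceptually more elementary, but it misses several things the paper's proof captures.

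The most serious gap is in the Forster step for part (2). The claim that ``$\cI_\mathfrak{m}$ needs at most $n$ generators because $\ring_\mathfrak{m}$ is regular local of dimension $n$'' is false for non-radical zero-dimensional ideals: in $K[x,y]$, the ideal $\cI=\mathfrak{m}^2$ has $\mu_\mathfrak{m}(\cI)=\dim_K\mathfrak{m}^2/\mathfrak{m}^3=3>n=2$, and $\mathfrak{m}^3$ needs $4$ local generators. In fact, since $\mathfrak{m}^3$ cannot be generated \emph{globally} by fewer than $4$ elements, your explicit $A_0\in\ring^{(n+1)\times s}$ with $\ideal{A_0G}=\cI$ does not exist for $\cI=\mathfrak{m}^3$ and $r=n+1=3$; here your argument breaks, and with it your non-emptiness claim. (This is not merely a flaw in your proof: it shows that the theorem as stated for arbitrary zero-dimensional $\cI$ cannot hold when $r<\mu(\cI)$, so either the paper's proof has an error in that generality or the theorem tacitly assumes $\cI$ radical; the paper's own applications are to vanishing ideals, which are radical, and for radical $\cI$ your Forster count is correct since $\cI_\mathfrak{m}$ is the maximal ideal of a regular local ring.) Even granting radicality, your proof only shows the conclusion for entries of degree $\le D$ with $D$ large enough to accommodate $A_0$; the paper proves it for \emph{every} fixed $d\ge 1$, which is a strictly stronger genericity statement that your upper-semi-continuity argument alone cannot recover, since the good open set in $\ring_{\le D}^{r\times s}$ need not meet the linear subspace of lower-degree matrices.

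For part (1), $r<n$ matches the paper's codimension argument. Your $r=n$ Bertini route is a legitimate alternative idea, but as written it is not a proof: $V(f_i)=V(\cI)\cup E_i$ is only a set-theoretic decomposition, and going from ``$\bigcap_i E_i$ is $0$-dimensional and non-empty'' to ``$V(F)\supsetneq V(\cI)$'' requires showing the extra intersection points are not all at infinity and not all hidden inside the multiplicity of $V(\cI)$; a B\'ezout degree count by itself is not enough. Notably, your argument makes no use of the border-basis hypothesis on $G$, whereas the paper's proof of the $r=n$ case leans on it crucially (it extracts generators $g_{k_i}$ with $\lcp_{\succ_X}(g_{k_i})=x_i^{d_i}$ to get disjoint leading terms and thereby establish $\mathcal{P}\cap K[\mathcal{A}]=\{0\}$). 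You should either explain why the hypothesis is unnecessary in your framework or identify where it enters; as it stands this is an unflagged discrepancy.
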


This suggests that for $r>n$ and an infinite field $K$, the probability that a random $A$ satisfies $\langle AG\rangle=\langle G\rangle$ is almost $1$. The probability is also very close to $1$ when $K$ is a finite field $\mathbb{F}_p$ with a large prime number $p$~(cf.~Corollary~\ref{cor:backward-transform} and \cref{fig:backward_transform_success_rate}). In summary, sampling a random $A$ with more than $n$ rows, which is arguably the simplest approach, works.

\subsection{Efficient input sequence representation}\label{efficient-input-sequence-representation}

The input $(\cL, \cV)$ and output $\cS$ are respectively regarded as sequences of polynomials (in the fully expanded form), and tokenized into sequences of \textit{tokens}. 
For example, with $L=\{1, x, y\}$ and $V=[x+2, y]$, the input sequence in the infix representation is 
\begin{equation}\label{eq:infix-representation}
\begin{aligned}
\texttt{(C1, E0, E0, <sep>, C1, E1, E0, <sep>, C1, E0, E1, <supsep>},\ \ \  \\
\texttt{C1, E1, E0, +, C2, E0, E0, <sep>, C1, E0, E1, <eos>)},
\end{aligned}
\end{equation}
where \texttt{Cn} and \texttt{En} represent coefficient and exponent of values $n$. The token \texttt{<sep>} separates elements in a set, and \texttt{<supsep>} separates sets. The token $\texttt{<eos>}$ represents the end of the sequence. 

In standard Transformers, the computational cost of self-attention grows quadratically with the input size. The sizes of $\cL, \cV$ are often large, and $\cV$ contains polynomials with many terms. We introduce two methods that, when combined, significantly reduce input size as shown in \cref{fig:input-sequence-compression-small}.

\textbf{Simplification of in- and output.} 
We replace $\cL$ with its minimal identifying subset $\cL' \subset \cL$ (cf. \cref{sec:input-sequence-compression-app}). Since the basis extension step in the \gls{bba} primarily relies on leading terms, we truncate each polynomial in $\cV$ to its $l$ leading terms, which we found to have minimal impact on the predictive performance of the oracle (cf.\ Table~\ref{table:transformer-prediction}). The target sequence is a list of pairs like $\cS= \{(x, v)\}_{x\in X, v \in \cV}$. Since the polynomials in $\cV$ have mutually distinct leading terms, we can replace each $v \in \cV$ with $\lt{v}$.

\textbf{Monomial embedding.} A fully expanded $n$-variate degree-$d$ polynomial (e.g., $xy + y$ instead of $(x+1)y$) typically contains on the order of $\binom{n+d}{n}$ monomials. Standard representations (e.g., infix) tokenize each monomial using $n+1$ tokens—one for the coefficient and $n$ for the exponents. Each monomial is followed by a token like \texttt{+} or \texttt{<sep>}, so a polynomial set $F = \{\fs\}$ yields a sequence of the order of $(n+2)s \cdot \binom{n+d}{n}$. We introduce an efficient embedding scheme for polynomials, representing each monomial with a single token. By combining a monomial and its follow-up token into one vector, this approach removes the $(n+2)$ factor from the input size.

\begin{restatable}{definition}{MonomialEmbedding}(Monomial embedding)\label{def:monomial-embedding}
Let $\Sigma$ be the set of all tokens. Let $(t, \texttt{<*>})$ be a pair consisting of a monomial $t = c x^{\bm{a}} \in \cT_n$ with coefficient $c \in K$, exponent vector $\bm{a} \in \bZ_{\ge 0}^n$, and a follow-up token $\texttt{<*>} \in \Sigma$. Let $\varphi_{\mathrm{c}}$, $\varphi_{\mathrm{e}}$, and $\varphi_{\mathrm{f}}$ denote embeddings of the coefficient, exponent vector, and follow-up token into a $d$-dimensional space, respectively. The monomial embedding $\varphi_{\mathrm{m}}: \cT_n \times \Sigma \to \bR^d$ is given by
\begin{align}
    \varphi_{\mathrm{m}}(t, \texttt{<*>}) = \varphi_{\mathrm{c}}(c) + \varphi_{\mathrm{e}}(\bm{a}) + \varphi_{\mathrm{f}}(\texttt{<*>}).
\end{align}
\end{restatable}

Symbolic computations are fundamentally monomial-centric: monomials are compared, added, or divided. Without monomial embedding, attention-based models must connect $(n+1)$ tokens per monomial; this is reduced to a simple one-to-one mapping, substantially improving success rates in cumulative polynomial product tasks (cf.\ Table~\ref{table:infix-vs-monomial}). See \cref{sec:monomial-embedding-implementation} for the exact implementation.

\section{Experimental results}\label{sec:experiment}
We empirically evaluate our approach;\footnote{Our code is available at \url{https://github.com/HiroshiKERA/OracleBorderBasis}.} dataset generation, training details, and additional results are in \cref{sec:experiment-app}.

\subsection{Fast Gaussian elimination}
\begin{wrapfigure}{r}{0.42\textwidth}
    \vspace{-1.5em}
    \centering
    \includegraphics[width=0.4\textwidth]{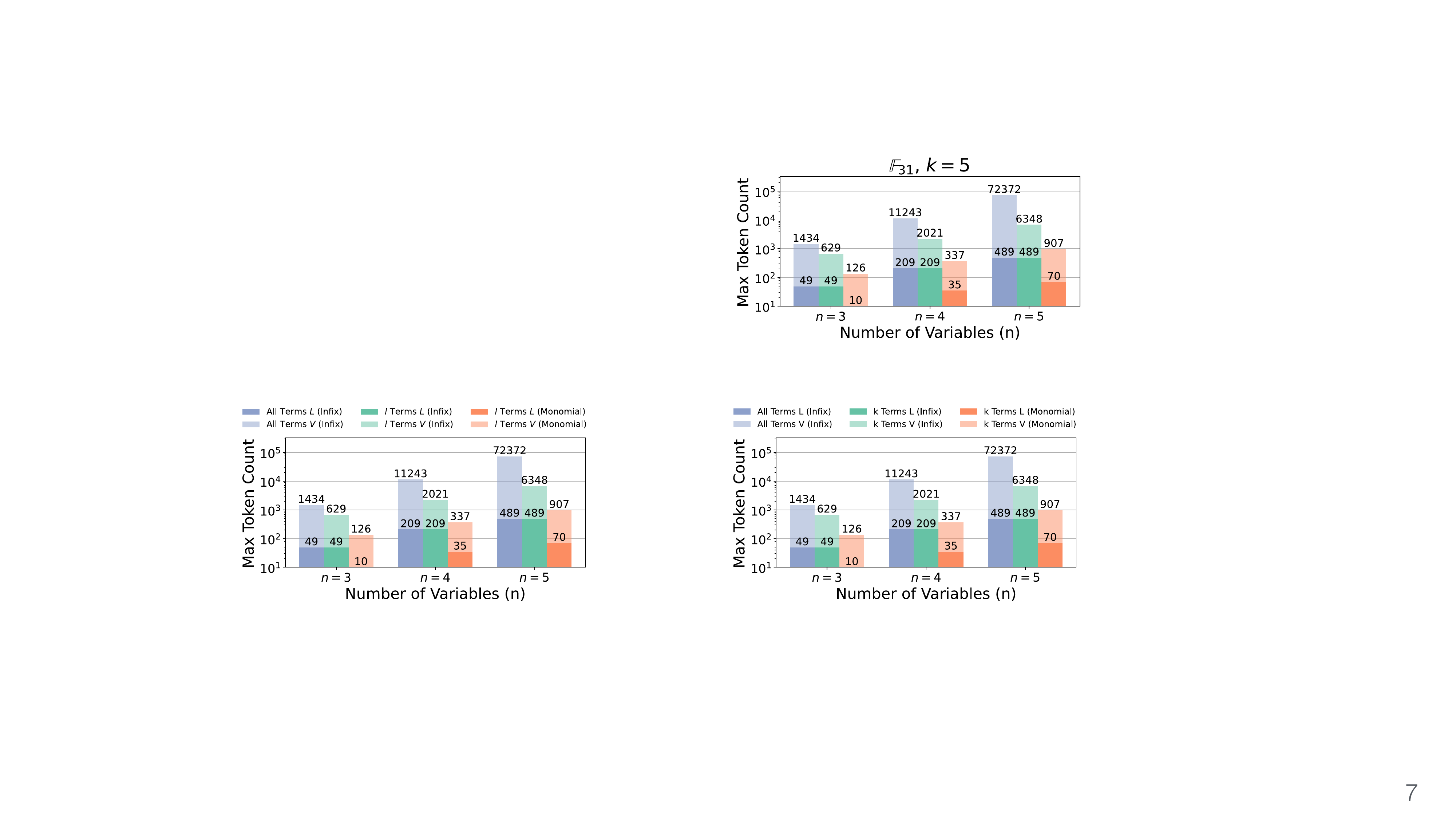}
    \caption{($\bF_{31}$,\ $k=5$). The term truncation and monomial embedding significantly reduce input size. See also \cref{fig:input-sequence-compression}.}
    \label{fig:input-sequence-compression-small}
  \end{wrapfigure}

As an additional contribution, we present a fast Gaussian elimination (FGE) kernel that replaces the standard elimination in $\mathrm{BasisExtension}$ (see \cref{alg:border-basis}). FGE maintains the active reducer set in a balanced search tree, enabling $O(\log(m))$ look-ups and insertions; finding a reducer with a matching leading monomial is thus logarithmic, not linear. Combined with on-the-fly normalization and an index map for immediate reuse of new reducers, our kernel makes the entire $\mathrm{BasisExtension}$ step quasi-linear. This yields an $\approx 10\times$ wall-clock speedup in data generation (cf. Table \ref{tab:oracle-comparison-extended-n5}--\ref{tab:oracle-comparison-highdeg}), enabling to create an order of magnitude more training samples. Importantly, FGE is orthogonal to the Transformer oracle: while the oracle eliminates unnecessary reductions, FGE accelerates all necessary ones, and their benefits compound.

\subsection{Learning successful expansions}\label{sec:experiment-learning-successful-expansions}
We begin by demonstrating that the Transformer model can learn to predict successful expansion.

\begin{table}[!t]
  \small
  \setlength\tabcolsep{2pt}
  \centering
    \caption{Evaluation results of Transformer predictions over polynomial ring $\mathbb{F}_{31}[x_1, \dots, x_n]$. Transformer successfully learns the expansion directions. The input polynomials are truncated to their first $l$ leading terms. The No Expansion Accuracy column shows that the Transformer model can determinate the termination with high accuracy. 
    Refer to Table~\ref{table:transformer-prediction-complete} for the complete version.}
  \label{table:transformer-prediction}
  \begin{tabularx}{\linewidth}{l l c *{4}{Y}}
    \toprule
    Field & Variables & $l$ & Precision (\%) & Recall (\%) & F1 Score (\%) & No Expansion Acc. (\%) \\
    \Xhline{2\arrayrulewidth}
    \multirow{9}{*}{$\mathbb{F}_{31}$}
      &           & 1 & 84.4 & 86.8 & 85.6 & 99.7 \\
      & $n=3$     & 3 & 89.4 & 90.0 & 89.7 & 99.7 \\
      &           & 5 & 91.6 & 93.2 & 92.4 & 99.7 \\
      \cline{2-7}
      &           & 1 & 90.7 & 91.6 & 91.1 & 98.8 \\
      & $n=4$     & 3 & 92.9 & 93.7 & 93.3 & 98.8 \\
      &           & 5 & 94.2 & 94.7 & 94.4 & 98.8 \\
      \cline{2-7}
      &           & 1 & 92.7 & 93.1 & 92.9 & 99.6 \\
      & $n=5$     & 3 & 94.3 & 94.6 & 94.4 & 99.6 \\
      &           & 5 & 94.8 & 95.3 & 95.1 & 99.6 \\
    \bottomrule
  \end{tabularx}
\end{table}

\textbf{Dataset.} Datasets were generated as described in \cref{sec:dataset-generation}, with one million training and one thousand evaluation samples. We set $G \subset K[\xs]_{\le 2}$ and $A \in K[\xs]^{r\times s}$ for $r \in \{n+1, \ldots, 2n\}$, collecting samples only from the final five expansions of each border basis computation. Each polynomial in $A$ has at most ten terms, sampled as detailed in \cref{sec:random-sampling-polynomial}. In total, 27 datasets were constructed by varying the number of variables $n \in \{3, 4, 5\}$, coefficient field $\bF_p$ with $p \in \{7, 31, 127\}$, and truncation to the $l$ leading terms with $l \in \{1, 3, 5\}$.

\textbf{Setup.}
Experiments used a standard Transformer with 6 encoder and decoder layers, 8 attention heads, and our monomial embedding. Embedding and feedforward dimensions were $(d_{\mathrm{model}}, d_{\mathrm{ffn}}) = (512, 2048)$. We set dropout to 0.1. The positional embeddings were randomly initialized and trained throughout the epochs. The model was trained for 8 epochs with AdamW~\citep{loshchilov2018decoupled} ($\beta_1=0.9$, $\beta_2=0.999$), a linearly decaying learning rate from $10^{-4}$, and batch size 16.

\textbf{Results.}
\cref{table:transformer-prediction} summarizes our results.
The Transformer predicts a set $\cS = \{(x_i, v_j)\}$ of expansion direction and target polynomial pairs.
For non-empty predictions, we report precision, recall, and F1 score against ground truth; for empty predictions (indicating algorithm termination), we report accuracy.
The Transformer consistently learns both expansion directions and target polynomials across all $(n, p, k)$ settings.
Notably, it achieves near-perfect accuracy in the No Expansion case, reliably identifying termination and avoiding at least the final unnecessary expansion step. Overall, the performance improves with larger $n$ and $p$, likely due to the higher success rate of the ideal-invariant generator transform (as suggested by Corollary~\ref{cor:backward-transform} and \cref{fig:backward_transform_success_rate}). It is also noteworthy that truncation to the $l$ leading terms has only a minor effect on predictive performance, despite 
significantly reducing input size (cf. \cref{fig:input-sequence-compression}).

\subsection{Transformer oracle}
We now demonstrate that integrating the Transformer accelerates the improved border basis algorithm. We also evaluate the Transformer oracle's out-of-distribution performance on higher-degree systems, which are more challenging to predict.

\subsubsection{In-distribution performance}
We assess the Transformer oracle's in-distribution performance, i.e., the performance on systems drawn from the same distribution as the training data.

\textbf{Setup.} We compare five algorithmic variants: the classical \gls{bba}, the Improved Border Basis Algorithm (IBBA), IBBA with fast Gaussian elimination (IBBA+FGE), and the oracle-guided versions OBBA and OBBA+FGE. For oracle-augmented variants, the oracle is called at most five times, consistent with our training on the final five expansions. We invoke the oracle at ratios of $0.7, 0.75, 0.8, 0.85, 0.9, 0.95, 0.975$ of the relative border gap $\frac{|\mathcal{V}|}{|\mathcal{L}|}$.

\begin{table}[t]
    \small
    \setlength\tabcolsep{2pt}
    \centering
    \caption{Wall-clock runtime in seconds (mean $\pm$ standard deviation over 100 random zero-dimensional systems of total degree $\leq 4$) for five algorithms over polynomial ring $\mathbb{F}_{31}[x_1, \dots, x_n]$ and variable counts $n = 3,4,5$ variables. BBA is the classical border basis algorithm; IBB is the incremental BBA baseline; OBBA is our oracle-augmented BBA, reducing runtime by about 3$\times$ versus IBB. IBB+FGE and OBBA+FGE add fast Gaussian elimination (FGE), an orthogonal linear algebra speedup; OBBA+FGE is the fastest, reaching up to two orders of magnitude improvement over IBB. A 3.5$\times$ speedup in terms of unneccessary expansions is shown in the appendix in \cref{tab:zero_reductions_n5}.}
    \begin{tabularx}{\textwidth}{c r *{5}{Y}}
      \toprule
      & & \multicolumn{2}{c}{Baseline} & \multicolumn{3}{c}{Ours} \\
      \cmidrule(lr){3-4} \cmidrule(lr){5-7}
      Field & $n$ & BBA & IBBA & OBBA & IBBA+FGE & OBBA+FGE \\
      \midrule
      
      \multirow{3}{*}{$\mathbb{F}_{31}$}
        & 3 & 0.07 $\pm$ 0.11   &  0.06 $\pm$ 0.09   &  0.06 $\pm$ 0.08   &  0.03 $\pm$ 0.03   &  \textbf{0.03} $\pm$ 0.03   \\
        & 4 & 0.46 $\pm$ 0.43   &  0.35 $\pm$ 0.25   &  0.22 $\pm$ 0.13   &  0.09 $\pm$ 0.05   &  \textbf{0.09} $\pm$ 0.05  \\
        & 5 & 11.44 $\pm$ 8.25  &  7.60 $\pm$ 5.13   &  2.58 $\pm$ 1.37   &  0.88 $\pm$ 0.49   &  \textbf{0.60} $\pm$ 0.32   \\
      
      \bottomrule
    \end{tabularx}
    \label{tab:oracle-comparison}
\end{table}

\textbf{Results.} \cref{tab:oracle-comparison} shows that the oracle-guided border basis algorithm achieves up to a 3$\times$ speedup over the state-of-the-art IBBA for systems with five variables, with smaller gains for three and four variables. Notably, none of the oracle-guided variants required reverting to standard iterations, indicating that the Transformer oracle successfully predicts expansion directions and target polynomials for all cases.

\subsubsection{Out-of-distribution performance}

We assess the Transformer oracle's out-of-distribution performance by evaluating on higher-degree systems, which are more challenging for both the oracle to predict and the IBBA to solve. The goal is to test whether the Transformer, trained on easy instances, can generalize to harder ones, where we defer full results to \cref{sec:out-of-distribution-performance}.

\textbf{Setup.} For systems with four variables, we increase the total degree of the sampled system and transformation matrix $A$ by $1$ at each step, limiting ourselves to $\mathbb{F}_{7}, \mathbb{F}_{31}, \mathbb{F}_{127}$. This produces systems of total degree $3$, $4$, $6$, and $8$. To isolate the oracle's impact, we compare IBB and OBBA without FGE. The average IBBA runtimes (in seconds) on these out-of-distribution instances are: $1.17$, $3.21$, $20.54$, and $40.28$ for total degree $3$, $4$, $6$, and $8$, respectively, averaged over the three fields.

\begin{figure}%
    \centering
    \includegraphics[width=\textwidth]{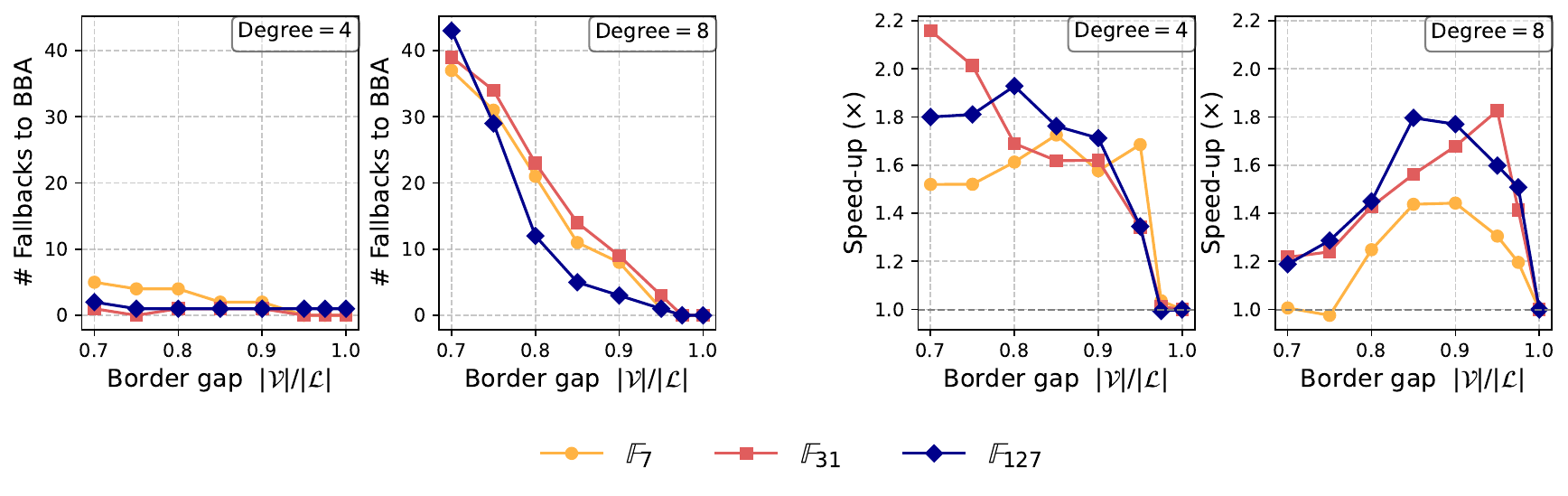}
    \caption{Speed-up of OBBA over IBBA on OOD systems with $n=4$ variables and increased degree. Each point averages 100 random instances per field. The relative border-gap $\frac{|\mathcal{V}|}{|\mathcal{L}|}$ is the threshold that decides when the oracle is invoked; a ratio of 1 corresponds to IBBA, where the oracle is never used. Although the oracle is trained only on systems of total degree $2$ for $n=4$, it generalizes to degrees $3, 4, 6,$ and $8$, achieving up to $1.8\times$ speed-up even for degree $8$. The average runtime for IBBA on the OOD systems is two orders of magnitude higher than for the in distribution case.}
    \label{fig:out-of-distribution}
\end{figure}

\textbf{Results.} \cref{fig:out-of-distribution} shows that the oracle generalizes to the systems of total degree $3,4,6$ and $8$, yielding speed-up factors up to $1.8$-times, even for the hardest case of total degree $8$. Despite being trained only on systems of total degree $2$ for $n=4$, it generalizes to the systems of total degree $3,4,6$ and $8$ which are significantly more difficult.

\section{Conclusion}\label{sec:conclusion}
We introduced a transformer-enhanced Border basis algorithm that allows for efficiently solving systems of polynomial equations. Our approach is the first to integrate deep learning into border basis computation, achieving up to 3.5x speedup while fully preserving solution correctness. The development of this oracle-guided method was based on a detailed analysis of algorithmic costs, a new framework for generating diverse training data specific to border bases, and an efficient task-specific polynomial representation, investigated both empirically and theoretically.We believe our work thus provides a practical, data-efficient, and stable enhancement to the symbolic computation toolkit, showcasing a promising way to combine machine learning with established mathematical algorithms.
\paragraph{Limitations and future work.}
This study is limited to zero-dimensional ideals—border bases, and hence OBBA, are not defined for positive-dimensional ideals—and all experiments were conducted over finite fields with up to five variables, leaving characteristic-0 arithmetic and larger arity unexplored.  

\newpage

\paragraph{Acknowledgements.} 
This research was partially supported by the DFG Cluster of
Excellence MATH+ (EXC-2046/1, project id 390685689)
funded by the Deutsche Forschungsgemeinschaft (DFG) as
well as by the German Federal Ministry of Education and
Research (fund number 01IS23025B). Hiroshi Kera was supported by JST PRESTO Grant Number JPMJPR24K4, JSPS KAKENHI Grant Number JP23KK0208, Mitsubishi Electric Information Technology R\&D Center, and the Chiba University IAAR Research Support Program and the Program for Forming Japan's Peak Research Universities (J-PEAKS). Yuki Ishihara was supported by
JSPS KAKENHI Grant Number JP22K13901 and Institute of Mathematics for Industry, Joint Usage/Research Center in Kyushu University (FY2025 Short-term Joint Research ``Speeding up of symbolic computation and its application to solving industrial problems 3'' (2025a012)).  

%\newpage

% \bibliography{reference}
% \bibliographystyle{abbrv}

\newpage
\appendix
\onecolumn
\appendix

\section{Theory of Oracle Border Basis Algorithm}\label{sec:theoretical-guarantees-appendix}
In this section we provide proofs to lemmas and theorem in \cref{sec:obba}. We begin by providing the full algorithm. 

\begin{algorithm}[ht]
    \caption{Border Bases Algorithm (\colorbox{peach}{BBA} and \colorbox{skyblue}{OBBA}, simplified)}
    \label{alg:border-basis-full}
    \SetKwInOut{Input}{Input}
    \Input{Polynomial system $F = \{f_1,\ldots,f_r\} \subset \ring$}
    $d \gets \max\{\deg(f_i) \mid 1 \leq i \leq r\}$; $\mathcal{L}_0 \gets \ideal{\mathcal{T}_n^{\leq d}}_k$; $\mathcal{V}_0 \gets \text{VectorSpaceBasis}(\ideal{F}_k)$;
    
    \While{true}{

        \colorbox{peach}{$\mathcal{C}_i \gets \mathcal{V}_i^+$}\\
        \colorbox{skyblue}{$\mathcal{C}_i \gets \mathrm{Oracle}(\mathcal{L}_i, \mathcal{V}_i)$}\\
        $\mathcal{V}_{i+1} \gets \text{BasisExtension}(\mathcal{V}_i, \mathcal{C}_i, \mathcal{L}_i)$\\
        \If{$\mathcal{V}_{i+1} \neq \mathcal{V}_i$}{
            $\mathcal{V}_i \gets \mathcal{V}_{i+1}$\;
            \textbf{continue}}
        \If{\textbf{not} $\mathrm{BorderBasisCheck}(\mathcal{L}_i, \mathcal{V}_i)$}{
            $d \gets d + 1$\;
            $\mathcal{L}_{i+1} \gets \ideal{\mathcal{T}_n^{\leq d}}_k$ \tcp*{Update universe}
            \textbf{continue}
        }
        \Else{
            $\mathcal{L}_{i+1} \gets \mathcal{L}_i$
        }
        \textbf{break}}
    \KwRet{$\mathrm{FinalReduction}(\mathcal{V}_i, \mathcal{L}_i)$}
    \end{algorithm}

The oracle is restricted to $k$ consecutive non-full expansions, after which we fall back to one final full expansion to ensure correctness. We continue by stating the correctness theorem of the oracle-guided border basis algorithm.\\\\
$\mathrm{BorderBasisCheck}$ corresponds to the check if the border of the tentative order ideal $\mathcal{L}\setminus \lt{\mathcal{V}}$ is already in $\mathcal{L}$.
$\mathrm{FinalReduction}$ refers to the algorithm with the same name introduced by \cite{kehrein2005characterizations}. 

\begin{theorem}\label{thm:obba-correctness}
  The oracle-guided border basis algorithm terminates and the output is a border basis.
\end{theorem}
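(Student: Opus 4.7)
The plan is to reduce the proof to the classical correctness of the Border Basis Algorithm, treating OBBA as a conservative modification in which the oracle may only restrict the candidate set $\mathcal{C}_i \subseteq \mathcal{V}_i^+$ and, by construction, is forbidden from doing so more than $k$ times consecutively. The argument splits naturally into (i) a termination analysis that uses the oracle cap $k$ to reduce to BBA on a tail of every stage, and (ii) a correctness analysis based on maintaining the ideal-membership invariant $\mathcal{V}_i \subseteq \ideal{F}$ throughout the run.

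For termination, I would first fix a universe $\mathcal{L}_i$ of degree $d$ and note that BasisExtension only enlarges spans, so $|\mathcal{V}_i|$ is non-decreasing and bounded above by $|\mathcal{L}_i| < \infty$. Hence the inner loop at any fixed $d$ can strictly grow $\mathcal{V}_i$ only finitely many times. After at most $k$ consecutive oracle-guided iterations, OBBA reverts to $\mathcal{C}_i \gets \mathcal{V}_i^+$; consequently, whenever the condition $\mathcal{V}_{i+1} = \mathcal{V}_i$ is registered after such a mandatory full expansion, we have genuinely computed an L-stable span in the sense of \cite{kehrein2006computing}. At this point either BorderBasisCheck succeeds (exit), or $d$ is incremented; zero-dimensionality of $\ideal{F}$ together with the classical BBA termination argument then ensures that the $d$-update loop also terminates.

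For correctness, the invariant $\mathcal{V}_i \subseteq \ideal{F}$ is established by induction on the loop iterations: it holds at initialization since $\mathcal{V}_0$ is a vector-space basis of $\ideal{F}_{\le d}$, and it is preserved by any oracle choice because every candidate $x_j v$ with $v \in \mathcal{V}_i$ already lies in $\ideal{F}$, and BasisExtension only takes linear combinations of elements already in $\ideal{F}$. Exiting the outer loop thus requires both (a) a full expansion that has failed to enlarge $\mathcal{V}_i$, and (b) BorderBasisCheck certifying that $\mathcal{O} := \mathcal{L}_i \setminus \lt{\mathcal{V}_i}$ is an order ideal whose border $\partial \mathcal{O}$ is contained in $\mathcal{L}_i$. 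Applying FinalReduction to such a $\mathcal{V}_i$ then yields the $\mathcal{O}$-border basis of $\ideal{F}$ by the standard correctness argument of \cite{kehrein2006computing}.

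The principal obstacle is ruling out a spurious fixed point produced by the oracle: if an oracle returned a strict subset $\mathcal{C}_i \subsetneq \mathcal{V}_i^+$ missing a basis-extending candidate, the condition $\mathcal{V}_{i+1} = \mathcal{V}_i$ could trigger before L-stability is actually attained, handing control to BorderBasisCheck with an incomplete $\mathcal{V}_i$ and potentially producing either an incorrect output or a spurious degree increase. The cap of $k$ consecutive oracle invocations is precisely the safeguard against this pathology: after at most $k$ oracle-guided iterations a full expansion is forced, so every observed fixed point corresponds to a genuine L-stable span, and both termination and soundness are inherited directly from the classical algorithm.
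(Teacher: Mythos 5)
Your proposal is correct and takes essentially the same approach as the paper: the cap of $k$ consecutive oracle invocations forces a full expansion, after which termination and correctness are inherited from the classical BBA started from the current $\mathcal{V}_i$, which always lies in $\ideal{F}$ because the oracle only restricts candidate sets. You spell out more detail (the $\mathcal{V}_i \subseteq \ideal{F}$ invariant, the finite-growth-per-degree argument, and the spurious-fixed-point pathology the cap prevents), but all of this is implicit in the paper's one-line reduction to ``we could have started with the current iterate $\mathcal{V}$ and applied the standard border basis algorithm.''
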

\begin{proof}
    There are two techniques to ensure the correctness of the oracle-guided border basis algorithm, one of which will be stated here.
    After the oracle has been invoked $k$ times, we make one more full expansion. If $\mathrm{BasisExtension}$ yields additional generators, we fall back to the standard border basis algorithm. This ensures that the output is a border basis, as we could have started with the current iterate $\mathcal{V}$ and applied the standard border basis algorithm. For this we know correctness, so we are done. 
\end{proof}
\subsection{Alternative termination criteria }
To avoid computing a full expansion all the time, we may rely on the \emph{Buchberger Criterion for Border Basis} from \cite{kehrein2006computing}. This criteria is often a lightweight alternative to the full expansion. We can directly apply it to the result of the \cref{alg:border-basis-full}. If it yields that we have a border basis, we are done. If it fails, we go back to the last iterate $\mathbb{V}_i$ and obtain termination by the standard BBA. In practice, we may combine this approach with a learned heuristic, that decides based on the size of the tentative order ideal, whether a full expansion is necessary or we can directly proceed to check for the Buchberger Criterion.

\begin{lemma}\label{lem:final-stage}
  Assume $\mathcal{L}$ needs no further expansion and let $\mathcal{O}$ be the order ideal of the border basis ultimately produced by \gls{bba} in \cref{alg:border-basis}.  
  If $\lvert\mathcal{L}\rvert - \lvert\mathcal{V}\rvert = \lvert\mathcal{O}\rvert$, then no additional expansions are necessary.
\end{lemma}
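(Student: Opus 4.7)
The plan is to exploit two monotonicity invariants of the inner loop in \cref{alg:border-basis}: (i) the sequence of vector subspaces $\mathrm{span}(\mathcal{V}_i)$ is nested and non-decreasing in $\mathrm{span}(\mathcal{L})$, and (ii) at every iterate $\mathcal{V}_i$ is a basis of $\mathrm{span}(\mathcal{V}_i)$ with pairwise distinct leading terms, so that $\lvert\mathcal{V}_i\rvert=\dim\mathrm{span}(\mathcal{V}_i)$. Both of these are maintained by the $\mathrm{BasisExtension}$ step by construction, so they can be treated as given loop invariants.

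Next I would identify the dimension of the terminal subspace. Because $\mathcal{L}$ is assumed to require no further enlargement, the L-stable-span loop eventually produces a basis $\mathcal{V}_{\mathrm{final}}$ whose complement inside $\mathcal{L}$ is, by definition of the algorithm, the order ideal of the output border basis, i.e.\ $\mathcal{O}=\mathcal{L}\setminus\lt{\mathcal{V}_{\mathrm{final}}}$. Since the leading terms are distinct, this is a disjoint decomposition and gives the identity
\[
\lvert\mathcal{V}_{\mathrm{final}}\rvert \;=\; \lvert\mathcal{L}\rvert - \lvert\mathcal{O}\rvert.
\]
This is the target dimension I need to hit.

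Now suppose at some iterate the hypothesis $\lvert\mathcal{L}\rvert-\lvert\mathcal{V}\rvert=\lvert\mathcal{O}\rvert$ holds. Combining with the identity above yields $\lvert\mathcal{V}\rvert=\lvert\mathcal{V}_{\mathrm{final}}\rvert$, which by invariant (ii) means $\dim\mathrm{span}(\mathcal{V})=\dim\mathrm{span}(\mathcal{V}_{\mathrm{final}})$. Invariant (i) then forces $\mathrm{span}(\mathcal{V})=\mathrm{span}(\mathcal{V}_{\mathrm{final}})$, because the two subspaces are nested and of equal finite dimension. Any subsequent expansion step maps into $\mathrm{span}(\mathcal{L})$ and cannot enlarge $\mathrm{span}(\mathcal{V})$ beyond $\mathrm{span}(\mathcal{V}_{\mathrm{final}})$, so $\mathrm{BasisExtension}$ returns the current $\mathcal{V}$ unchanged, triggering the termination condition $\mathcal{V}_{i+1}=\mathcal{V}_i$ in \cref{alg:border-basis}.

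The main obstacle I expect is justifying the identity $\lvert\mathcal{V}_{\mathrm{final}}\rvert=\lvert\mathcal{L}\rvert-\lvert\mathcal{O}\rvert$ cleanly. Subtlety arises because $\lt{\mathcal{V}_{\mathrm{final}}}$ is not a priori equal to the border $\partial\mathcal{O}$ of the final order ideal; one has to argue that within the already-sufficient universe $\mathcal{L}$ the complement $\mathcal{L}\setminus\lt{\mathcal{V}_{\mathrm{final}}}$ is \emph{exactly} what the algorithm certifies as the order ideal $\mathcal{O}$, so that the disjoint-decomposition counting argument applies. This can be done by appealing directly to $\mathrm{BorderBasisCheck}$ in \cref{alg:border-basis-full}, which returns success precisely when $\mathcal{L}\setminus\lt{\mathcal{V}_{\mathrm{final}}}$ is an order ideal whose border lies inside $\mathcal{L}$, and at that moment this set is, by construction, taken as $\mathcal{O}$.
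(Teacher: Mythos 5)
Your argument is correct and uses essentially the same underlying idea as the paper's (very terse) proof: the cardinality $\lvert\mathcal{V}\rvert$ is monotone non-decreasing along the inner loop, the terminal iterate satisfies $\lvert\mathcal{V}_{\mathrm{final}}\rvert=\lvert\mathcal{L}\rvert-\lvert\mathcal{O}\rvert$ because $\mathcal{O}=\mathcal{L}\setminus\lt{\mathcal{V}_{\mathrm{final}}}$ with distinct leading terms, and hence $\lvert\mathcal{V}\rvert$ can never exceed $\lvert\mathcal{L}\rvert-\lvert\mathcal{O}\rvert$. The paper phrases this as a one-line contradiction (an additional expansion would push $\lvert\mathcal{V}\rvert$ past the terminal cardinality), while you spell out the nested-span invariant and the equality of same-dimensional nested subspaces to conclude $\mathrm{span}(\mathcal{V})=\mathrm{span}(\mathcal{V}_{\mathrm{final}})$. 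Your version is slightly more detailed and in particular makes explicit the step that the paper leaves implicit, namely why $\lvert\mathcal{V}\rvert$ is bounded above by $\lvert\mathcal{V}_{\mathrm{final}}\rvert$; the ``obstacle'' you flag — that $\mathcal{O}$ is by construction $\mathcal{L}\setminus\lt{\mathcal{V}_{\mathrm{final}}}$, not $\mathcal{L}\setminus\partial\mathcal{O}$ — is resolved exactly as you describe and is not an issue.
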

\begin{proof}
    Assume towards a contradiction, a further expansion of $\mathcal{V}$ is required for termination. Thus, in particular it holds we can add at least one element to $\mathcal{V}$, increasing its cardinality by $1$. But this contradicts the assumption that $\lvert\mathcal{L}\rvert - \lvert\mathcal{V}\rvert = \lvert\mathcal{O}\rvert$. 
\end{proof}

\paragraph{Setup.}
Throughout this section we analyse the \emph{final stage} of computation, where the
computational universe $\mathcal{L}$ is fixed.  
Let $\mathcal{V}_0$ be the first generator set in this stage and assume the vanilla
BBA terminates after \(T\) full expansions, yielding $\mathcal{V}_T$.

\begin{definition}[Border distance]
  The \emph{border distance} between $\mathcal{L}$ and a generator set
  $\mathcal{V}$ is
  \[
     d(\mathcal{L},\mathcal{V}) \;=\;
     \text{``\# of full BBA expansions still required for }\mathcal{V}\text{''}.
  \]
  Hence $d(\mathcal{L},\mathcal{V}_0)=T$ and $d(\mathcal{L},\mathcal{V}_T)=0$.
\end{definition}

\paragraph{Why conditional error?}
A $k$-order oracle may cut the border distance by any amount $s\le\min\{k,d(\mathcal{L},\mathcal{V}_t)\}$.
If $s<\min\{k,d(\mathcal{L},\mathcal{V}_t)\}$ the oracle has missed \emph{necessary} expansions; if $s=0$ it has made
no progress at all.  To separate progress from waste we introduce a conditional
error measure.

\begin{definition}[Conditional prediction error]
  Invoke a $k$-order oracle at iteration~$t \in \{0, \ldots, T\}$.  
  After its $k$ calls let the border distance have dropped by $s$ and
  denote by $Q$ the \emph{minimal} number of expansions that suffice for an $s$ order oracle to make progress $s$.  The \emph{$s$-progress prediction error} is
  \[
     e(s)\;=\;\sum_{i=0}^{k-1}\bigl|\;\mathrm{Oracle}(\mathcal{L}, \mathcal{V}_{t+i})\bigr|
              \;-\; Q.
  \]
  An oracle is ideal if $s=k$ and $e(s)=0$.
\end{definition}

\paragraph{Benchmark: the optimal expansion sequence.}
Let $\mathrm{OPT}$ be the cost (number of polynomial reductions)
of an \emph{omniscient} algorithm that, starting from
$\mathcal{V}_t$, chooses the minimal set of expansions for $s$ subsequent expansions, decreasing the border distance by $s$.

\begin{theorem}[Cost gap to optimal]\label{thm:obba-runtime}
  Consider a final-stage instance with universe $\mathcal{L}$, order ideal
  $\mathcal{O}$, and invoke a $k$-order oracle at iteration~$t$.
  If it achieves progress $s$ with conditional error $e(s)$, then
  \[
     \mathrm{cost}\bigl(\mathrm{OBBA}\bigr)\;-\;\mathrm{OPT}
     \;\;\le\;\;
     e(s)\;+\;n\,\max\{T-t-s,\,0\}\,\bigl(|\mathcal{L}|-|\mathcal{O}|\bigr).
  \]
\end{theorem}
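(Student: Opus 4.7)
The plan is to decompose the total cost of OBBA starting from iteration $t$ into two disjoint contributions: the reductions performed during the $k$ oracle calls themselves, and the reductions performed during the ``clean-up'' full BBA expansions that run after the oracle budget is exhausted. I would then upper-bound each piece separately and compare each against the omniscient baseline $\mathrm{OPT}$.

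For the oracle phase, the $k$ calls propose $\sum_{i=0}^{k-1}\lvert\mathrm{Oracle}(\mathcal{L}, \mathcal{V}_{t+i})\rvert$ candidates, which by the definition of the $s$-progress prediction error equals $Q + e(s)$. Since $Q$ is by construction the minimum number of reductions required to cut the border distance by $s$, the omniscient algorithm cannot do better on that portion, so $\mathrm{OPT}\ge Q$ and this phase contributes exactly the $e(s)$ term to the cost gap. For the residual phase, the oracle leaves the border distance at $T-t-s$, so OBBA still needs $\max\{T-t-s,0\}$ full BBA expansions before termination (and none if $s\ge T-t$). A full expansion multiplies each element of the current $\mathcal{V}$ by each of the $n$ variables, producing $n\lvert\mathcal{V}\rvert$ candidates. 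Because $\mathcal{V}$ grows monotonically inside the final stage and its terminal size is $\lvert\mathcal{L}\rvert-\lvert\mathcal{O}\rvert$ by \cref{lem:final-stage}, every intermediate $\mathcal{V}$ in this stage satisfies $\lvert\mathcal{V}\rvert\le\lvert\mathcal{L}\rvert-\lvert\mathcal{O}\rvert$, so each residual expansion costs at most $n(\lvert\mathcal{L}\rvert-\lvert\mathcal{O}\rvert)$ reductions. This bounds the residual phase by $n\max\{T-t-s,0\}(\lvert\mathcal{L}\rvert-\lvert\mathcal{O}\rvert)$, and since $\mathrm{OPT}$ is defined only through the $s$-step goal, this whole quantity appears unreduced in the cost gap.

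Adding the two bounds yields the statement. The main subtlety I expect is pinning down the cost model so that the $n\lvert\mathcal{V}\rvert$ charge per full expansion is unambiguous and so that $\mathrm{OPT}$ and $Q$ refer to the same ``minimum cost to achieve $s$ progress'' quantity; once that is in place, the monotonicity of $\mathcal{V}$ in the final stage and the definition of $e(s)$ let the two estimates chain together into the claimed right-hand side without any further work.
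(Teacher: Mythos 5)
Your decomposition—oracle-phase cost versus clean-up full-expansion cost—is exactly the one the paper uses, and both pieces are bounded the same way: the oracle phase contributes $\sum_i\lvert\mathrm{Oracle}(\mathcal{L},\mathcal{V}_{t+i})\rvert - Q = e(s)$ against $\mathrm{OPT}$, and each of the remaining $\max\{T-t-s,0\}$ full expansions is charged $n(\lvert\mathcal{L}\rvert-\lvert\mathcal{O}\rvert)$ via the terminal bound on $\lvert\mathcal{V}\rvert$. This matches the paper's argument; if anything, you are slightly more explicit about invoking \cref{lem:final-stage} and monotonicity of $\mathcal{V}$ to justify the $\lvert\mathcal{V}\rvert\le\lvert\mathcal{L}\rvert-\lvert\mathcal{O}\rvert$ step, which the paper just asserts.
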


\begin{proof}
    We analyse the overhead by distinguishing two components: The first component is directly given by the prediction error as it determines how many more expansions were done than necessary. By definition, the prediction error is $e(s) = \sum_{i=0}^{k-1} |\mathrm{Oracle}(\mathcal{L}, \mathcal{V}_{t+i})| - Q$ and $Q$ is the cost of the optimal expansion sequence. This covers cost difference for the first $s$ iterations.
    Suppose now, that $T-t-s = 0$. Then, the algorithm terminates and no further error is incurred. Otherwise, we might incur an additional cost for having expanded the generator set $\mathcal{V}$ too much without reducing the border distance beyond $s$. The size of constructed generator set is at most $|\mathcal{L}|-|\mathcal{O}|$. We know that from the optimal $s$ expansions, after $k-s$ expansions, the set $\mathcal{V}$ has at least the size of the generator set that was predicted by the oracle. Therefore, there are at most $T-t-s$ remaining iterations that incur the higher cost which are upper bounded by $n(|\mathcal{L}|-|\mathcal{O}|)$ times per expansion. 
\end{proof}

\noindent
The overhead therefore consists of two terms: the first grows linearly with the prediction error, while the second reflects a hidden cost for making only progress $s$. Specifically, if the oracle expands the generator set $\mathcal{V}$ too much without reducing the border distance beyond $s$, we incur additional cost in the following $T-t-s$ iterations. This theorem is practically relevant, as it further justifies the use of the $k$-order oracle in the final stage of computation. Being conservative with the use of the oracle (i.e. such that $T$ is smaller than $k$), avoids the worst case additional cost. 

\section{Border basis sampling}\label{sec:border-basis-sampling-app}

\subsection{Order ideal sampling}\label{sec:order-ideal-sampling}
We introduced the overview of order ideal sampling in \cref{sec:border-basis-sampling}. 
This section presents the formal algorithm of the order ideal sampling given . The extension to general $n$-dimensional case is based on the same idea, but it requires a more careful formalization than one may expect. 

Recall our idea. A finite set of terms $\cO$ is called an order ideal if for any $t\in\cO$, its divisors are also all included in $\cO$. Importantly, for any terms $t_1, t_2 \in \cT_n$, the term set $\cO_{t_i} := \{\text{all the divisors of }t_i\}$ is an order ideal for each $i$, and their union $\cO_{t_1, t_2} = \cO_{t_1}\cup\cO_{t_2}$ is also an order ideal. We sample order ideals based on this observation. During the iterative process, we maintain a list $Q$ of \textit{cells}. At each iteration, we pop out a cell $C$ from $Q$, and split it to generate new cells. The cells with sufficient size are appended to $Q$, and this process iterates until $Q$ becomes empty. 

We now formalize this idea. From now on, we work on the exponent vectors of terms. We denote a vector $\bm{v}\in\bN^n$ with a replacement of the $i$-th entry with a scalar $p\in\bN$ by $\bm{v}^{[i\leftarrow p]}$.  A cell is a tuple of $n$ segments and an intersecting point $p\in\bN^n$.
\begin{definition}
    Let $\bm{a}, \bm{b} \in \bN^n$. The following set is called the \emph{segment} of them.
    \begin{align}
    \Delta(\bm{a}, \bm{b}) &= \qty{ \bm{r} := (r_1, \ldots, r_n)^{\top} \in \bN^n \mid \min(a_i, b_i) \le r_i \le \max(a_i, b_i),\ i = 1,\ldots, n }.
    \end{align}
    The maximum point of the segment is defined by $\overline{\Delta(\bm{a}, \bm{b})} := \qty(\max(a_1, b_1), \ldots, \max(a_n, b_n))^{\top}$, and the minimum point $\underline{\Delta(\bm{a}, \bm{b})}$ is defined similarly.
\end{definition}

\begin{definition}
    Let $\Delta_1, \ldots, \Delta_n$ be some segments such that $\underline{\Delta_1}=\cdots = \underline{\Delta_n} = \bm{l}$. 
    The \emph{cell} is a tuple $C = (\{\Delta_i\}_{i=1}^n, \bm{l})$, and $\bm{l}$ is called the \emph{intersecting point}. Further,  the \emph{maximum point} of the cell is $\bm{u} = (u_1,\ldots, u_n)^{\top} \in \bN^n$, where $u_i = \min(\overline{\Delta_1}_i, \ldots, \overline{\Delta_{i-1}}_i, \overline{\Delta_{i+1}}_i, \ldots \overline{\Delta_n}_i
)$ for $i=1,\ldots, n$. The cell is called \emph{valid} if $\bm{l}$ and $\bm{u}$ has at least two different entries, and $\max_{i\in \{1,\ldots, n\}} u_i - l_i \ge 2$.
\end{definition}

Let $\bm{d} = (d_1,\ldots, d_n)^{\top} \in \bN^n$ be the vector of the maximum degree of variables. The initial cell is defined by $C_0 = (\{\Delta(\bm{0}, d_i\bm{e}_i)\}_{i=1}^n, \bm{0})$, and let $Q = [C_0], R = [\ ]$ be lists. Then, we repeat the following until $Q$ becomes empty or the number of iterations reaches the predesignated limit (if any). 
\begin{enumerate}
    \item Select a cell $C = \qty(\{\Delta_i\}_{i=1}^n, \bm{l}) \in Q$ and remove it from the list.
    \item Sample a vector $\bm{p} = (p_1,\ldots,p_n)^{\top} \in \bN^n$ with $p_i \in [l_i, u_i]$ for $i=1,\ldots, n$.
    \item Append a tuple $(\bm{l}, \bm{p})$ to $R$.
    \item For $i=1,\ldots, n$, obtain a new cell 
    $C_i^{\mathrm{new}} = \qty(\qty{\Delta_{j}^{\text{new}}}_{j=1}^n, \bm{l}^{[i \leftarrow p_i ]})$, where $\Delta_{j}^{\text{new}} = \Delta\qty(\underline{\Delta}_j^{[j\leftarrow p_j]}, \overline{\Delta}_j)$ for $j\ne i$ and otherwise $\Delta_{j}^{\text{new}} = \Delta\qty(\bm{l}, \bm{p})$. Append $C_i^{\mathrm{new}}$ to $Q$ if it is a valid cell. 
\end{enumerate}
    We then have an order ideal
    \begin{align}\label{eq:order-ideal-retrieval-final-step}
        \cO = \qty{ x^{\bm{a}} \mid \bm{a} \in \bigcup_{(\bm{l}, \bm{p}) \in R}\bigtimes_{i=1}^n [l_i, p_i]}.
    \end{align}

\begin{theorem}
    The order ideal sampling algorithm terminates within a finite number of steps.
\end{theorem}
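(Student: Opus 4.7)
The plan is to exhibit an integer-valued potential function $\Phi$ on cells that (i) is bounded below by $0$, (ii) is finite on the initial cell, and (iii) strictly decreases on every valid child produced by a split. With such a $\Phi$ in hand, the tree whose root is $C_0$ and whose children at each node are the valid cells it spawns is at most $n$-ary and has depth bounded by $\Phi(C_0)$; by König's lemma this tree is finite, which is equivalent to termination of the algorithm.

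The natural candidate I would try first is $\Phi(C) = \sum_{i=1}^n (u_i - l_i)$, the $L^1$ diameter of $C$ with intersecting point $\bm{l}$ and maximum point $\bm{u}$. Since every descendant of $C_0$ is contained coordinatewise in $\prod_{i=1}^n [0, d_i]$, the value $\Phi(C_0) \le \sum_i d_i$ is finite and all descendants inherit non-negative integer potentials. A direct calculation using $\bm{p} \in \prod_i [l_i, u_i]$ shows that the child $C_i^{\mathrm{new}}$ has intersecting point $\bm{l}^{[i\leftarrow p_i]}$ and a maximum point whose $i$-th coordinate stays at $u_i$ while each other coordinate is bounded by $p_j \le u_j$; hence $\Phi(C_i^{\mathrm{new}}) \le \Phi(C)$. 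The remaining step is to upgrade this weak inequality to strict decrease, where the validity predicate plays its role: its two clauses (that $\bm{l}$ and $\bm{u}$ differ in at least two coordinates and $\max_i(u_i - l_i) \ge 2$) rule out precisely the degenerate samples $\bm{p}$ for which neither $p_i > l_i$ nor any $p_j < u_j$ holds.

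The hard part will be this last step: verifying that every child with unchanged $\Phi$ falls foul of one of the two validity clauses. This forces a small case analysis over the configurations where $\bm{p}$ lies on a face of the cell aligned with the split direction, and one must check each such degenerate sample is either captured by the "two differing coordinates" clause or by the "maximum gap $\ge 2$" clause. If that case split becomes awkward, I would fall back to a lexicographic monovariant such as $\bigl(\prod_i (u_i - l_i + 1),\ \Phi(C)\bigr)$, for which any coordinate-wise shrinkage already gives strict decrease under the lex order, trading a more complex potential for a simpler verification.
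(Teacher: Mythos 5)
Your overall strategy matches the paper's: exhibit a non-negative integer monovariant on cells, show it strictly decreases on every valid child, and conclude termination from the bounded-depth, at-most-$n$-ary split tree. Where you differ is in the choice of measure and in the case analysis, and this is where the proposal has a genuine gap. The paper's proof hinges on a simple dichotomy — $\bm{p} = \bm{l}$ (in which case it shows every child fails the validity predicate) versus $\bm{p} \neq \bm{l}$ (in which case the $i$-th segment $\Delta_i^{\mathrm{new}} = \Delta(\bm{l},\bm{p})$ is a strict subset of $\Delta_i$, so the cell's ``number of exponent vectors'' drops). Crucially, that measure sees the \emph{segments}; your $\Phi(C) = \sum_i (u_i - l_i)$ and the lex refinement $\bigl(\prod_i (u_i-l_i+1),\ \Phi(C)\bigr)$ only see the enclosing box $[\bm{l},\bm{u}]$.

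This is not a cosmetic difference. Take the degenerate sample $\bm{p} = \bm{u}^{[i\leftarrow l_i]}$, i.e.\ $p_i = l_i$ and $p_j = u_j$ for $j\neq i$, and consider the $i$-th child $C_i^{\mathrm{new}}$. Its intersecting point is $\bm{l}' = \bm{l}^{[i\leftarrow p_i]} = \bm{l}$. Working through the update rule: for $m\neq i$ the new segment is $\Delta(\bm{l}^{[m\leftarrow u_m]}, \overline{\Delta}_m)$, whose $k$-th max coordinate equals $(\overline{\Delta}_m)_k$ for $k\neq m$; and $\overline{\Delta_i^{\mathrm{new}}} = \bm{u}^{[i\leftarrow l_i]}$. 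Plugging into the cross-min definition of the maximum point gives $u'_i = \min_{m\neq i}(\overline{\Delta}_m)_i = u_i$ and, for $k\neq i$, $u'_k = \min\bigl(u_k, \min_{m\notin\{i,k\}}(\overline{\Delta}_m)_k\bigr) = u_k$. So $\bm{l}' = \bm{l}$ and $\bm{u}' = \bm{u}$: the box is unchanged, hence both your $\Phi$ and your lex fallback are unchanged. Moreover, since validity reads only $\bm{l}$ and $\bm{u}$, $C_i^{\mathrm{new}}$ inherits validity from $C$. So the ``small case analysis'' you defer does \emph{not} close: the degenerate sample is not ruled out by either validity clause, and your proposed monovariants stall. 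To repair the argument you must track something finer than the box — exactly what the paper does by counting lattice points of the segments, so that even with the box fixed the degenerate child has shrunk $\Delta_i$ to $\Delta(\bm{l},\bm{u}^{[i\leftarrow l_i]}) \subsetneq \Delta_i$.
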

\begin{proof}\textbf{Termination.}
We prove that the order ideal sampling algorithm terminates after a finite number of iterations. 

Let $C = (\{\Delta_i\}_{i=1}^n, \bm{l})$ be a cell, and denote its maximum point as $\bm{u} = (u_1, \ldots, u_n)$. In each iteration, the algorithm pops one such cell from the queue $Q$, samples a point $\bm{p} \in \bigtimes_{i=1}^n [l_i, u_i]$, and generates at most $n$ new valid cells $C_1^{\mathrm{new}}, \ldots, C_n^{\mathrm{new}}$. 

\textbf{Case 1:} $\bm{p} \neq \bm{l}$. For any $i$, the new cell $C_{i}^{\text{new}}$ contains at least one segment that is strictly smaller than the corresponding one in $C$. In particular, its $i$-th segment is $\Delta_i^{\text{new}} = \Delta(\bm{l}, \bm{p})$, which has length zero in coordinate $j$ if $p_j = l_j$, and is strictly shorter otherwise. Therefore, the total number of exponent vectors in $C_i^{\text{new}}$ is strictly less than that in $C$.

\textbf{Case 2:} $\bm{p} = \bm{l}$. For any $i$, the new cell $C_{i}^{\text{new}}$ becomes invalid. Indeed, let $\bm{u}' = (u'_{1}, \ldots, u'_n)$ be its maximum point. Since $\bar{\Delta_i^{\text{new}}} = \bm{l}$ by construction, and for all $k \ne i$ the $k$-th segment has lower and upper bounds both equal to $l_k$, we have $u'_{k} = l'_{k}$ for all $k$. This violates the validity condition, which requires that $\bm{l} \ne \bm{u}$ and $\max_i (u_i - l_i) \ge 2$.

Therefore, in either case, the number of valid cells does not increase: Case 1 generates strictly smaller valid cells, and Case 2 generates no valid cells. Moreover, since the exponent space $[0, d_1] \times \cdots \times [0, d_n]$ is finite, and each valid cell corresponds to a distinct subregion defined by its segments, the total number of distinct valid cells is finite.

Hence, the queue $Q$ becomes empty after finitely many iterations, and the algorithm terminates.

\end{proof}

\paragraph{Empirical results.}
\cref{fig:order-ideal-gallary} shows a gallery of randomly sampled order ideals in the two-dimensional case. As can be seen, it involves diverse order ideals.
\begin{figure}[t]
    \centering
    \includegraphics[width=\linewidth]{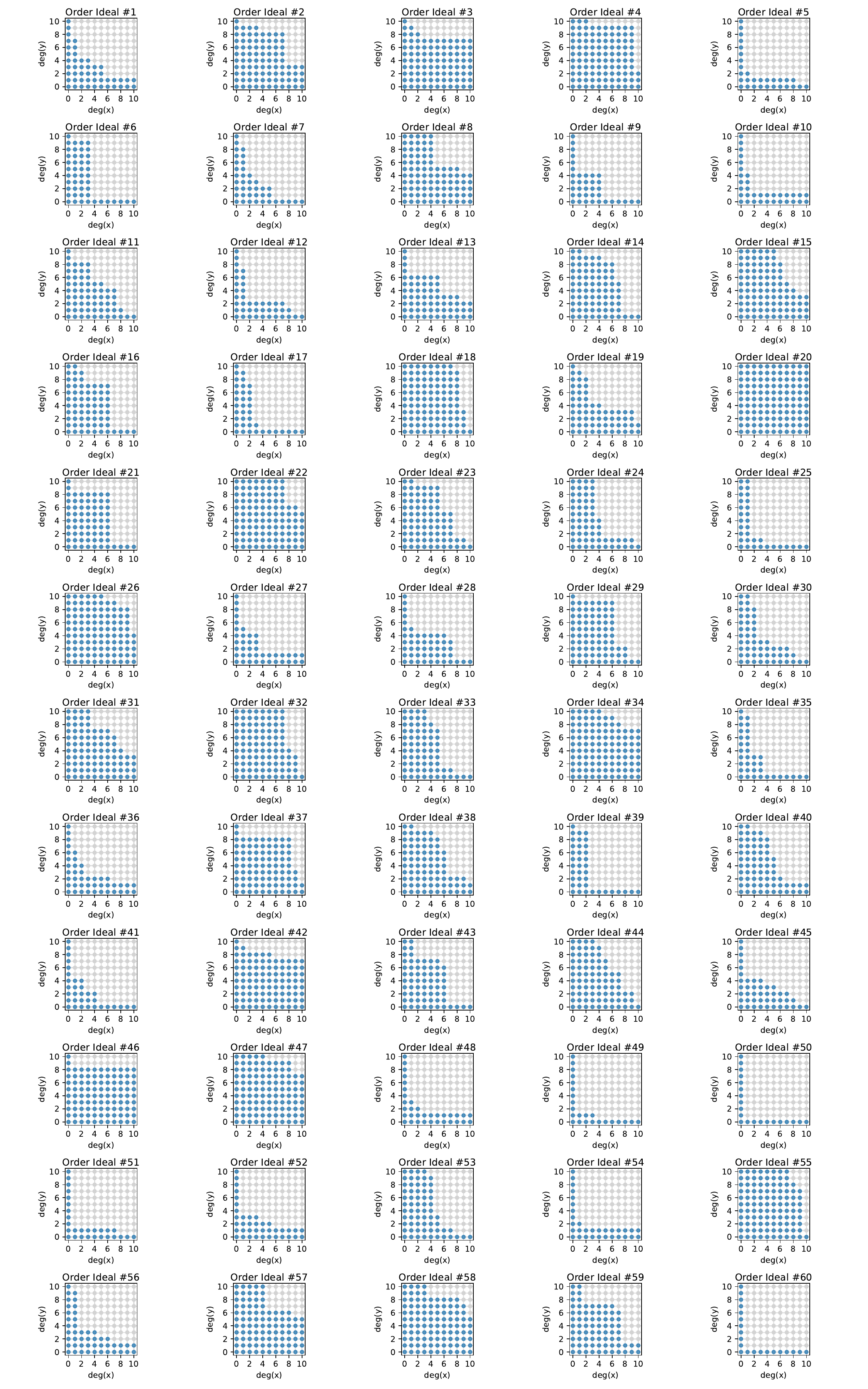}
    \caption{The gallery of randomly sampled order ideals for $n=2$ and $\bm{d} = (10, 10)$.}
    \label{fig:order-ideal-gallary}
\end{figure}

\subsection{Border basis construction from order ideals: Proof of Theorem~\ref{thm:border-basis-sampling}}\label{sec:border-basis-construction-proof}

\begin{definition}
Given a set of points $P = \{\bm{p}_1,\ldots,\bm{p}_{\nu}\} \subset K^n$, with gentle abuse of notation, the \emph{evaluation vector} of a polynomial $h\in\ring $ is defined by
\begin{align*}
h(X) & =\mqty(h(\bm{p}_{1}) & \cdots & h(\bm{p}_{\nu}))^{\top}\in K^{\nu}.
\end{align*}
For a set of polynomials $\cH=\qty{ h_{1},\ldots,h_{\mu}} \subset\ring$, 
the \emph{evaluation matrix} is defined as 
\begin{align*}
	\cH(P) & = \mqty(h_{1}(P) & \cdots & h_{\mu}(P))\in K^{\nu\times \mu}. 
\end{align*}
\end{definition}

The proof of Theorem~\ref{thm:border-basis-sampling} immediately follows from the following lemma. 
\begin{lemma}\label{lemma:vanishing-ideal-border-basis}
    Let $\cO = \{\os\} \subset \cT_n$ and $G \subset \ring $ be an order ideal and its $\cO$-border prebasis, respectively. Let $P \subset K^n$ be a set of $s$ points. If the following holds, 
    \begin{enumerate}
        \item $\ideal{o_1(P), \ldots, o_\nu(P)}_k = K^\nu$. 
        \item $G \subset \cI(P)$.
    \end{enumerate}
    then, $G$ is the $\cO$-border basis of the vanishing ideal $\cI(P)$.
\end{lemma}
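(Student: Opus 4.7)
The goal is to show two things simultaneously: that $\ideal{G} = \cI(P)$ and that $\cO$ is a $K$-basis of $\ring/\ideal{G}$. My plan is to obtain both via a dimension-counting squeeze, using the prebasis structure to upper bound $\dim_K \ring/\ideal{G}$ and the evaluation map at $P$ to lower bound it.

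\textbf{Step 1 (spanning from the prebasis).} First I would invoke the standard fact that, whenever $G$ is an $\cO$-border prebasis, the residue classes of the order-ideal terms $\cO = \{o_1,\ldots,o_\nu\}$ already span $\ring/\ideal{G}$ as a $K$-vector space. The argument is a degree induction: every term $t \in \cT_n$ either lies in $\cO$ (done), in $\partial\cO$ (where the relation $g_i = b_i - \sum_j c_{ij} o_j \in G$ rewrites $b_i \equiv \sum_j c_{ij} o_j \pmod{\ideal{G}}$), or can be written as $x_k t'$ for some lower-degree term $t'$ and variable $x_k$, allowing induction on degree. This yields the upper bound $\dim_K \ring/\ideal{G} \le \nu$.

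\textbf{Step 2 (lower bound via evaluation at $P$).} Next I would use condition~(2), $G \subset \cI(P)$, to conclude $\ideal{G} \subset \cI(P)$, which produces a canonical surjection $\pi:\ring/\ideal{G} \twoheadrightarrow \ring/\cI(P)$. To pin down the dimension of the target, consider the evaluation map $\mathrm{ev}_P:\ring \to K^\nu$, whose kernel is $\cI(P)$ by definition. Since the $\nu$ evaluation vectors $o_1(P),\ldots,o_\nu(P)$ span $K^\nu$ by hypothesis~(1) and there are exactly $\nu$ of them, they form a basis of $K^\nu$; in particular $\mathrm{ev}_P$ is surjective. Hence $\ring/\cI(P) \cong K^\nu$, giving $\dim_K \ring/\cI(P) = \nu$.

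\textbf{Step 3 (squeeze and conclude).} Combining the two bounds,
\begin{equation*}
\nu \;=\; \dim_K \ring/\cI(P) \;\le\; \dim_K \ring/\ideal{G} \;\le\; |\cO| \;=\; \nu,
\end{equation*}
so equality holds throughout. Equality on the left forces the surjection $\pi$ to be an isomorphism, i.e.\ $\ideal{G} = \cI(P)$; equality on the right, together with Step~1, forces $\cO$ to be not merely a spanning set but a basis of $\ring/\ideal{G}$. By Definition~\ref{def:border-basis} this makes $G$ the $\cO$-border basis of $\cI(P)$.

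\textbf{Main obstacle.} The only non-routine step is the spanning assertion in Step~1: one must be careful that the prebasis rewriting terminates, since the prebasis relations rewrite border terms in terms of (possibly higher-degree) order-ideal terms only after iterating across all border terms encountered during the reduction. The cleanest way is to induct on degree and use the fact that every term of positive degree is either already in $\cO$, is a border term of $\cO$, or is a multiple $x_k b_i$ of a border term whose rewriting via $g_i$ produces only strictly lower-degree terms times $x_k$; this keeps the induction well founded. Everything else is bookkeeping with the rank hypothesis on $\cO(P)$.
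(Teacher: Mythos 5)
Your proof is correct and rests on the same two pillars the paper uses: the border-prebasis division/spanning fact, and the rank condition on the evaluation matrix coming from hypothesis~(1). The paper phrases this as a direct-sum decomposition $\ring = \ideal{G} \oplus \ideal{\cO}_K$ (division algorithm gives $\ring = \ideal{G} + \ideal{\cO}_K$; evaluations at $P$ give $\ideal{G} \cap \ideal{\cO}_K = \{0\}$ and $\cI(P) \cap \ideal{\cO}_K = \{0\}$, which then forces $\cI(P) \subseteq \ideal{G}$), whereas you recast the same facts as a dimension squeeze $\nu = \dim \ring/\cI(P) \le \dim \ring/\ideal{G} \le |\cO| = \nu$. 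These are isomorphic arguments; your framing is a bit tighter and makes the role of hypothesis~(1) as ``$\mathrm{ev}_P$ is surjective, hence $\dim \ring/\cI(P) = \nu$'' very explicit, while the paper cites the border division algorithm of Kehrein--Kreuzer directly rather than re-deriving the spanning claim.

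One caution on your ``Main obstacle'' paragraph: the termination fix you sketch there is not right. It is \emph{not} the case that every term $t \notin \cO \cup \partial\cO$ of positive degree has the form $x_k b_i$ with $b_i \in \partial\cO$ (take $\cO = \{1\}$ and $t = x^3$), nor is it true that the prebasis relation $g_i = b_i - \sum_j c_{ij} o_j$ involves only terms of degree strictly below $\deg b_i$ (e.g.\ $\cO = \{1,x,y,x^2\}$ has $b = y^2$ and $o = x^2$ of equal degree). The degree induction you set up in Step~1 \emph{does} go through, but the correct reason is different: after writing $t = x_k t'$ and using the inductive hypothesis $t' \equiv \sum_j c_j o_j$, you get $t \equiv \sum_j c_j\, x_k o_j$, and each product $x_k o_j$ lies in $x_k\cO \subseteq \cO \cup \partial\cO$ by the very definition of the border. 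The ones in $\partial\cO$ are then killed by a single application of a prebasis relation, with no further recursion. Since Step~1 is anyway a standard fact you could simply cite, this does not affect the validity of your overall proof, but the particular termination argument in that final paragraph should be replaced by the observation above.
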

\begin{proof}
    We first show that $G$ is the $\cO$-border basis of $\ideal{G}$. The set $G$ is a border prebasis and (obviously) generates $\ideal{G}$, so we only need to show that $\ring = \ideal{G} \oplus \ideal{\cO}_k$ holds. The border basis division algorithm~\cite{kehrein2005characterizations} allows us to represent any polynomial $f \in \ring$ as $f = g + h$ with some $g \in \ideal{G}$ and $h \in \ideal{\cO}_k$, which implies $\ring = \ideal{G} \cup \ideal{\cO}$. Note that this algorithm only requires $G$ to be a border prebasis (not necessarily a border basis).
    From the first assumption, $o_1(P), \ldots, o_\nu(P)$ form a basis of $K^\nu$, and thus, $h \in \ideal{\cO}_k$ has $h(P) = \bm{0}$ only when $h = 0$. Noting that $\forall g \in \ideal{G}, g(P) = \bm{0}$ from the second assumption, we have $\ideal{G} \cap \ideal{\cO}_k = \{0\}$. Thus, we have $\ring = \ideal{G} \oplus \ideal{\cO}$, and $G$ is the $\cO$-border basis of $\ideal{G}$. Furthermore, the fact that $\cI(P) \cap \ideal{\cO}_k = \{0\}$ implies $\cI(P) \subset \ideal{G}$. With the second assumption, we have $\cI(P) = \ideal{G}$, and thus, $G$ is the $\cO$-boder basis of $\cI(P)$.
\end{proof}

\BorderBasisSampling*  %

\begin{proof}[Proof of Theorem~\ref{thm:border-basis-sampling}]
    The existence of such basis set $\{\bm{v}_i\}_{i=1}^{\mu}$ readily follows from the assumption that $\cO(P)$ is full-rank. The second statement follows from Lemma~\ref{lemma:vanishing-ideal-border-basis}.
\end{proof}

\subsection{Comparison with Buchberger--M\"oller-family algorithms}\label{sec:comparison-with-bm-algorithm}

The Buchberger--M\"oller (BM) algorithm~\citep{moller1982construction,abbott2005computing} is a method that takes as input a set of points $P$ and computes a Gr\"obner basis of the vanishing ideal $\cI(P)$. This algorithm has been extensively studied and adapted to a variety of scenarios, not only in computer algebra~\cite{abbott2005computing,kehrein2006computing,abbott2008stable,BP2009,heldt2009approximate,fassino2010almost,limbeck2013computation,kera2022border,kera2024monomial}, but also in machine learning~\citep{livni2013vanishing,kiraly2014dual,hou2016discriminative,kera2018approximate,kera2019spurious,kera2020gradient,wirth2022conditional,wirth2023approximate,pelleriti2025latent}.

Border basis variants of the BM algorithm allow us to sample border bases from randomly generated sets of points and a term order $\prec$. However, only a restricted class of border bases can be sampled in this way. Recall that a border basis $G = \{\gs\}$ is associated with an order ideal $\cO$, and for each border term $b_i \in \partial\cO$, the corresponding border basis polynomial takes the form
\begin{align}
g_i = b_i - \sum_{t\in \cO} c_t t \in \ring, \quad \text{where } c_t \in K.
\end{align}

\begin{remark}
The $\cO$-border basis $G = \{\gs\}$ produced by BM-type algorithms with a term order $\prec$ is a special case satisfying $\lt{g_i} = b_i$ for all $i = 1,\ldots, s$. That is, the coefficients $c_t$ vanish for any $t \in \cO$ such that $t \succ b_i$. Due to this algebraic constraint, the collection of such border bases forms a Zariski-closed (i.e., measure-zero) subset in the set of all border bases.
\end{remark}

Such special border bases can also be characterized as those admitting some order ideal $\cO$ for which there exists an ``$\cO$-Gr\"obner basis’’ $G'$ generating the same ideal, satisfying $\lt{\ring}\setminus\lt{\ideal{G'}} = \cO$. While border bases are defined with respect to an order ideal $\cO$, Gr\"obner bases are defined with respect to a term ordering. The former is a strictly more general notion in the zero-dimensional case. For instance, one can find examples of border bases for which no corresponding Gr\"obner basis exists~\cite{BP2009}.

\section{Backward transform}\label{sec:backward-transformation-proof}

\subsection{Proof of Theorem.~\ref{thm:backward-transform}}
\BackwardTransform*
\begin{proof}
    We write an outline of the proof (see below for details). When $r<n$, the codimension of $\langle AG\rangle$ is less than $n$ and thus $\langle F\rangle\neq \langle G\rangle$ holds for any $A\in \ring^{r \times s}$. When $r\ge n$, $\langle F\rangle=\langle G\rangle\cap J$ for some ideal $J$. If $r=n$, $J\neq \ring$ for a generic matrix $A\in \ring^{r \times s}$ and thus $\ideal{F}\neq \ideal{G}$. If $r\ge n$, $J=\ring$ for a generic matrix $A\in \ring^{r \times s}$ and thus $\ideal{F}=\ideal{G}$. 
\end{proof}

To prove $\langle AG\rangle=\langle G\rangle$ for a generic matrix $A\in \ring^{r\times s}$, we consider a parametric matrix $\overline{A}$. Fix a positive integer $d\ge 1$. 
Let $h_{ij}=\sum_{|\alpha|\le d}a_{\alpha, ij}X^\alpha$ be a parametric polynomial of total degree $d$ with parameters $a_{\alpha, ij}$ and $\overline{A}=(h_{ij})$ an $r\times s$ matrix with $h_{ij}$ as $(i,j)$ entry. Let $\mathcal{A}=\{a_{\alpha, ij}\}$ be the set of all parameters of $\overline{A}$ and $D$ the cardinality of $\mathcal{A}$. For a generating set $G=\{g_1,\ldots,g_s\}$ of a zero-dimensional ideal $\mathcal{\cI}$, let $f_i=\sum_{j=1}^s h_{ij}g_i$, that is, $\overline{A}G=\{f_1,\ldots,f_r\}$. For a subset $S$ in $\ring$, we also write $\langle S\rangle_{\ring}$ when we emphasize that $\langle S\rangle$ is an ideal of $\ring$. Obviously, $\langle \overline{A}G\rangle_{K[\mathcal{A},X]}\subset \langle G\rangle_{K[\mathcal{A},X]}$. For a point $q\in K^D$, we denote by $\sigma_q$ the substitution map from $K[\mathcal{A},X]\to \ring$, where $\sigma_q(f(\mathcal{A},X))=f(q,X)$. Then, $A:=\sigma_q(\overline{A}):=(\sigma_q(h_{ij}))\in \ring^{r\times s}$ is a generic matrix for a generic $q\in K^D$. Thus, it is enough to show that there exists a dense set $C$ of $K^D$ such that $\langle \sigma_q(\overline{A})G\rangle=\langle G\rangle$ for any $q\in C$. As $\langle\sigma_q(\overline{A})G\rangle\subset \langle G\rangle$ always holds, the inverse inclusion implies the equality. First, we recall some fundamental notions of Commutative Algebra as follows. 

\begin{definition}[Primary Decomposition]
    Let $\cI$ be an ideal. A finite set of primary ideals $\{Q_1,\ldots,Q_l\}$ is a primary decomposition of $\cI$ if $\cI=Q_1\cap \cdots \cap Q_l$. For a minimal primary decomposition, i.e., the length $l$ is minimal among all primary decompositions of $\cI$, each element $Q_i$ is called an isolated primary component of $\cI$ if $\sqrt{Q_i}\not \subset \sqrt{Q_j}$ for any $j\neq i$, where $\sqrt{Q_i}$ is the radical ideal of $Q_i$.   %
\end{definition}

\begin{remark}[\cite{atiyah1994introduction}, Corollary 4.11]\label{rem:isolated}
    The isolated primary components of $\cI$ are uniquely determined from $\cI$. In other words, they are independent of a particular primary decomposition of $\cI$. 
\end{remark}

Independent sets are a useful tool to compute the dimension of a polynomial ideal. 

\begin{definition}[Independent Set]
    Let $U$ be a subset of $\mathcal{A}\cup X$. For an ideal $\cI$ of $K[\mathcal{A},X]$, $U$ is called an independent set mod $\cI$ if $\cI\cap K[U]=\{0\}$. 
\end{definition}

\begin{remark}[\cite{singular}, Theorem 3.5.1 (6)] \label{rem:mis}
    The cardinality of an independent set mod $\cI$ is less than or equal to the dimension of $\cI$. In other words, if $|U|>\dim (\cI)$ then $U$ is not an independent set mod $\cI$. 
\end{remark}

Let $\iota$ be the inclusion map $\iota:K[\mathcal{A},X]\to K(X)[\mathcal{A}]$ such that $\iota(f)=f$. For a prime ideal $\cI$ of $K(X)[\mathcal{A}]$, the inverse image $\iota^{-1}(\cI)=\cI\cap K[\mathcal{A},X]$ is also prime as follows.

\begin{lemma}[\cite{becker1993groebner}, Lemma~1.122 and Lemma~1.123] \label{lem:contraction}
    Let $\cI$ be a prime ideal of $K(X)[\mathcal{A}]$. Then, $\cI\cap K[\mathcal{A},X]$ is a prime ideal of $K[\mathcal{A},X]$.
\end{lemma}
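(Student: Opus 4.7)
The plan is to recognize this as the classical fact that the contraction of a prime ideal under any ring homomorphism is itself prime, specialized to the inclusion homomorphism $\iota : K[\mathcal{A},X] \hookrightarrow K(X)[\mathcal{A}]$. Concretely, I would need to verify three things for $\cI \cap K[\mathcal{A},X]$: (i) that it is an ideal of $K[\mathcal{A},X]$, (ii) that it is a proper subset, and (iii) that it satisfies the primality condition.

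For (i), closure under addition follows immediately from $\cI$ being an ideal of $K(X)[\mathcal{A}]$. For closure under multiplication by an arbitrary $r \in K[\mathcal{A},X]$, note that if $f \in \cI \cap K[\mathcal{A},X]$ then $rf$ lies in $K[\mathcal{A},X]$ by the ring structure, and $\iota(rf) = \iota(r)\iota(f) \in \cI$ because $\iota(r) \in K(X)[\mathcal{A}]$ and $\cI$ absorbs multiplication there; so $rf \in \cI \cap K[\mathcal{A},X]$. For (ii), properness follows because $1 \in K[\mathcal{A},X]$ but $1 \notin \cI$ (since $\cI$ is prime, hence proper).

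For (iii), suppose $f,g \in K[\mathcal{A},X]$ with $fg \in \cI \cap K[\mathcal{A},X]$. Applying $\iota$, which is a ring homomorphism, gives $\iota(f)\iota(g) = \iota(fg) \in \cI$. Since $\cI$ is prime in $K(X)[\mathcal{A}]$, either $\iota(f) \in \cI$ or $\iota(g) \in \cI$. As $\iota$ is the inclusion, this means $f \in \cI \cap K[\mathcal{A},X]$ or $g \in \cI \cap K[\mathcal{A},X]$, establishing primality.

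I do not anticipate any genuine obstacle: the entire argument is formal and follows the standard contraction principle. The only point that warrants a brief remark is that multiplication in $K[\mathcal{A},X]$ agrees with multiplication in $K(X)[\mathcal{A}]$ on elements of the smaller ring, which is automatic because $\iota$ is the inclusion (and in particular a ring homomorphism). No property specific to polynomial rings, to the field $K(X)$, or to the variable separation into $\mathcal{A}$ and $X$ is needed; the lemma holds for any injective ring homomorphism between commutative rings with unity.
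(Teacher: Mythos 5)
Your proof is correct and is exactly the standard contraction-of-prime-ideals argument; the paper does not prove this lemma but cites it directly from Becker--Weispfenning--Kredel, so your write-up essentially reconstructs the textbook proof behind that citation. One minor remark on your closing observation: what is essential is only that the map is a ring homomorphism (preimages of prime ideals under arbitrary ring homomorphisms are prime); injectivity is what lets one phrase the preimage as an intersection, as done here, but it is not needed for primality to be preserved.
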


The following lemma is a simple method for determining \gb bases under a certain condition.

\begin{lemma}[\cite{becker1993groebner}, Lemma~5.66 and Theorem 5.68] \label{lem:disjoint}
    Let $\succ$ be a term ordering and 
    $G=\{f_1,\ldots,f_r\}$. If $\lm{f_i}$ and $\lm{f_j}$ are disjoint for any $i\neq j\in \{1,\ldots,r\}$, then $G$ is a \gb basis of $\langle G\rangle$ with respect to $\succ$. 
\end{lemma}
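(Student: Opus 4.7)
The plan is to invoke Buchberger's criterion, which states that $G=\{f_1,\ldots,f_r\}$ is a Gröbner basis of $\ideal{G}$ with respect to $\succ$ iff every S-polynomial $S(f_i,f_j)$, $i\ne j$, admits a standard representation in $G$—equivalently, reduces to zero modulo $G$. So it suffices to verify that the disjointness hypothesis $\gcd(\lm{f_i},\lm{f_j})=1$ forces $S(f_i,f_j)\to_G 0$ for each pair.

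Fix such a pair and write $f_i = c_i\,\lm{f_i}+r_i$ and $f_j = c_j\,\lm{f_j}+r_j$, where $c_i,c_j\in K^{\times}$ are the leading coefficients and $r_i,r_j$ are the tails, all of whose monomials are strictly $\prec \lm{f_i}$ respectively $\lm{f_j}$. Disjointness gives $\lcm{\lm{f_i},\lm{f_j}}=\lm{f_i}\lm{f_j}$, so by definition
\begin{align}
S(f_i,f_j) \;=\; \frac{\lm{f_j}}{c_i}\,f_i \;-\; \frac{\lm{f_i}}{c_j}\,f_j.
\end{align}
The key algebraic step is to substitute $\lm{f_i}=(f_i-r_i)/c_i$ and $\lm{f_j}=(f_j-r_j)/c_j$ into this expression; the two $f_i f_j$ contributions cancel, leaving the compact identity
\begin{align}
S(f_i,f_j) \;=\; \frac{1}{c_i c_j}\bigl(r_i f_j - r_j f_i\bigr).
\end{align}

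This expresses $S(f_i,f_j)$ as an element of $\ideal{f_i,f_j}\subseteq\ideal{G}$, and the required leading-monomial estimate is now immediate: the term $r_i f_j$ has leading monomial at most $\lm{r_i}\cdot\lm{f_j}\prec \lm{f_i}\lm{f_j}=\lcm{\lm{f_i},\lm{f_j}}$, since $\lm{r_i}\prec \lm{f_i}$ by construction of the tail and $\succ$ is multiplicatively compatible; the analogous bound holds for $r_j f_i$. Both summands on the right-hand side therefore have leading monomial strictly below $\lcm{\lm{f_i},\lm{f_j}}$, which is precisely the definition of a standard representation of $S(f_i,f_j)$ with respect to $\{f_i,f_j\}\subseteq G$. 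Buchberger's criterion then gives $S(f_i,f_j)\to_G 0$, and since this holds for every pair $i\ne j$, $G$ is a Gröbner basis of $\ideal{G}$ with respect to $\succ$. The main obstacle is bookkeeping rather than conceptual: verifying the cancellation cleanly and being pedantic about the strict inequality $\lm{r_i}\lm{f_j}\prec \lm{f_i}\lm{f_j}$, which uses only that $\succ$ respects multiplication by monomials; the conceptual work is encapsulated in Buchberger's criterion itself.
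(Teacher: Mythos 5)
The paper gives no proof of this lemma at all—it is quoted from Becker--Weispfenning (their Lemma~5.66 and Theorem~5.68)—so there is no internal argument to compare against. Your proposal reconstructs the classical proof of Buchberger's first criterion, which is essentially what the cited source does: the identity $S(f_i,f_j)=\tfrac{1}{c_ic_j}\bigl(r_i f_j - r_j f_i\bigr)$ is correct, and the bound $\lm{r_i f_j}=\lm{r_i}\,\lm{f_j}\prec\lm{f_i}\,\lm{f_j}=\lcm{\lm{f_i},\lm{f_j}}$ is indeed the heart of the matter.

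One step should be tightened. A representation whose summands are bounded strictly below the lcm is \emph{not} by definition a standard representation: a standard representation requires each summand's leading monomial to be $\preceq\lm{S(f_i,f_j)}$, and a priori the tops of $r_i f_j$ and $-r_j f_i$ could cancel, making $\lm{S(f_i,f_j)}$ strictly smaller than either bound. Two clean fixes: (i) invoke the criterion in exactly the form you verified, namely that $G$ is a \gb basis if every S-polynomial admits a representation all of whose summands have leading monomials strictly below the corresponding lcm (this $t$-representation version is precisely what Becker--Weispfenning's Lemma~5.66 feeds into their Theorem~5.68); or (ii) rule out top cancellation directly: if $\lm{r_i}\,\lm{f_j}=\lm{r_j}\,\lm{f_i}$, then $\lm{f_i}$ divides $\lm{r_i}\,\lm{f_j}$, and coprimality forces $\lm{f_i}\mid\lm{r_i}$, contradicting $\lm{r_i}\prec\lm{f_i}$; hence your representation is in fact standard (the degenerate cases $r_i=0$ or $r_j=0$ are immediate). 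Relatedly, your parenthetical ``admits a standard representation---equivalently, reduces to zero'' holds at the level of the Gröbner-basis criterion, not for an individual polynomial with respect to an arbitrary $G$; with either fix the argument is complete and matches the textbook route the paper cites.
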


Recall that $\cI:h=\{f\in K[\mathcal{A},X]\mid fh\in \cI\}$ is the ideal quotient of an ideal $\cI$ with respect to a polynomial $h$. The ideal quotient can be used to break the ideal $\cI$ into two ideals as follows. 

\begin{lemma}[\cite{singular}, Lemma~3.3.6 (Splitting tool)] \label{lem:split}
    Let $\cI$ be an ideal of $K[\mathcal{A},X]$ and $h$ a polynomial of $K[\mathcal{A},X]$. If $\cI:h=\cI:h^2$, then $\cI=(\cI:h)\cap (\cI+\langle h\rangle)$.
\end{lemma}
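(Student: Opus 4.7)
The plan is to prove the Splitting Tool by showing both inclusions, where the nontrivial direction uses the hypothesis $\cI:h=\cI:h^2$ in a single decisive step.

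First I would dispose of the easy inclusion $\cI \subseteq (\cI:h)\cap(\cI+\langle h\rangle)$. Every $f\in\cI$ satisfies $fh\in\cI$, so $f\in\cI:h$; and $f=f+0\cdot h$ exhibits $f$ as a member of $\cI+\langle h\rangle$. Neither step uses the hypothesis on ideal quotients.

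The substantive direction is $(\cI:h)\cap(\cI+\langle h\rangle)\subseteq \cI$. I would take an arbitrary element $f$ of the intersection, use the membership $f\in\cI+\langle h\rangle$ to write $f=g+hk$ for some $g\in\cI$ and some $k\in K[\mathcal{A},X]$, and then multiply by $h$ to get
\begin{equation*}
fh \;=\; gh + h^{2}k.
\end{equation*}
Since $f\in\cI:h$ gives $fh\in\cI$ and $g\in\cI$ gives $gh\in\cI$, subtracting yields $h^{2}k\in\cI$, i.e.\ $k\in\cI:h^{2}$. This is the one place the hypothesis bites: invoking $\cI:h^{2}=\cI:h$ upgrades this to $k\in\cI:h$, which by definition means $hk\in\cI$. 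Combined with $g\in\cI$, we conclude $f=g+hk\in\cI$, completing the reverse inclusion.

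There is no real obstacle here; the only subtlety is recognising that the hypothesis $\cI:h=\cI:h^{2}$ is exactly the bridge needed to move from $h^{2}k\in\cI$ (which falls out automatically after multiplying the decomposition $f=g+hk$ by $h$) to the sharper statement $hk\in\cI$ that collapses the decomposition of $f$ back into $\cI$. Without the hypothesis, one cannot rule out elements $k$ lying in $\cI:h^{2}\setminus \cI:h$, and the reverse inclusion can genuinely fail. A brief remark on this necessity would round out the presentation, but the formal proof itself is the five lines sketched above.
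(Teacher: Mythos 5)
Your proof is correct and is exactly the standard argument for this splitting tool; the paper itself does not prove the lemma but cites it from the Singular textbook (Lemma~3.3.6), whose proof proceeds the same way. The decisive step — writing $f = g + hk$, multiplying by $h$ to get $k \in \cI:h^2 = \cI:h$, and hence $hk \in \cI$ — is precisely where the hypothesis is needed, as you note.
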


Let $\mathcal{PT}_D$ be the set of all terms of $K[\mathcal{A}]$. Then, $\mathcal{PT}_D\times \mathcal{T}_n:=\{\mathcal{A}^\alpha X^\beta\mid \mathcal{A}^\alpha\in \mathcal{PT}_D,X^\beta\in \mathcal{T}_n\}$ is the set of all terms of $K[\mathcal{A},X]$. For a term ordering $\succ_X$ on $\mathcal{T}_n$ and $\succ_\mathcal{A}$ on $\mathcal{PT}_D$, the product ordering $\succ_{X\times \mathcal{A}}$ is the ordering such that $\mathcal{A}^{\alpha_1} X^{\beta_1}\succ_{X\times \mathcal{A}}\mathcal{A}^{\alpha_2} X^{\beta_2}$ if $X^{\beta_1}\succ_X X^{\beta_2}$ or "$X^{\beta_1}=X^{\beta_2}$ and $\mathcal{A}^{\alpha_1}\succ_\mathcal{A} \mathcal{A}^{\alpha_2}$". In contrast, the product ordering $\succ_{\mathcal{A} \times X}$ is the ordering such that $\mathcal{A}^{\alpha_1} X^{\beta_1}\succ_{\mathcal{A}\times X}\mathcal{A}^{\alpha_2} X^{\beta_2}$ if $\mathcal{A}^{\alpha_1}\succ_{\mathcal{A}} \mathcal{A}^{\alpha_2}$ or "$\mathcal{A}^{\alpha_1}= \mathcal{A}^{\alpha_2}$ and $X^{\beta_1}\succ_X X^{\beta_2}$". For a product ordering $\succ_{Y_1\times Y_2}$, we denote by $\lcp_{Y_2}(f)\in K[Y_2]$ the leading coefficient of $f$ in $K(Y_2)[Y_1]$ with respect to $\succ_{Y_1}$ for $f\in K[Y_1,Y_2]$. We also say that $\succ_{Y_1\times Y_2}$ is a block ordering with $Y_1\succ\succ Y_2$. 

\begin{lemma}[\cite{suzuki-sato}, Lemma 2.2] \label{lem:CGS}
Let $G$ be a Gr\"obner basis of an ideal $\langle F\rangle$ in $K[\mathcal{A},X]$ with respect to a product ordering $\succ_{X\times \mathcal{A}}$. If $\sigma_q(\lcp_{\mathcal{A}}(g)) \neq 0$ for each $g \in G\setminus K[\mathcal{A}]$, then for any $q\in V(\langle G\cap K[\mathcal{A}]\rangle)\subset K^D$, $\sigma_q(G)$ is a Gr\"obner basis of $\langle \sigma_q(F)\rangle$ in $K[X]$ with respect to $\succ_X$.
\end{lemma}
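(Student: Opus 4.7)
The plan is to verify Buchberger's criterion for $\sigma_q(G)$ in $K[X]$ with respect to $\succ_X$. First, applying $\sigma_q$ to the mutual ideal containments $F\subset \langle G\rangle$ and $G\subset \langle F\rangle$ in $K[\mathcal{A},X]$ gives the ideal equality $\langle \sigma_q(F)\rangle = \langle \sigma_q(G)\rangle$ in $K[X]$. For $g\in G\cap K[\mathcal{A}]$ the hypothesis $q\in V(\langle G\cap K[\mathcal{A}]\rangle)$ yields $\sigma_q(g)=g(q)=0$, so such elements contribute nothing to the Buchberger test. For $g\in G\setminus K[\mathcal{A}]$ the hypothesis $\gamma_g := \sigma_q(\lcp_{\mathcal{A}}(g))\neq 0$ guarantees that the $\succ_X$-leading monomial $\mathrm{LM}_X(\sigma_q(g))$ equals $\mathrm{LM}_X(g)$, with leading $K$-coefficient $\gamma_g\in K^{\ast}$.

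The next step is to show, for each pair $g_1,g_2\in G\setminus K[\mathcal{A}]$, that the $\succ_X$-S-polynomial $S_X(\sigma_q(g_1),\sigma_q(g_2))$ admits a standard representation in terms of $\sigma_q(G)$. The key idea is to lift this computation back to $K[\mathcal{A},X]$ by setting, with $L_i=\mathrm{LM}_X(g_i)$ and $L=\lcm(L_1,L_2)$,
\[
\tilde{S} \;:=\; \frac{L\,\lcp_{\mathcal{A}}(g_2)}{L_1}\,g_1 \;-\; \frac{L\,\lcp_{\mathcal{A}}(g_1)}{L_2}\,g_2 \;\in\; \langle g_1,g_2\rangle \subset \langle G\rangle.
\]
A direct expansion shows that the two $X$-highest contributions cancel, so $\mathrm{LM}_X(\tilde{S})\prec_X L$, and evaluating at $q$ gives $\sigma_q(\tilde{S})=\gamma_1\gamma_2\, S_X(\sigma_q(g_1),\sigma_q(g_2))$.

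Since $G$ is a $\succ_{X\times\mathcal{A}}$-Gr\"obner basis of $\langle G\rangle$ and $\tilde{S}\in\langle G\rangle$, the multivariate division algorithm yields a standard representation $\tilde{S}=\sum_i h_i g_i$ with $\mathrm{LM}_{X\times\mathcal{A}}(h_i g_i)\preceq_{X\times\mathcal{A}} \mathrm{LM}_{X\times\mathcal{A}}(\tilde{S})$. Because $\succ_{X\times\mathcal{A}}$ compares $X$-parts first, this entails $\mathrm{LM}_X(h_i g_i)\preceq_X \mathrm{LM}_X(\tilde{S})\prec_X L$. Applying $\sigma_q$, dropping the now-zero contributions coming from $g_i\in G\cap K[\mathcal{A}]$, and using both that $\sigma_q$ never raises $X$-degrees and that $\mathrm{LM}_X(g_i)$ is preserved for $g_i\in G\setminus K[\mathcal{A}]$, I obtain
\[
\gamma_1\gamma_2\, S_X(\sigma_q(g_1),\sigma_q(g_2)) \;=\; \sum_{g_i\in G\setminus K[\mathcal{A}]} \sigma_q(h_i)\,\sigma_q(g_i),
\]
in which each nonzero summand satisfies $\mathrm{LM}_X(\sigma_q(h_i)\sigma_q(g_i))\prec_X L$. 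Division by $\gamma_1\gamma_2\in K^{\ast}$ delivers a standard representation of the $K[X]$-S-polynomial with respect to $\sigma_q(G)$, so Buchberger's criterion completes the argument.

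The main obstacle is controlling how the two orderings interact under specialisation: $\sigma_q$ can collapse coefficients in $K[\mathcal{A}]$ to zero and thereby lower $\mathrm{LM}_X(\sigma_q(g_i))$ or $\mathrm{LM}_X(\sigma_q(h_i))$, which would destroy the leading-term bookkeeping needed for Buchberger's criterion. The two hypotheses are exactly what is needed to prevent this for the leading generators that matter, and the structural fact that $\succ_{X\times\mathcal{A}}$ compares $X$-parts first is what allows the $K[\mathcal{A},X]$-side standard representation to descend cleanly to a $K[X]$-side standard representation after specialisation.
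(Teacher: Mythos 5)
Your proof is correct, but note that the paper never proves this statement itself: it is imported verbatim from Suzuki--Sato (their Lemma~2.2) and used as a black box in the proof of Theorem~\ref{thm:backward-transform}, so there is no internal proof to compare against. What you have written is a complete, self-contained version of the standard specialization argument behind that lemma: $\langle\sigma_q(F)\rangle=\langle\sigma_q(G)\rangle$ because $\sigma_q$ is a ring homomorphism applied to the two inclusions; elements of $G\cap K[\mathcal{A}]$ vanish at $q\in V(\langle G\cap K[\mathcal{A}]\rangle)$; the hypothesis $\sigma_q(\lcp_{\mathcal{A}}(g))\neq 0$ preserves the $\succ_X$-leading monomial of each $g\in G\setminus K[\mathcal{A}]$; each specialized S-polynomial is lifted to $\tilde S\in K[\mathcal{A},X]$ whose $X$-leading parts cancel, and dividing $\tilde S$ by the block-order Gr\"obner basis $G$ gives a standard representation whose bound on leading monomials descends, via the ``$X$-part first'' property of $\succ_{X\times\mathcal{A}}$, to an lcm-representation in $K[X]$; the refined Buchberger criterion (lcm/$t$-representations, e.g.\ Becker--Weispfenning, Thm.~5.64) then closes the argument. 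A few points you leave implicit are worth a line each: $\sigma_q(g)\neq 0$ for $g\in G\setminus K[\mathcal{A}]$ (so the conclusion is about the nonzero part of $\sigma_q(G)$, the images of $G\cap K[\mathcal{A}]$ being $0$); the degenerate case $\tilde S=0$, where the specialized S-polynomial is also $0$ and nothing needs proving; and the step $\mathrm{LM}_X(\sigma_q(h_i)\sigma_q(g_i))\preceq_X \mathrm{LM}_X(h_i)\,\mathrm{LM}_X(g_i)=\mathrm{LM}_X(h_i g_i)$, which uses that $K[\mathcal{A}]$ and $K[X]$ are integral domains together with the fact that $\sigma_q$ can only delete or merge monomials with the same $X$-part. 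With these remarks made explicit, your argument is a valid proof of the cited lemma.
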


We recall that the codimension of an ideal $\cI$ of $K[Y]$, denoted by $\codim{\cI}$, is equal to $|Y|-\dim(\cI)$. The parametric ideal $\langle \overline{A}G\rangle=\langle f_1,\ldots,f_r\rangle$ satisfies the following properties. 

\begin{proposition}\label{prop:AG}
    
Let $\langle \overline{A}G\rangle^e=\langle \overline{A}G\rangle_{K(X)[\mathcal{A}]}$ and $\langle \overline{A}G\rangle^{ec}=\langle \overline{A}G\rangle^e\cap K[\mathcal{A},X]$. Then, 
    \begin{enumerate}
        \item $\langle \overline{A}G\rangle^{ec}$ is a prime ideal of $K[\mathcal{A},X]$,
        \item $\overline{A}G$ is a \gb basis of $\langle \overline{A}G\rangle^e$ with respect to an arbitrary term ordering on $K(X)[\mathcal{A}]$,
        \item Fix a term ordering $\succ$ on $K(X)[\mathcal{A}]$. For $h=\lcm{\lcp_\succ(f_1),\ldots,\lcp_\succ(f_r)}\in K[X]$, 
    \[
   \langle \overline{A}G\rangle^{ec}=\langle \overline{A}G\rangle :h=\langle \overline{A}G\rangle :h^2,
    \]
        \item $\codim{\langle \overline{A}G\rangle}\le n$ and $\codim{\langle \overline{A}G\rangle^{ec}}=r$. If $r\ge n$, then $\langle \overline{A}G\rangle^{ec}\not \subset \sqrt{\langle G\rangle}$. 
    \end{enumerate}
\end{proposition}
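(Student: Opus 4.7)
}
The key structural observation driving the whole proof is that $f_i=\sum_{j,\alpha}a_{\alpha,ij}X^{\alpha}g_j$ is \emph{linear} in the parameter block $\mathcal{A}_i=\{a_{\alpha,ij}\mid j,\alpha\}$, and the blocks $\mathcal{A}_1,\dots,\mathcal{A}_r$ are pairwise disjoint. Consequently, for every term ordering $\succ$ on $K(X)[\mathcal{A}]$, the leading monomial $\lm_\succ(f_i)$ is a single variable belonging to $\mathcal{A}_i$, and the $\lm_\succ(f_i)$'s are pairwise disjoint. Lemma~\ref{lem:disjoint} then yields (2) directly: $\overline{A}G$ is a Gr\"obner basis of $\langle \overline{A}G\rangle^{e}$ in $K(X)[\mathcal{A}]$ with respect to any such $\succ$.

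For (1), I would use the same linear-disjoint structure to describe the quotient explicitly. Writing the leading pivot of $f_i$ as $a^{(i)}_\star\in\mathcal{A}_i$, the relations $f_i=0$ let us solve $a^{(i)}_\star$ as a $K(X)$-rational linear function of the remaining parameters $\mathcal{A}':=\mathcal{A}\setminus\{a^{(1)}_\star,\ldots,a^{(r)}_\star\}$. This exhibits an isomorphism $K(X)[\mathcal{A}]/\langle \overline{A}G\rangle^{e}\cong K(X)[\mathcal{A}']$, which is an integral domain, so $\langle \overline{A}G\rangle^{e}$ is prime; Lemma~\ref{lem:contraction} then hands us (1).

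For (3), I would argue in two steps. First, the general identity $\langle \overline{A}G\rangle^{ec}=\langle \overline{A}G\rangle:h^{\infty}$ is the standard contraction-equals-saturation fact for Gr\"obner bases when $h$ is a common multiple of the leading coefficients (any reduction in $K(X)[\mathcal{A}]$ only introduces denominators dividing $\lcp_\succ(f_i)$). Second, to collapse the chain $\langle \overline{A}G\rangle:h\subset\langle \overline{A}G\rangle:h^{2}\subset\cdots$ to a single step, I would combine two observations: by (1), $\langle \overline{A}G\rangle^{ec}$ is prime and $h\in K[X]\setminus\{0\}$ is a unit of $K(X)[\mathcal{A}]$, hence $h\notin\langle \overline{A}G\rangle^{ec}$; and because each $f_i$ is \emph{linear} in its pivot $a^{(i)}_\star$, a single substitution/cofactor step clears the denominators -- so if $h^{2}f\in\langle \overline{A}G\rangle$, then after removing one factor of $h$ via the pivot substitutions one lands in $\langle \overline{A}G\rangle$ already, giving $\langle \overline{A}G\rangle:h^2\subseteq\langle \overline{A}G\rangle:h$. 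I would then note that Lemma~\ref{lem:split} packages this equivalently as the splitting $\langle \overline{A}G\rangle=(\langle \overline{A}G\rangle:h)\cap(\langle \overline{A}G\rangle+\langle h\rangle)$, which provides an alternative, symmetric route to the same conclusion.

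For (4), the inequality $\codim\langle \overline{A}G\rangle\le n$ is immediate because $f_i\in\langle G\rangle_{K[\mathcal{A},X]}$ gives $\langle \overline{A}G\rangle\subseteq\langle G\rangle_{K[\mathcal{A},X]}$, and the latter has codimension $n$ since $\langle G\rangle\subset\ring$ is zero-dimensional. For $\codim\langle \overline{A}G\rangle^{ec}=r$, the Gr\"obner-basis description in (2) shows that $K(X)[\mathcal{A}]/\langle \overline{A}G\rangle^{e}$ has Krull dimension $D-r$; since $\langle \overline{A}G\rangle^{e}$ contains no nonzero element of $K[X]$ (every $f_i$ has zero $\mathcal{A}$-constant term), $X$ is an independent set modulo $\langle \overline{A}G\rangle^{ec}$ and combines with the $D-r$ free parameters to give a maximal independent set of size $n+D-r$, whence $\codim\langle \overline{A}G\rangle^{ec}=r$ by Remark~\ref{rem:mis}. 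Finally, suppose $\langle \overline{A}G\rangle^{ec}\subseteq\sqrt{\langle G\rangle}$. Since $\langle \overline{A}G\rangle^{ec}$ is prime it lies in some minimal prime $\mathfrak{q}$ of $\langle G\rangle_{K[\mathcal{A},X]}$; such $\mathfrak{q}$ is the extension of a minimal prime $\mathfrak{q}_0\subset K[X]$ of $\langle G\rangle$ and thus has codimension $n$. If $r>n$ this directly contradicts $\codim\langle \overline{A}G\rangle^{ec}=r$; if $r=n$, then $\langle \overline{A}G\rangle^{ec}$ and $\mathfrak{q}$ have equal codimension, so equality would follow, but $\mathfrak{q}\cap K[X]=\mathfrak{q}_0\ne(0)$ while $\langle \overline{A}G\rangle^{ec}\cap K[X]=(0)$, a contradiction. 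The main obstacle I anticipate is the saturation stabilization in (3): transforming ``linear in the pivot'' into a clean one-step denominator-clearing argument is the delicate bookkeeping step, and is also where the choice of product/block ordering may need to be pinned down to keep the leading coefficients under control.
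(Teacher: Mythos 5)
Your plan matches the paper's argument step for step: (1) linear forms in disjoint parameter blocks generate a prime in $K(X)[\mathcal{A}]$, contracted via Lemma~\ref{lem:contraction}; (2) disjoint $\lm$'s plus Lemma~\ref{lem:disjoint}; (3) the one-division-per-generator observation controls the denominators; (4) codimension comparison plus the fact that $\langle\overline{A}G\rangle^{ec}\cap K[X]=(0)$. Two points need tightening. In (4), your independent-set argument only delivers half the claim: exhibiting an independent set $X\cup\mathcal{A}'$ of size $n+D-r$ gives, via Remark~\ref{rem:mis}, $\dim\langle\overline{A}G\rangle^{ec}\ge n+D-r$, i.e.\ $\codim\le r$; you assert the set is \emph{maximal} but don't prove it, and the remark as stated in the paper provides only the one-sided inequality. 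The paper gets $\codim\langle\overline{A}G\rangle^{ec}=r$ directly from the fact that contraction along the localization $K[\mathcal{A},X]\to K(X)[\mathcal{A}]$ preserves the height of primes missing $K[X]\setminus\{0\}$, so $\codim\langle\overline{A}G\rangle^{ec}=\codim\langle\overline{A}G\rangle^{e}=r$; you should either invoke that or supply the reverse inequality. In (3), "a single substitution/cofactor step clears the denominators'' conveys the right intuition, but the load-bearing step is the paper's explicit one: reduce $g\in\langle\overline{A}G\rangle^{ec}$ by the Gr\"obner basis $\overline{A}G$ over $K(X)$ to get $g=\sum c_i f_i$ with $\lcp_\succ(f_i)c_i\in K[\mathcal{A},X]$ because each $f_i$ is used at most once, whence $hc_i\in K[\mathcal{A},X]$ and $hg\in\langle\overline{A}G\rangle$; the auxiliary observation that $h\notin\langle\overline{A}G\rangle^{ec}$ is not actually used in closing the chain. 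Your treatment of the final clause of (4)---identifying a minimal prime $\mathfrak q=\mathfrak q_0 K[\mathcal{A},X]$ of codimension $n$ containing $\langle\overline{A}G\rangle^{ec}$ and contrasting $\mathfrak q\cap K[X]=\mathfrak q_0\ne(0)$ with $\langle\overline{A}G\rangle^{ec}\cap K[X]=(0)$---is cleaner than the paper's brief "$\langle\overline{A}G\rangle^{ec}=\sqrt{\langle G\rangle}$ hence $g_1$ is a unit'' and makes the catenary/height bookkeeping explicit; both routes are correct.
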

\begin{proof} 
    \begin{enumerate}
        \item Since $\overline{A}G$ is a set of linear polynomials over $K(X)$, $\langle \overline{A}G\rangle^e$ is a prime ideal of $K(X)[\mathcal{A}]$. By Lemma~\ref{lem:contraction}, $\langle \overline{A}G\rangle^{e}\cap K[\mathcal{A},X]$ is a prime ideal of $K[\mathcal{A},X]$.
        \item Since $f_i$ and $f_j$ do not have common variables except $X$, $\lm{f_i}$ and $\lm{f_j}$ are disjoint for any $i\neq j\in \{1,\ldots,r\}$ with respect to any term ordering on $K(X)[\mathcal{A}]$. Thus, by Lemma~\ref{lem:disjoint}, $\overline{A}G$ is a \gb basis of $\langle \overline{A}G\rangle^e$ with respect to an arbitrary term ordering on $K(X)[\mathcal{A}]$
        \item First, we show that $\langle \overline{A}G\rangle :h^2\subset \langle \overline{A}G\rangle^{ec}$. For $g\in \langle \overline{A}G\rangle :h^2$, it follows that $gh^2\in \langle \overline{A}G\rangle$ and thus $g\in \langle \overline{A}G\rangle^{ec}$ since $h^2\in K[X]$ has the inverse $h^{-2}$ in $K(X)[\mathcal{A}]$. Second, prove that $\langle \overline{A}G\rangle^{ec}\subset \langle \overline{A}G\rangle :h$. Let $g\in \langle \overline{A}G\rangle^{ec}$ and fix a term ordering $\succ$ on $K(X)[\mathcal{A}]$. From (2), $g$ can be divided by the Gr\"obner basis $\overline{A}G$ of $\langle \overline{A}G\rangle^e$ with respect to  $\succ$ according to the division algorithm as
        \[
        g=c_1f_1+\cdots +c_rf_r
        \]
        where $c_1,\ldots,c_r\in K(X)$. Since $f_1,\ldots,f_r$ are linear on $\mathcal{A}$, the number of times to divide by $f_1,\ldots,f_r$ is only once each. Thus, $\lcp_\succ(f_i)c_i\in K[\mathcal{A},X]$ and $hc_i\in K[\mathcal{A},X]$ for each $i$, then
        \[
        hg=hc_1f_1+\cdots +hc_rf_r\in \langle \overline{A}G\rangle.  
        \]
        Therefore, $g\in \langle \overline{A}G\rangle:h$ and $\langle \overline{A}G\rangle^{ec}\subset \langle \overline{A}G\rangle :h$. As $gh\in \langle \overline{A}G\rangle$ implies $gh^2\in \langle \overline{A}G\rangle$, it follows that $\langle \overline{A}G\rangle :h\subset \langle \overline{A}G\rangle :h^2$. Finally, from $\langle \overline{A}G\rangle:h^2\subset \langle \overline{A}G\rangle^{ec}\subset \langle \overline{A}G\rangle:h\subset \langle \overline{A}G\rangle:h^2$, we obtain the equations. 
        \item Since $\langle G\rangle$ is a zero-dimensional ideal of $K[X]$, $\codim{\langle G\rangle_{K[\mathcal{A},X]}}=\codim{\langle G\rangle}=n$. As $\langle \overline{A}G\rangle\subset\langle G\rangle$, $\codim{\langle \overline{A}G\rangle}\le \codim{\langle G\rangle}=n$. Since $\codim{\langle \overline{A}G\rangle^{ec}}=\codim{\langle \overline{A}G\rangle^e}$ and $\overline{A}G$ consists of independent $r$-linear polynomials over $K(X)[\mathcal{A}]$, $\codim{\langle \overline{A}G\rangle^{ec}}=\codim{\langle \overline{A}G\rangle^e}=r$. Assume $r\ge n$. Then $\codim{\langle \overline{A}G\rangle^{ec}}\ge \codim{\langle G\rangle}$. If $\langle \overline{A}G\rangle^{ec}\subset \sqrt{\langle G\rangle}$ then $\langle \overline{A}G\rangle^{ec}=\sqrt{\langle G\rangle}$ since $\langle \overline{A}G\rangle^{ec}$ is a prime ideal by $(1)$. However, this implies $\langle \overline{A}G\rangle^e$ contains a unit $g_1\in G$ in $K(X)$ and thus $\langle \overline{A}G\rangle^e=K(\mathcal{A})[X]$, which contradicts $(1)$. Therefore, $\langle \overline{A}G\rangle^{ec}\not \subset \sqrt{\langle G\rangle}$. 
    \end{enumerate}
\end{proof}

The parametric ideal $\langle \overline{A}G\rangle$ can be decomposed into two ideals as follows. 

\begin{proposition}\label{prop:primdec}
    If $r\ge n$, then there exists an ideal $J$ of $K[\mathcal{A},X]$ such that
    \[
    \langle \overline{A}G\rangle=\langle G\rangle\cap J
    \]
    and $\codim{J}\ge \min(r,n+1)$. 
    Moreover, if $r>n$, then $J\cap K[\mathcal{A}]\neq \{0\}$. 
\end{proposition}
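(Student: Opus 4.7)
The plan is to take $J \coloneqq \langle \overline{A}G\rangle^{ec}$, the extension-contraction of $\langle\overline{A}G\rangle$ through $K(X)[\mathcal{A}]$. By Proposition~\ref{prop:AG}(1) and~(4), this $J$ is prime of codimension exactly $r$, so $\codim{J}=r\ge\min(r,n+1)$ holds for free. When $r>n$, the standard dimension inequality
\[
 \dim\!\bigl(K[\mathcal{A}]/(J\cap K[\mathcal{A}])\bigr)\le\dim\!\bigl(K[\mathcal{A},X]/J\bigr)=D+n-r<D
\]
forces $J\cap K[\mathcal{A}]\neq\{0\}$, disposing of the final assertion.

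The substance of the argument is the factorisation $\langle\overline{A}G\rangle=\langle G\rangle\cap J$. The inclusion $\subseteq$ is immediate from $\langle\overline{A}G\rangle\subseteq\langle G\rangle$ and $\langle\overline{A}G\rangle\subseteq J$. For $\supseteq$, I would first invoke Proposition~\ref{prop:AG}(3) together with the Splitting Tool (Lemma~\ref{lem:split}) to obtain
\[
 \langle\overline{A}G\rangle=J\cap\bigl(\langle\overline{A}G\rangle+\langle h\rangle\bigr),
\]
where $h=\lcm{\lcp_\succ(f_1),\ldots,\lcp_\succ(f_r)}\in K[X]$. A short inspection of $f_i=\sum_{j,\alpha}a_{\alpha,ij}X^\alpha g_j$ shows that each $\lcp_\succ(f_i)$ is of the form $X^{\alpha_i}g_{j_i}\in\langle G\rangle$, hence $h\in\langle G\rangle$ and $\langle\overline{A}G\rangle+\langle h\rangle\subseteq\langle G\rangle$. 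This reduces matters to the single inclusion $\langle G\rangle\cap J\subseteq\langle\overline{A}G\rangle+\langle h\rangle$.

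To close that remaining gap, I would work with the primary decomposition $\langle\overline{A}G\rangle=\bigcap_iQ_i$ and partition the associated primes $P_i$ according to whether $P_i\cap K[X]=\{0\}$ (whose components assemble to $J$) or not (their intersection, call it $J'$). Set-theoretically $V(J')\subseteq V(\langle G\rangle)\times K^D$, so the primes in $J'$ lie above zeros $p_l$ of $G$. The key is to identify $J'=\langle G\rangle$: at each such prime $P=M_{p_l}\cdot K[\mathcal{A},X]$, every entry $h_{ij}$ specialises to a nonzero polynomial $h_{ij}(\cdot,p_l)\in K[\mathcal{A}]$ lying outside $P$ and hence a unit in the localisation $K[\mathcal{A},X]_P$. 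Because the $h_{ij}$ involve mutually disjoint parameter subsets across rows, I can upgrade this to the ideal equality $\langle\overline{A}G\rangle_P=\langle G\rangle_P$, so the two ideals share the same primary component at $P$; assembling over all $p_l$ yields $J'=\langle G\rangle$ and therefore $\langle G\rangle\cap J=J'\cap J=\langle\overline{A}G\rangle$.

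The hardest part is the localisation step when $r<s$, because no square invertible submatrix of $\overline{A}$ exists and one cannot simply invert $\overline{A}$ to solve for $G$ in terms of $F$. The equality $\langle\overline{A}G\rangle_P=\langle G\rangle_P$ instead has to be squeezed out of the zero-dimensionality of $\langle G\rangle$: $K[X]_{M_{p_l}}$ is Artinian, so the local structure is governed by finitely many relations, and the hypothesis $r\ge n$ guarantees that the generic, disjoint-parameter matrix $\overline{A}$ produces enough independent relations to recover $\langle G\rangle$ in the localisation. The codimension matching $r$ vs.\ $n$ is essential here, and explains why Theorem~\ref{thm:backward-transform} splits cleanly into the $r<n$ and $r\ge n$ regimes.
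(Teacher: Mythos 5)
Your choice of $J \coloneqq \langle \overline{A}G\rangle^{ec}=\mathcal{P}$ is where the proposal goes off track. That choice forces you to establish the sharp identity $\langle\overline{A}G\rangle=\langle G\rangle\cap\mathcal{P}$, which is strictly stronger than what the proposition asserts, and the argument you sketch for it does not close. The issue is that $\langle\overline{A}G\rangle$ can have embedded primary components beyond $\mathcal{P}$; your partition into ``type A'' primes (those with $P\cap K[X]=\{0\}$) and ``type B'' primes, followed by the claim $J'=\langle G\rangle$, is exactly the place where those embedded components hide. Your own discussion of the localisation step flags this: when $r<s$ (the typical case here, since $s=|G|\gg n$) there is no square submatrix of $\overline{A}$ to invert, so you cannot conclude $\langle\overline{A}G\rangle_P=\langle G\rangle_P$ from unit coefficients alone, and appealing to zero-dimensionality plus genericity is an assertion, not a proof. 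So what is presented is a plan whose hardest step remains open.

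The paper's proof sidesteps this entirely by \emph{not} identifying $J$ with $\mathcal{P}$. It uses Proposition~\ref{prop:AG}(3) together with the Splitting Tool (Lemma~\ref{lem:split}) to get, for each $i$, $\langle\overline{A}G\rangle=\mathcal{P}\cap\bigl(\langle\overline{A}G\rangle+\langle g_i\rangle\bigr)$. Taking a minimal primary decomposition $\langle\overline{A}G\rangle+\langle g_1\rangle=Q_1\cap\cdots\cap Q_l$ and splitting the $Q_j$ into isolated components of $\langle\overline{A}G\rangle$ (say $Q_1,\dots,Q_k$) and the rest, the uniqueness of isolated primary components (Remark~\ref{rem:isolated}) forces $Q_1,\dots,Q_k$ to also be isolated components of each $\langle\overline{A}G\rangle+\langle g_i\rangle$. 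Hence $Q_1\cap\cdots\cap Q_k$ contains every $\langle\overline{A}G\rangle+\langle g_i\rangle$ and therefore contains $\langle G\rangle$, so $(Q_1\cap\cdots\cap Q_k)\cap\langle G\rangle=\langle G\rangle$. Rewriting $\langle\overline{A}G\rangle=\langle\overline{A}G\rangle\cap\langle G\rangle$ and regrouping yields $\langle\overline{A}G\rangle=\langle G\rangle\cap J$ with $J=\mathcal{P}\cap Q_{k+1}\cap\cdots\cap Q_l$. The non-isolated $Q_j$ have codimension $\ge n+1$, and $\codim\mathcal{P}=r$ by Proposition~\ref{prop:AG}(4), which gives $\codim J\ge\min(r,n+1)$ without ever proving the components beyond $\mathcal{P}$ are trivial. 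Your handling of the final claim (that $r>n$ forces $J\cap K[\mathcal{A}]\neq\{0\}$ via the dimension count) is correct and agrees with the paper. If you want to salvage your approach, drop the aim of showing $J'=\langle G\rangle$; instead show only $J'\supseteq\langle G\rangle$ (which intersecting with $\langle G\rangle$ collapses correctly), or simply adopt the paper's decomposition $J=\mathcal{P}\cap Q_{k+1}\cap\cdots\cap Q_l$ from the start.
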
 
\begin{proof}
    Let $\mathcal{P}=\langle \overline{A}G\rangle^{ec}=\langle \overline{A}G\rangle_{K(X)[\mathcal{A}]}\cap K[\mathcal{A},X]$. Consider the parameter $a_{0,ij}$ with respect to $\alpha=0$ of $a_{\alpha,ij}X^\alpha$ in $h_{ij}$. For each $i$, fix a block ordering $\{a_{0,i1},a_{0,i2},\ldots,a_{0,is}\}\succ\succ \mathcal{A}\setminus \{a_{0,i1},a_{0,i2},\ldots,a_{0,is}\}$. Since $g_ia_{0,ij}$ is the only term of $f_{i}$ that includes $a_{0,ij}$, the leading term of $f_i=\sum_{j=1}^t h_{ij}g_i$ is $g_ia_{0,ij}$. Thus, $\lcp_\succ(f_j)=g_i$ with respect to $\succ$ for each $j$ and $\lcm{\lcp_\succ(f_1),\ldots,\lcp_\succ(f_r)}=\lcm{g_i,\ldots,g_i}=g_i$. By Proposition~\ref{prop:AG} and Lemma \ref{lem:split}, $\mathcal{P}$ is a prime ideal of $K[\mathcal{A},X]$ and, for each $i$
    \[
    \mathcal{P}=\langle \overline{A}G\rangle:g_i \text{ and } \langle \overline{A}G\rangle=\mathcal{P}\cap (\langle \overline{A}G\rangle+\langle g_i\rangle).
    \]
    Here, $\mathcal{P}\neq \sqrt{(\langle \overline{A}G\rangle\rangle)}$ since otherwise $\mathcal{P}=\sqrt{\langle \overline{A}G\rangle}\subset \sqrt{\langle G\rangle}$, which contradicts Proposition~\ref{prop:AG} (4). Thus, $(\langle \overline{A}G\rangle+\langle g_i\rangle)$ contains at least one isolated component of $\langle \overline{A}G\rangle$. Let $\langle \overline{A}G\rangle+\langle g_1\rangle=Q_1\cap \cdots \cap Q_l$ be a minimal primary decomposition of $\langle \overline{A}G\rangle+\langle g_1\rangle$. Without loss of generality, we may assume that $Q_1,\ldots,Q_k$ are isolated primary components of $\langle \overline{A}G\rangle$ and $Q_{k+1},\ldots,Q_{r}$ are not for some $k\ge 1$. Since isolated primary components are uniquely determined by Remark~\ref{rem:isolated}, $Q_1,\ldots,Q_k$ are also isolated primary components of $(\langle \overline{A}G\rangle+\langle g_2\rangle),\ldots,(\langle \overline{A}G\rangle+\langle g_s\rangle)$. As $Q_1\cap \cdots\cap Q_k\supset \langle \overline{A}G\rangle+\langle g_i\rangle$ for each $i$, $Q_1\cap \cdots\cap Q_k\supset \langle G\rangle$. Then, $(Q_1\cap \cdots\cap Q_k)\cap \langle G\rangle=\langle G\rangle$. Since $\langle \overline{A}G\rangle\subset \langle G\rangle$, 
    \begin{align*}
        \langle \overline{A}G\rangle&=\langle \overline{A}G\rangle\cap \langle G\rangle=(\mathcal{P}\cap (\langle \overline{A}G\rangle+\langle h\rangle))\cap \langle G\rangle =(\mathcal{P}\cap Q_1\cap \cdots \cap Q_l)\cap \langle G\rangle\\
        &=(Q_1\cap \cdots\cap Q_k\cap \langle G\rangle)\cap (\mathcal{P}\cap Q_{k+1}\cap \cdots\cap Q_l) 
        =\langle G\rangle\cap (\mathcal{P}\cap Q_{k+1}\cap \cdots\cap Q_l).
    \end{align*}

    Let $J=\mathcal{P}\cap Q_{k+1}\cap \cdots\cap Q_l$. By Proposition~\ref{prop:AG} (4), $\codim{\mathcal{P}}=r\ge n$. Since $Q_{k+1},\ldots,Q_l$ are not isolated primary components of $\langle \overline{A}G\rangle$, $\codim{Q_{k+1}},\ldots,\codim{Q_{r}}\ge n+1$. Therefore, the codimension of $J$ is $\min(\codim{\mathcal{P}},\codim{Q_{k+1}},\ldots,\codim{Q_l})\ge \min(r,n+1)$.
    
    If $r>n$, then $\min(r,n+1)=n+1$ and $\dim (J)<(D+n)-(n+1)=D-1$. Thus, $\mathcal{A}$ is not an independent set of $J$ since $|\mathcal{A}|=D>D-1=\dim (J)$, that is, $K[\mathcal{A}]\cap J\neq \{0\}$ by Remark~\ref{rem:mis}. 
\end{proof}

Let $V(J)=\{q\in K^D\mid f(q)=0,\forall f\in J\}$ be the variety of an ideal $J$ of $K[\mathcal{A}]$. If $J$ is a nonzero ideal, $V(J)$ is a variety of dimension $D-1$ at most and thus $K^D\setminus V(J)$ is a dense set of $K^D$. Finally, we obtain the proof of Theorem\ref{thm:backward-transform} as follows. 

\begin{proof}[proof of Theorem\ref{thm:backward-transform}]
    In case $r<n$, $\langle AG\rangle \neq  \langle G\rangle$ since $\codim{\langle AG\rangle}=r<n=\codim{\langle G\rangle}$ for any $A\in \ring^{r\times s}$. Thus, we assume that $r\ge n$. By Proposition~\ref{prop:primdec}, there exists an ideal $J$ of $K[\mathcal{A},X]$ such that
    \[
    \langle \overline{A}G\rangle=\langle G\rangle\cap J. 
    \]
    \begin{enumerate}
        \item Assume that $r=n$ and $G$ is a border basis. Fix a term ordering $\succ_X$ on $X$. Then, for each $i\in \{1,\ldots,n\}$, there exists $f_{k_i}\in G$ such that $\lcp_{\succ_X}(g_{k_i})=x_i^{d_i}$ for some positive integer $d_i$. For simplicity, we may assume that $k_1=1,\ldots,k_n=n$. Consider the parameter $a_{x_{i},ii}$ with respect to $X^\alpha=x_i$ in $h_{ii}$ for each $i$. Fix a term ordering $\succ_\mathcal{A}$ such that $\{a_{x_{1},11},\ldots,a_{x_{n},nn}\}\succ \succ \mathcal{A}\setminus \{a_{x_{1},11},\ldots,a_{x_{n},nn}\}$. Let $\succ$ be the product ordering $\succ_{\mathcal{A}\times X}$. Then, $\ltp_\succ(h_{ii})=\ltp_\succ(a_{x_i,ii}x_ig_i)=a_{x_{i},ii}x_i^{d_i+1}$ for each $i\in \{1,\ldots,n\}$. Since $\ltp_\succ(h_{11}),\ldots,\ltp_\succ (h_{nn})$ are disjoint, $\overline{A}G$ is a \gb basis of $\langle \overline{A}G\rangle$ with respect to $\succ$ by Lemma~\ref{lem:disjoint}. For any non-zero polynomial $w(\mathcal{A})\in K[\mathcal{A}]$, $w(\mathcal{A})g_1\not \in \langle \overline{A}G\rangle$ as $\ltp_{\succ}(w(\mathcal{A})g_1)=\ltp_\succ(w(\mathcal{A}))\ltp_\succ(g_1)=\ltp_\succ(w(\mathcal{A}))x_1^{d_1}$ is not divided by any $\lt{h_{11}},\ldots,\lt{h_{nn}}$. Thus, $\mathcal{P}\cap K[\mathcal{A}]=(\langle \overline{A}G\rangle:g_1)\cap K[\mathcal{A}]=\{0\}$. Let $G^\prime=\{g_1^\prime,\ldots,g_l^\prime\}$ be the reduced \gb basis of $\mathcal{P}$ with respect to a block ordering $X\succ^\prime\succ^\prime \mathcal{A}$. 
        Obviously, $G^\prime\cap K[\mathcal{A}]=\{0\}$. Then, letting $H=\lcm{\lcp_{\mathcal{A}}(g_1)\cdots \lcp_{\mathcal{A}}(g_l)}\neq 0$, $\sigma_q(G^\prime)$ is a \gb basis of $\sigma_q(\mathcal{P})$ for any $q\in V(G^\prime\cap \mathcal{A})\setminus V(H)=K^D\setminus V(H)$ by Lemma~\ref{lem:CGS}. Fix $q\in K^D\setminus V(H)$. Since $\langle \overline{A}G\rangle\subset \mathcal{P}$, 
        \[
        \langle \sigma_q(\overline{A})G\rangle\subset \sigma_q(\mathcal{P})\neq K[X]. 
        \]
         As $\mathcal{P}\not \supset\langle G\rangle_{K[X]}$, there exists $g\in \langle G\rangle_{K[X]}\setminus \mathcal{P}$ such that $\ltp_{\succ^\prime}(g)\not \in  \ltp_{\succ^\prime}(\langle G^\prime\rangle)$. This implies $\ltp_{\succ^\prime}(g)=\ltp_{\succ^\prime}(\sigma_q(g))\not \in  \ltp_{\succ^\prime}(\langle\sigma_q(G^\prime)\rangle)=\ltp_{\succ^\prime}(\sigma_q(\mathcal{P}))$ since $g=\sigma_q(g)$ and $\sigma_q(G^\prime)$ is a
        \gb basis of $\sigma_q(\mathcal{P})$ with respect to $\succ^\prime$. Thus, $g\in \langle G\rangle\setminus \sigma_q(\mathcal{P})$ and  $\langle G\rangle\not \subset \sigma_q(\mathcal{P})$. As $\langle \sigma_q(\overline{A})G\rangle\subset \sigma_q(\mathcal{P})$, 
        we obtain $\langle \sigma_q(\overline{A})G\rangle\neq \langle G\rangle$. With $H\neq 0$, $K^D\setminus V(H)$ is a dense set of $K^D$.
    
    \item Assume that  $r>n$. Then $J\cap K[\mathcal{A}]\neq \{0\}$ by Proposition~\ref{prop:primdec}. Fix $q\in K^D\setminus V(J\cap K[\mathcal{A}])$. Then, for $0\neq f(\mathcal{A})\in J\cap K[\mathcal{A}]$, it follows that $0\neq f(q)\in \sigma_q(J)$, that is, $\sigma_q(J)=K[X]$. Since $\langle G\rangle\cdot J\subset \langle \overline{A}G\rangle$ and $\sigma_q(\langle G\rangle)=\langle G\rangle$, 
    \[
   \langle \sigma_q(\overline{A})G\rangle=\sigma_q(\langle \overline{A})G\rangle)\supset  \sigma_q (\langle G\rangle\cdot J)=\sigma_q (\langle G\rangle)\cdot \sigma_q (J)=\langle G\rangle.
    \]
    Since $\langle \sigma_q(\overline{A})G\rangle\subset \langle G\rangle$ always holds, we obtain $\langle \sigma_q(\overline{A})G\rangle=\langle G\rangle$. As $J\cap K[\mathcal{A}]\neq 0$, $K^D\setminus V(J\cap K[\mathcal{A}])$ is a dense set of $K^D$.
     \end{enumerate}
\end{proof}

\begin{corollary}\label{cor:backward-transform}
    Let $K=\mathbb{F}_p$ be a finite field of order $p$ for a prime number $p$ and $\mathcal{G}=\{A\in \ring_{\le d}^{r\times s}\mid \ideal{AG}=\ideal{G}\}$. Then the probability of $A$ satisfies $\ideal{AG}=\ideal{G}$, i.e., $Pr(p)=|\mathcal{G}|/|\ring_{\le d}^{r\times s}|$, is almost $1$ for a sufficiently large $p$. 
\end{corollary}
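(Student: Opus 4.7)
The plan is to apply the Schwartz--Zippel lemma to the nonzero polynomial extracted from the proof of Theorem~\ref{thm:backward-transform} (part~2, i.e.\ $r>n$, since the corollary asserts a near-$1$ probability that is impossible in the $r\le n$ case). Recall that the proof produces a nonzero $f\in J\cap K[\mathcal{A}]$ such that $\langle\sigma_q(\overline{A})G\rangle=\langle G\rangle$ for every $q\in K^D\setminus V(f)$, where $D=rs\binom{n+d}{n}$ is the total number of coefficients needed to specify an element of $\ring_{\le d}^{r\times s}$. Identifying $\ring_{\le d}^{r\times s}$ with $\mathbb{F}_p^D$ as a set, we have $|\ring_{\le d}^{r\times s}|=p^D$ and $\mathcal{G}\supseteq\mathbb{F}_p^D\setminus V(f)$.

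The first step is to rerun the construction of Proposition~\ref{prop:primdec} over $K=\mathbb{F}_p$, obtaining a nonzero $f\in\mathbb{F}_p[\mathcal{A}]$ of total degree $\delta$. The crucial observation is that $\delta$ depends only on $n, r, s, d$ and on the degrees of the polynomials in $G$, but not on the characteristic $p$: each ingredient (Gröbner basis reduction, ideal quotient by $g_i$, ideal intersection, and elimination of $X$) admits combinatorial degree bounds uniform in the base field. Since $f$ is nonzero as a polynomial of $\mathbb{F}_p[\mathcal{A}]$ (not merely as a function on $\mathbb{F}_p^D$), the Schwartz--Zippel lemma gives
\[
|V(f)\cap\mathbb{F}_p^D|\ \le\ \delta\, p^{D-1},
\]
and therefore
\[
Pr(p)\ =\ \frac{|\mathcal{G}|}{p^D}\ \ge\ \frac{p^D-\delta\, p^{D-1}}{p^D}\ =\ 1-\frac{\delta}{p}\ \xrightarrow[p\to\infty]{}\ 1,
\]
which is exactly the claim.

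The main obstacle is establishing the uniform-in-$p$ bound on $\delta$. Morally this is clear since the construction of $J$ from $G$ and $\overline{A}$ is purely algebraic and characteristic-free, but a fully rigorous version requires effective degree bounds for the primary-decomposition and elimination steps invoked in the proof of Proposition~\ref{prop:primdec}. A clean alternative is to lift $G$ and $\overline{A}$ to characteristic zero, run the construction there to obtain $f$ with $p$-independent degree, and then reduce modulo $p$; a generic-freeness argument ensures the reduction remains nonzero for all but finitely many $p$. Either way, once a $p$-uniform bound on $\delta$ is in hand, the Schwartz--Zippel step is immediate and yields the asymptotic $Pr(p)\to 1$.
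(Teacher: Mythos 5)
Your high-level strategy matches the paper's: exhibit a nonzero $f\in K[\mathcal{A}]$ whose non-vanishing at $q$ forces $\langle\sigma_q(\overline{A})G\rangle=\langle G\rangle$, bound $\deg f$ by a constant $\delta$ independent of $p$, and apply a point-counting bound (Schwartz--Zippel; the paper cites Theorem~6.13 of Lidl--Niederreiter) to get $Pr(p)\ge 1-\delta/p$. However, you correctly identify the crux of the matter---the uniform-in-$p$ degree bound---and then leave it essentially unresolved. Your first suggestion (effective degree bounds for the full primary decomposition producing $J$) would require bounding the degree of primary components, which is substantially harder than what is needed. Your second suggestion (lift to characteristic zero and reduce mod $p$ with a generic-freeness argument) is a legitimate strategy but involves non-trivial bookkeeping about which primes are excluded.

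The paper sidesteps both difficulties by not working with $J$ at all. It instead uses the prime ideal $\mathcal{P}=\langle\overline{A}G\rangle:g_i$ from Proposition~\ref{prop:primdec}, which is \emph{computable via a single Gr\"obner basis} (the elimination trick with the auxiliary variable $y$ and $S_i=\{yf_1,\ldots,yf_r,(1-y)g_i\}$). Degrees of Gr\"obner bases admit bounds determined solely by the degrees of the generators and independent of the coefficient field, so one obtains a $p$-uniform $d_{\max}$ bounding $\deg f$ for any nonzero $f\in G_i(p)\cap\mathbb{F}_p[\mathcal{A}]\subset\mathcal{P}\cap\mathbb{F}_p[\mathcal{A}]$ directly, with no recourse to primary decomposition. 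Then $f(q)\neq 0$ implies $\sigma_q(\mathcal{P})=\mathbb{F}_p[X]$, hence $\sigma_q(\overline{A}G):g_i=\mathbb{F}_p[X]$, hence $g_i\in\sigma_q(\overline{A}G)$ for every $i$, giving the desired ideal equality. So the step you flag as the ``main obstacle'' is exactly where your argument is incomplete, and the paper's replacement of $J$ by the Gr\"obner-computable $\mathcal{P}$ is the missing idea that makes the uniform degree bound immediate.
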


\begin{proof}[proof of Corollary~\ref{cor:backward-transform}]
    Without loss of generality, we may assume that $g_1,\ldots,g_s\in \mathbb{Z}[X]$ and $f_1,\ldots,f_r\in \mathbb{Z}[\mathcal{A},X]$. Fix a block ordering $X\succ\succ\mathcal{A}$. For each $i\in \{1,\ldots,s\}$ and a new variable $y$, let $S_i=\{yf_1,\ldots,yf_r,(1-y)g_i\}$ and $G_{i}$ a \gb basis of $\langle S_i\rangle$ in $K[y,\mathcal{A},X]$ with respect to an extended block ordering $y\succ\succ X\succ\succ\mathcal{A}$. It is known that $G_{i}\cap K[\mathcal{A},X]$ is a \gb basis of $\langle\overline{A}G\rangle: g_i$ (see Lemma~1.8.12 in \cite{singular}). By Proposition~\ref{prop:primdec}, $\langle\overline{A}G\rangle:g_1=\cdots =\langle\overline{A}G\rangle:g_s=\mathcal{P}$.  Let $G_i(p)$ be the \gb basis of $\langle S_i\rangle_{\mathbb{F}_p[X]}$ with respect to $\succ$ and $\maxdeg{G_{i}(p)}=\max(\deg(g)\mid g\in G_{i}(p))$. Since an upper bound of degrees of \gb bases can be decided from degrees of the generator and independently from the coefficient field, there exists $d_{max}$ such that $d_{max}>\maxdeg{G_i(p)}$ for any $i$ and $p$. Let $0\neq f\in G_i\cap K[\mathcal{A}]\subset \mathcal{P}\cap  K[\mathcal{A}]$ and then $\deg(f)\le d_{max}$. For $q\in \mathbb{F}_p^D$ with $f(q)\neq 0$, $\sigma_q(\overline{A}G):g_i\supset \sigma_q(\overline{A}G:g_i)=\sigma_q(\mathcal{P})=\mathbb{F}_p[X]$ for each $i$. Hence, $g_i\in \sigma_q(\overline{A}G)$ for each $i$ and thus $\sigma_q(\overline{A}G)\supset \langle G\rangle$ for such $q$. As the inverse inclusion is obvious, we obtain $\sigma_q(\overline{A}G)=\langle G\rangle$ for any $q\in \mathbb{F}_p^D\setminus V(f)$. Since $f$ has at most $d_{max}p^{D-1}$ solutions in $\mathbb{F}_p^D$ (see Theorem~6.13 in \cite{Lidl_Niederreiter_1996}), $Pr(p)\ge \frac{|\mathbb{F}_p^D\setminus V(f)|}{|\mathbb{F}_p^D|}\ge \frac{p^D-d_{max}p^{D-1}}{p^D}=1-\frac{d_{max}}{p}$ and this goes $1$ for a sufficiently large $p$. 
\end{proof}

\subsection{Empirical results}

\begin{figure}
    \centering
    \includegraphics[width=\linewidth]{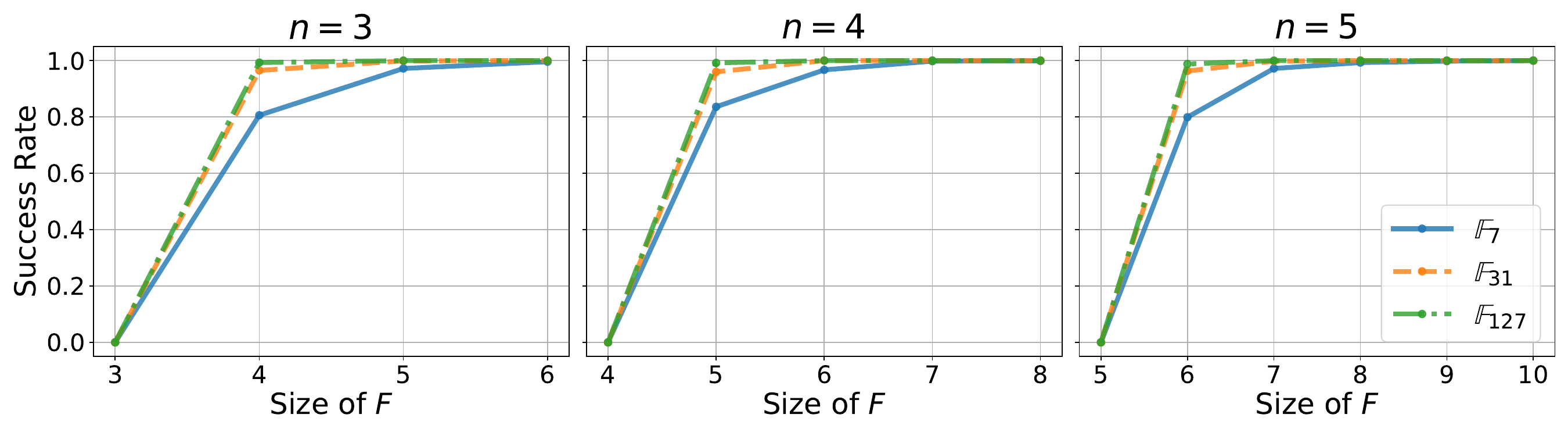}
    \caption{The empirical success rate of the backward transform from $G$ to $F$ without changing ideals. As Theorem~\ref{thm:backward-transform} and Corollary~\ref{cor:backward-transform} suggest, the success rate is zero for $|F| = n$ and close to one for $|F| > n$, and larger field order and number of variables increase the success rate.}
    \label{fig:backward_transform_success_rate}
\end{figure}
We conduct a numerical experiment to justify our backward transform based on Theorem~\ref{thm:backward-transform}. The border basis generation and sampling $A$ for backward transform follows the main experiment of training the Transformer oracle. See \cref{sec:experiment-learning-successful-expansions}. \cref{fig:backward_transform_success_rate} shows the success rate of having $F=AG$ such that $\ideal{F} = \ideal{G}$. As Theorem~\ref{thm:backward-transform} and Corollary~\ref{cor:backward-transform} suggest, the success rate is zero for $|F| = n$ and for $|F| > n$ improves with larger $n$, $p$, and $|F|$.

\section{Monomial embedding}\label{sec:monomial-embedding}

\subsection{Implementation details}\label{sec:monomial-embedding-implementation}

We here elaborate the implementation details of monomial embedding method. First, recall the definition. 

\MonomialEmbedding*

\paragraph{Embedding maps.}
The embedding maps $\varphi_{\mathrm{c}}$ and $\varphi_{\mathrm{f}}$ are standard token embeddings implemented using trainable embedding matrices. The former is used for coefficient tokens, while the latter is used for special tokens such as \texttt{[SEP]} or \texttt{[PAD]}. 

The embedding map $\varphi_{\mathrm{e}}$ is designed to handle exponent vectors and is conceptually realized using $n$ independent embedding maps, one for each variable. Specifically, the $i$-th embedding map $\varphi_{\mathrm{e}}^{(i)}$ processes the exponent token corresponding to the variable $x_i$. Given an exponent vector $\bm{a} = (a_1, \ldots, a_n)$, its embedding is computed as
\begin{align}
\varphi_{\mathrm{e}}(\bm{a}) = \frac{1}{n} \sum_{i=1}^n \varphi_{\mathrm{e}}^{(i)}(a_i).    
\end{align}

\paragraph{Unembedding.}
After processing the input embeddings, the Transformer outputs are passed through an unembedding layer before reaching the classification head. Since our tokenization scheme based on monomial embeddings reduces the number of tokens to approximately $1/(n+1)$ of that in the standard representation, the unembedding layer restores the original token structure prior to classification.
Concretely, the unembedding can be implemented by a linear transformation $\psi: \bR^{1 \times d} \to \bR^{(n+1) \times d}$, which is applied to each embedding vector to expand it back to a sequence of $(n+1)$ vectors. This operation reconstructs the original token alignment, making it compatible with downstream tasks such as sequence classification or generation.

\paragraph{The \texttt{<bos>} token.} The Transformer decoder requires the right-shift operation. To this end, we append, e.g., \texttt{<bos>}, to the input sequence before monomial-tokenized. As a consequence, we have $n+1$ redundant tokens; these token can be simply eliminated.

\subsection{Empirical validations of monomial embedding}

\begin{table}[t]
  \centering
  \caption{Comparison between baseline (infix tokenization) and proposed (monomial tokenization and embedding) methods across different numbers of variables. The success rate measures the successful generation of complete cumulative products.}
  \label{table:infix-vs-monomial}
  \begin{tabularx}{\linewidth}{c c Y Y}
    \toprule
    \# Variables & Method & Success rate (\%) & GPU memory (MB) \\
    \hline
    \multirow{2}{*}{$n=2$} & infix & 33.9  & 4,302 \\
                           & monomial & 39.7  & 1,678 \\
    \hline
    \multirow{2}{*}{$n=3$} & infix & 37.5  & 11,296 \\
                           & monomial & 44.6 & 2,408 \\
    \hline
    \multirow{2}{*}{$n=4$} & infix & 38.8   & 23,632 \\
                           & monomial & 47.3  & 3,260 \\
    \hline
    \multirow{2}{*}{$n=5$} & infix & 22.0  & 27,442 \\
                           & monomial & 53.7  & 3,424 \\
    \bottomrule
  \end{tabularx}
\end{table}

\paragraph{Task.}
Given a list of polynomials $\fs \in \bF_p[\xs]$, the task is to compute their cumulative products $f_1, f_1f_2, \ldots, \prod_{i=1}^r f_i$.

The polynomial product task requires the model to understand addition and product, and thus this is a basic symbolic computation over a ring. 
Several studies have reported that learning symbolic tasks over finite fields is difficult~\cite{wenger2022salsa,kera2024learning}. Even learning a simple parity function $f: (x_1, \ldots, x_m) \in \{-1, 1\}^m \mapsto \prod_{i=1}^m x_i$, which corresponds to a scalar product over $\bF_2$, is theoretically known to be hard~\cite{shwartz2017failures}. However, recent work has shown that auto-regressive generation can overcome this challenge~\cite{kim2025transformers}, motivating our adoption of a sequential formulation. 

\paragraph{Setup.}
The Transformer architecture and training setup follow \cref{sec:experiment-learning-successful-expansions}. Transformer models were trained on 100,000 samples and evaluated on 1,000 samples. For each sample, the number of polynomials $r$ was uniformly sampled from $\{2, 3, 4\}$. Then, polynomials $\fs \in \fring{7}$ were sampled with a maximum degree $d_{\max}=4$ and a maximum number of terms $t_{\max}=5$. The sampling follows the strategy given in \cref{sec:random-sampling-polynomial}, except that the polynomial degree was uniformly sampled from $\bU[1, d_{\max}]$ to exclude the zero polynomial and avoid trivially easy product computations. We tested cases with $n=2, 3, 4, 5$. We adopted the standard infix representation as a baseline~(\cref{eq:infix-representation}). Note that the Transformer model with the proposed monomial embedding predicts target sequences in the same representation via the unembedding layer. Thus, this baseline setup allows us to directly assess the impact of the proposed tokenization and embedding strategy.

\paragraph{Results.} 
\begin{figure}[t]
    \centering
    \includegraphics[width=0.6\linewidth]{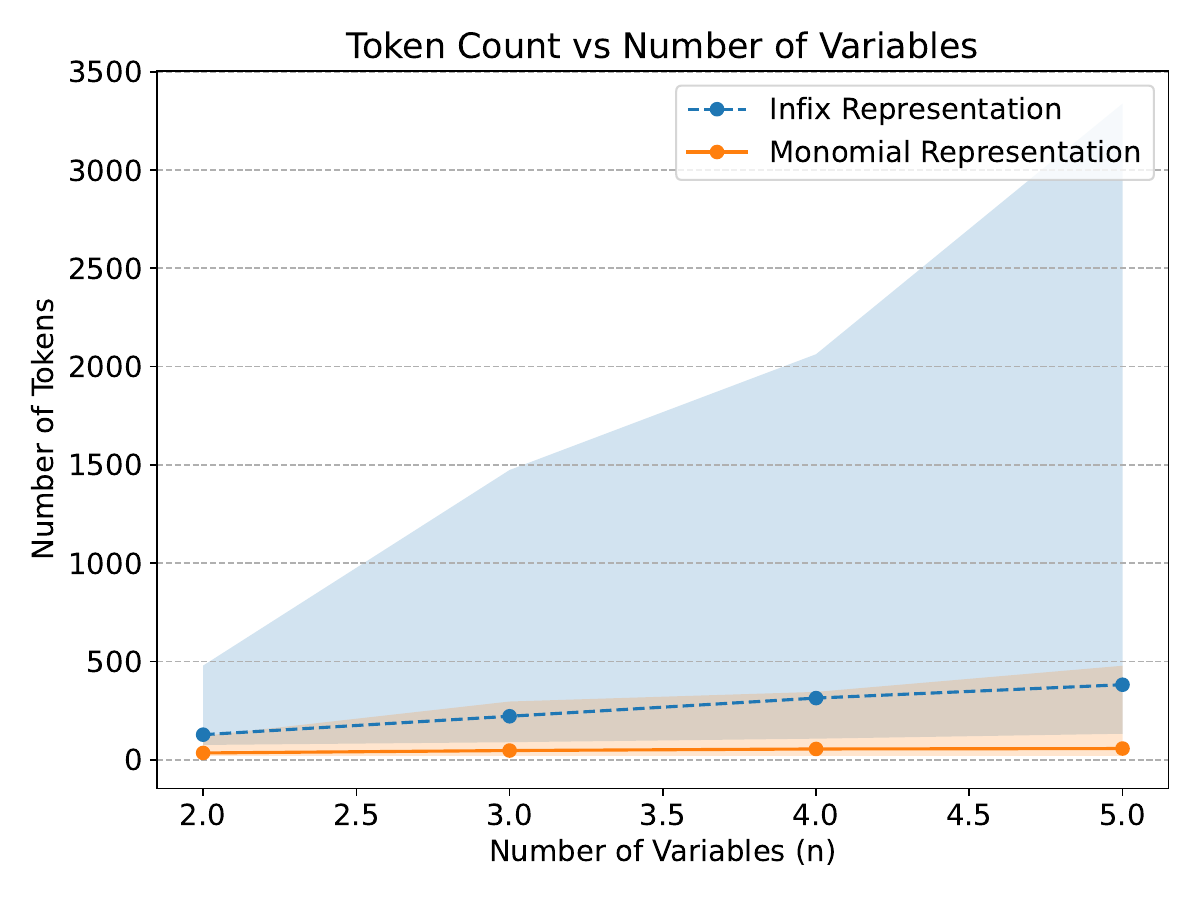}
    \caption{The average number of tokens with infix and the proposed embeddings.}
    \label{fig:infix_vs_monomial_stats}
\end{figure}
\cref{fig:infix_vs_monomial_stats} shows the average number of tokens in test samples for both the baseline method and the proposed method. The shaded region indicates the range from minimum to maximum. A large gap in both the average and maximum number of tokens between the infix and monomial embeddings can be observed. This gap is also reflected in the GPU memory consumption in Table~\ref{table:infix-vs-monomial}. Notably, the memory consumption of the monomial embedding with $n=5$ is lower than that of the infix embedding with $n=2$. The monomial embedding is further advantageous for learning, as indicated by its success rate (i.e., the proportion of samples for which a complete sequence of cumulative products is successfully generated). The improvement in success rate becomes even more pronounced for larger values of $n$. Considering the correspondence between monomials is essential in symbolic computation. With infix embedding, the model must establish attention between $(n+1)$ tokens and another $(n+1)$ tokens, resulting in increased complexity. In contrast, monomial embedding reduces this to a one-to-one correspondence, which benefits both memory efficiency and success rate.

\subsection{Reduction profile of the input sequences.}\label{sec:input-sequence-compression-app}
\begin{figure}
    \centering
    \includegraphics[width=\linewidth]{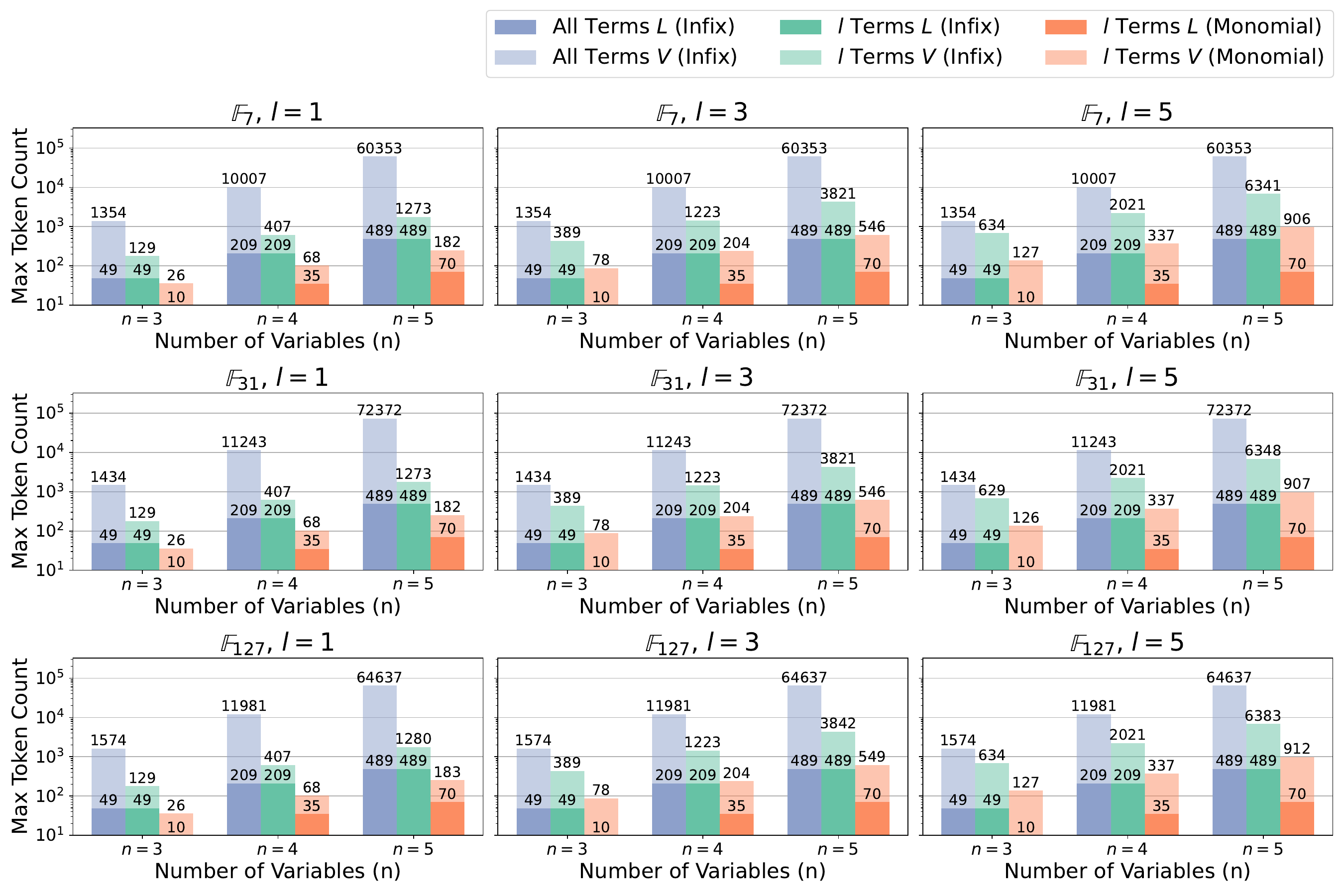}
    \caption{Reduction of the maximum number of tokens of input sequences with $l$-leading term and monomial representation. }
    \label{fig:input-sequence-compression}
\end{figure}
\cref{fig:input-sequence-compression} completes \cref{fig:input-sequence-compression-small}. We analyze the input sequence reduction in the test samples used in our main experiments. As mentioned in \cref{efficient-input-sequence-representation}, we used a minimal subset $\cL' \in \cL$ that allows us to retrieve the universe $\cL$, which is an order ideal. Particularly, we used the \emph{corner terms}~\cite{kreuzer2005computational2}, which we define next. 

\begin{definition}[Divisibility of Monomials]
Let 
\[
  t = x_1^{a_1}x_2^{a_2}\cdots x_n^{a_n}
  \quad\text{and}\quad
  s = x_1^{b_1}x_2^{b_2}\cdots x_n^{b_n}
\]
be monomials in the polynomial ring $\ring$.  We say that $t$ \emph{divides} $s$, written $t\mid s$, if and only if
\[
  a_i \;\le\; b_i
  \quad\text{for all }i=1,\dots,n,
\]
equivalently, if there exists a monomial $u$ such that $s = tu$.
\end{definition}

\begin{definition}[Corner Terms]
Let $\mathcal{L}$ be an order ideal in $\ring$.  The \emph{corner terms} of $\mathcal{L}$ are its maximal monomials, i.e.\ those $t\in\mathcal{L}$ such that whenever $t\mid s$ with $s\in\mathcal{L}$, one has $s=t$.  Equivalently,
\[
  \mathcal{L}' \;=\;
  \bigl\{\,t\in\mathcal{L}
    \;\big|\;
    (\forall s\in\mathcal{L})\;
    (\,t \mid s \;\Rightarrow\; s=t\,)
  \bigr\}.
\]
\end{definition}

Geometrically, each $c\in\mathcal{L}'$ is the “corner’’ of an axis-aligned hyper-box in~$\mathbb{N}^n$ whose lattice points are precisely the divisors of~$c$, and the union of these boxes is the entire order ideal.

\section{Experiment setup and additional results}\label{sec:experiment-app}
\subsection{Sampling polynomials.}\label{sec:random-sampling-polynomial}
Random sampling a polynomial from $\bF_p[X]_{\le d_{\max}}$ with some degree bound $d_{\max}$ is a basic operation in our experiments. Here, \textit{random sampling} can be defined in several ways. The mathematically generic way is uniformly sampling the coefficients of polynomial $f = \sum_{\bm{a}: \norm{\bm{a}}_1 \le d_{\max}} c_{\bm{a}}x^{\bm{a}}$. It is worth noting that in such as case, $f$ almost always dense and of degree $d_{\max}$. 

However, this approach is not always reasonable from the practical scenario. The maximum degree $d_{\max}$ can only mean the limit of the highest acceptable degree, and the user-input polynomial can vary between constant to degree-$d_{\max}$ ones dynamically. 

Taking into account this, random sampling in this paper is performed by first uniformly samlping the degree and the number of terms of polynomial. Namely, given the predesgianated maximum degree $d_{\max}$ and the maximum number of terms $t_{\max}$, a polynomial is sampled with $d \sim \bU[0,d_{\max}]$ and $t_{\max} \sim \bU[0, \bar{t}_{\max}]$, where $\bar{t}_{\max} = \min\qty(t_{\max}, \binom{n+d_{\max}}{n})$, and $\bU[a, b]$ is a uniform distribution with range $[a, b] \subset \bZ$. 

\subsection{Learning successful expansions}
\paragraph{Computational resources.} Training was performed on a system with 48-core CPUs, 768\,GB of RAM, and NVIDIA RTX A6000 Ada GPUs. Each run completed in less than a day on a single GPU.

\paragraph{Datasets.}
To construct the training set, we first generated one million border bases $\{G_i\}_i$ as described in \cref{sec:border-basis-sampling}.
These were then transformed into an equal number of non-bases $\{F_i\}_i$ using the backward generator transform from \cref{sec:backward-transform}.
Subsequently, the Improved Border Basis (IBB;~\cite{kehrein2006computing}) algorithm was executed on each $F_i$.
From each run, we collected samples only from the last five expansion calls in the final loop (cf.~\cref{alg:border-basis}).
This process yielded approximately five million samples, from which we randomly selected one million without replacement to form the training set.
The test set of 1,000 samples was constructed in the same manner.

\paragraph{Additional Results.} Table~\ref{table:transformer-prediction-complete} is the complete version of Table~\ref{table:transformer-prediction}. The overall trend is the same.

\begin{table}[!t]
  \centering
    \caption{Evaluation results of Transformer predictions over polynomial rings $\mathbb{F}_p[x_1, \dots, x_n]$. Metrics are reported for different values of $l$.}
  \label{table:transformer-prediction-complete}
  \begin{tabularx}{\linewidth}{l l c *{4}{Y}}
    \toprule
    Field & Variables & $l$ & Precision (\%) & Recall (\%) & F1 Score (\%) & No Expansion Acc. (\%) \\
    \Xhline{2\arrayrulewidth}
    \multirow{9}{*}{$\mathbb{F}_7$}
      &           & 1 & 79.4 & 81.2 & 80.3 & 96.3 \\
      & $n=3$     & 3 & 84.1 & 86.1 & 85.1 & 96.0 \\
      &           & 5 & 85.6 & 87.6 & 86.6 & 96.6 \\
      \cline{2-7}
      &           & 1 & 85.1 & 86.6 & 85.8 & 98.8 \\
      & $n=4$     & 3 & 88.4 & 89.1 & 88.8 & 98.8 \\
      &           & 5 & 89.8 & 91.1 & 90.4 & 98.8 \\
      \cline{2-7}
      &           & 1 & 91.4 & 91.9 & 91.7 & 99.1 \\
      & $n=5$     & 3 & 93.0 & 93.3 & 93.1 & 99.1 \\
      &           & 5 & 93.9 & 94.6 & 94.2 & 98.7 \\
    \midrule
    \multirow{9}{*}{$\mathbb{F}_{31}$}
      &           & 1 & 84.4 & 86.8 & 85.6 & 99.7 \\
      & $n=3$     & 3 & 89.4 & 90.0 & 89.7 & 99.7 \\
      &           & 5 & 91.6 & 93.2 & 92.4 & 99.7 \\
      \cline{2-7}
      &           & 1 & 90.7 & 91.6 & 91.1 & 98.8 \\
      & $n=4$     & 3 & 92.9 & 93.7 & 93.3 & 98.8 \\
      &           & 5 & 94.2 & 94.7 & 94.4 & 98.8 \\
      \cline{2-7}
      &           & 1 & 92.7 & 93.1 & 92.9 & 99.6 \\
      & $n=5$     & 3 & 94.3 & 94.6 & 94.4 & 99.6 \\
      &           & 5 & 94.8 & 95.3 & 95.1 & 99.6 \\
    \midrule
    \multirow{9}{*}{$\mathbb{F}_{127}$}
      &           & 1 & 87.8 & 88.3 & 88.1 & 99.7 \\
      & $n=3$     & 3 & 91.3 & 91.8 & 91.5 & 99.7 \\
      &           & 5 & 93.6 & 93.7 & 93.7 & 99.7 \\
      \cline{2-7}
      &           & 1 & 92.0 & 93.2 & 92.6 & 99.6 \\
      & $n=4$     & 3 & 94.5 & 94.9 & 94.7 & 99.6 \\
      &           & 5 & 94.9 & 95.5 & 95.2 & 99.6 \\
      \cline{2-7}
      &           & 1 & 93.1 & 94.7 & 93.9 & 99.6 \\
      & $n=5$     & 3 & 94.4 & 95.8 & 95.1 & 99.6 \\
      &           & 5 & 94.8 & 96.0 & 95.4 & 99.6 \\
    \bottomrule
  \end{tabularx}
\end{table}

\subsection{Out of Distribution Experiments}\label{sec:out-of-distribution-performance}

\begin{figure}[h]
    \centering
    \includegraphics[width=\textwidth]{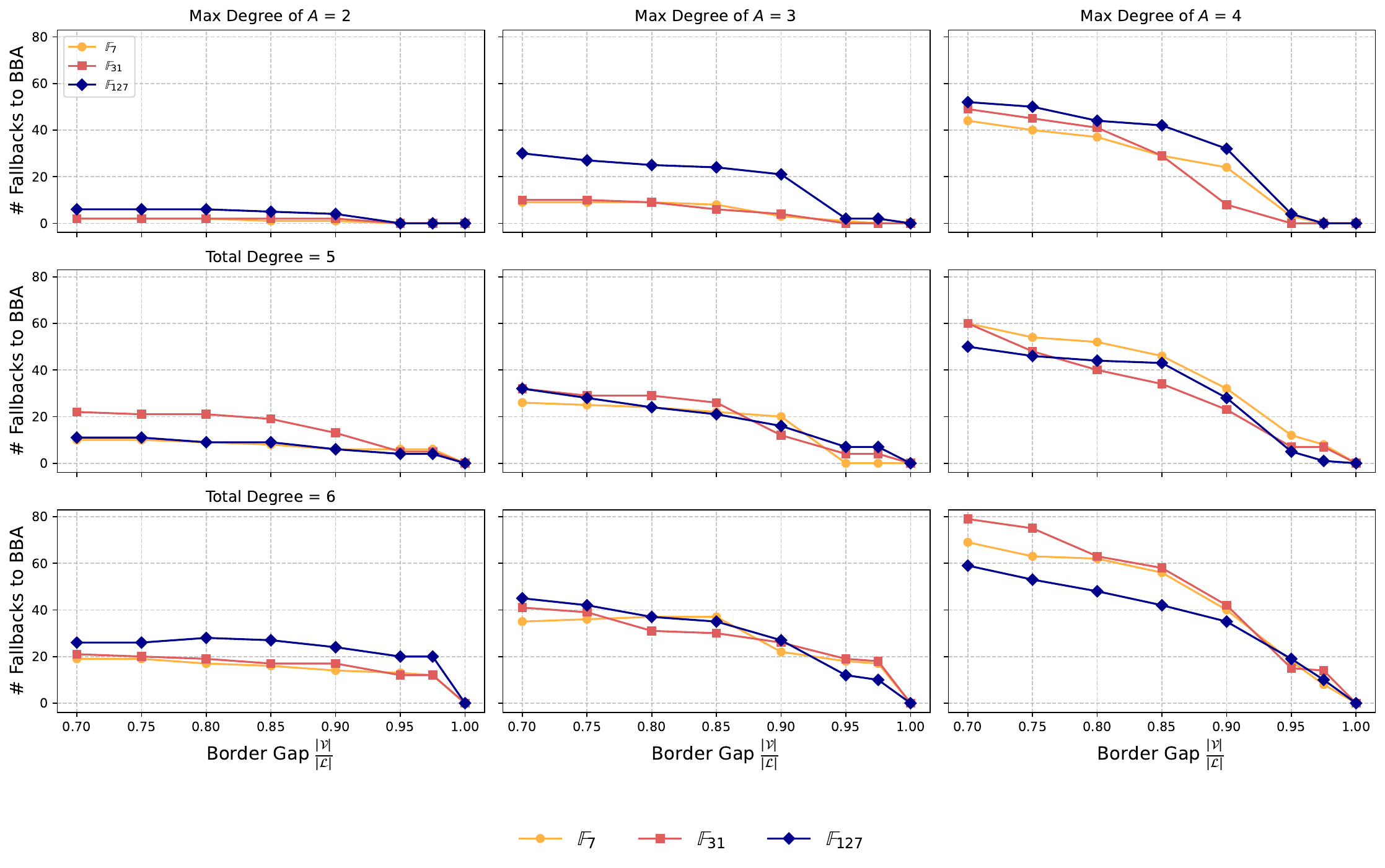}
    \caption{OOD Experiment for $n=3$ measured by the number of fallbacks plotted against the relative border gap. As we increase the degree of the polynomials (and thereby moving away from the training distribution), the number of fallbacks increases. Invoking the oracle at a lower relative border gap does increase the number of fallbacks, as the oracle must make more progress to achieve termination. }
    \label{fig:example}
\end{figure}

\begin{figure}[h]
    \centering
    \includegraphics[width=\textwidth]{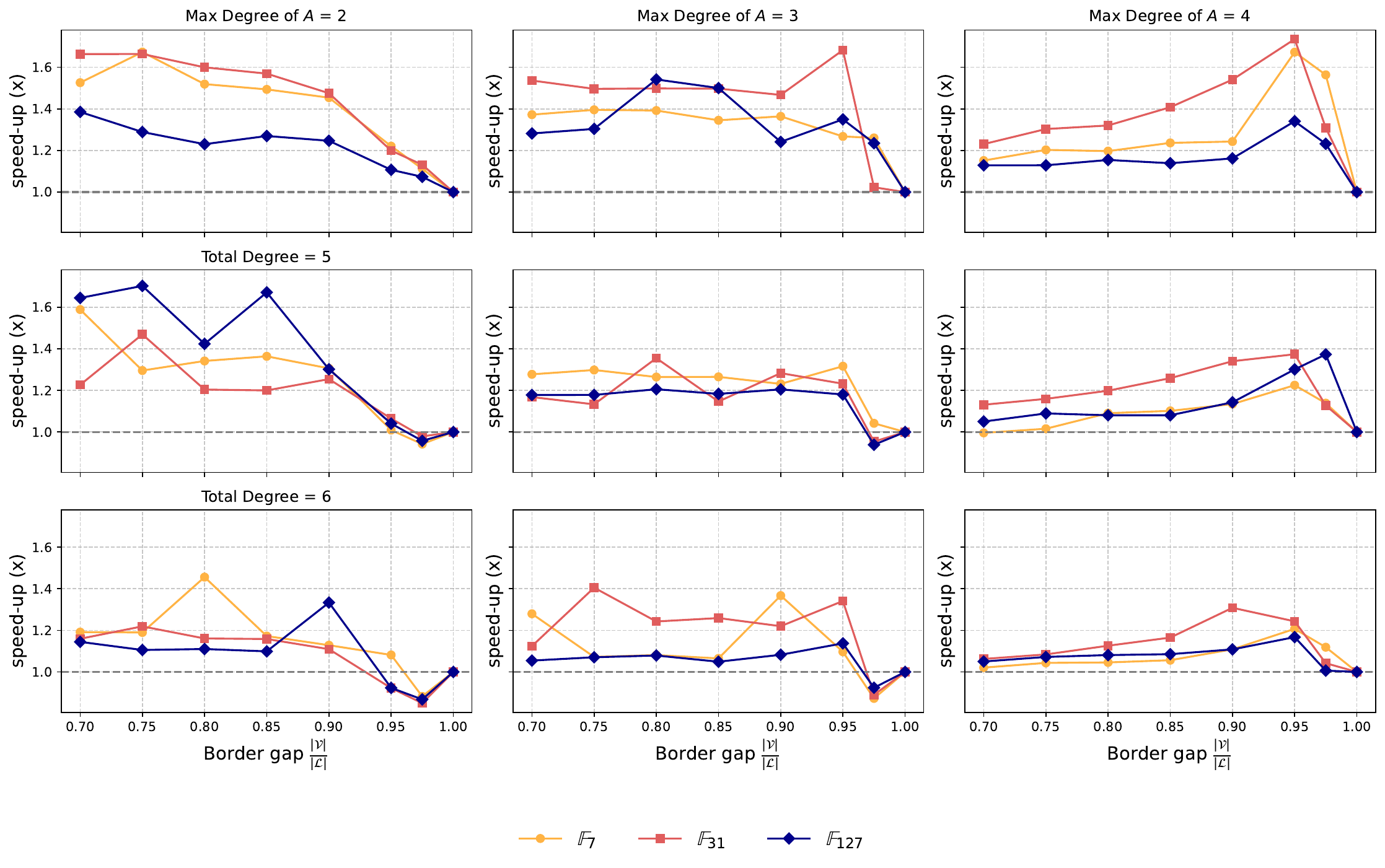}
    \caption{OOD Experiment for $n=3$ measured by the speedup of the proposed method over the baseline. Even a moderate number of fallbacks still allows for a significant speedup. However, for the harder settings, we observe that the speedup is highest for a relative border gap of around $0.95$, hinting that the proposed method is sensitive to the choice of the relative border gap and that the oracle performs worse on the harder settings, while still allowing for a moderate speedup.}
    \label{fig:example}
\end{figure}

\begin{table}[htbp]
    \centering
    \caption{Mean $\pm$ standard deviation of the number of zero reductions for IBBA vs. OBBA. Here we see an up to 3.5 improvement in the number of zero reductions for the in distribution setting with total degree 4 and degree of transformation 1 for $n=5$.}
    \begin{tabular}{lrrll}
        \toprule
        Field & Total Degree & Degree Transformation & IBBA & OBBA \\
        \midrule
        $\mathbb{F}_{7}$   & 2 & 1 & 1369.31{\tiny $\pm$ 613.27} & 380.15{\tiny $\pm$ 140.33} \\
        $\mathbb{F}_{7}$   & 3 & 1 & 1854.63{\tiny $\pm$ 1114.23} & 638.12{\tiny $\pm$ 505.61} \\
        $\mathbb{F}_{7}$   & 4 & 1 & 3177.67{\tiny $\pm$ 2930.36} & 1746.99{\tiny $\pm$ 1992.24} \\
        $\mathbb{F}_{127}$ & 2 & 1 & 1409.02{\tiny $\pm$ 596.12} & 390.36{\tiny $\pm$ 155.47} \\
        $\mathbb{F}_{127}$ & 3 & 1 & 2224.25{\tiny $\pm$ 1478.50} & 1025.64{\tiny $\pm$ 785.64} \\
        $\mathbb{F}_{127}$ & 4 & 1 & 2510.42{\tiny $\pm$ 2006.56} & 1112.15{\tiny $\pm$ 1190.61} \\
        $\mathbb{F}_{31}$  & 2 & 1 & 1324.05{\tiny $\pm$ 647.51} & 361.22{\tiny $\pm$ 155.87} \\
        $\mathbb{F}_{31}$  & 3 & 1 & 2010.02{\tiny $\pm$ 1364.68} & 797.14{\tiny $\pm$ 695.37} \\
        $\mathbb{F}_{31}$  & 4 & 1 & 3167.66{\tiny $\pm$ 2592.95} & 1672.20{\tiny $\pm$ 1555.12} \\
        \bottomrule
    \end{tabular}
    \label{tab:zero_reductions_n5}
\end{table}

\begin{table}[t]
    \small
    \setlength\tabcolsep{2pt}
    \centering
    \caption{Wall‐clock runtime in seconds (mean $\pm$ standard deviation over 100 random zero‐dimensional systems of total degree $d\le4$ and max degree $1$ of $A$) for five algorithms over polynomial rings $\mathbb{F}_{7}$, $\mathbb{F}_{127}$, and $\mathbb{F}_{31}$ in $n=5$ variables.  “Baseline” comprises BBA and IBB; “Ours” comprises IBB+Oracle, IBB+FGE, and IBB+Oracle+FGE.  IBB+Oracle+FGE is the fastest in every setting.}
    \begin{tabularx}{\textwidth}{c r *{5}{Y}}
      \toprule
      & & \multicolumn{2}{c}{Baseline} & \multicolumn{3}{c}{Ours} \\
      \cmidrule(lr){3-4} \cmidrule(lr){5-7}
      Field & $d$ & BBA & IBB & IBB+Oracle & IBB+FGE & IBB+Oracle+FGE \\
      \midrule
      \multirow{3}{*}{$\mathbb{F}_{7}$}
        & 2 & 10.49 $\pm$ 7.37   &  7.16 $\pm$ 4.55   &  2.43 $\pm$ 1.24   &  0.86 $\pm$ 0.46   &  \textbf{0.59} $\pm$ 0.31   \\
        & 3 & 26.45 $\pm$ 34.13  & 19.61 $\pm$ 27.10  &  8.30 $\pm$ 12.60  &  1.88 $\pm$ 2.12   &  \textbf{1.30} $\pm$ 1.45   \\
        & 4 & 156.64 $\pm$ 581.66 & 101.22 $\pm$ 365.21 & 67.82 $\pm$ 282.15 &  7.44 $\pm$ 24.82  &  \textbf{5.51} $\pm$ 19.41  \\
      \addlinespace
      \multirow{3}{*}{$\mathbb{F}_{127}$}
        & 2 & 13.44 $\pm$ 8.52   &  9.04 $\pm$ 5.34   &  3.06 $\pm$ 1.49   &  1.01 $\pm$ 0.49   &  \textbf{0.67} $\pm$ 0.31   \\
        & 3 & 49.90 $\pm$ 74.57  & 33.86 $\pm$ 47.53  & 20.97 $\pm$ 34.87  &  2.75 $\pm$ 3.14   &  \textbf{2.09} $\pm$ 2.53   \\
        & 4 & 77.89 $\pm$ 132.87 & 53.57 $\pm$ 93.82  & 39.91 $\pm$ 87.16  &  4.62 $\pm$ 7.55   &  \textbf{3.47} $\pm$ 6.23   \\
      \addlinespace
      \multirow{3}{*}{$\mathbb{F}_{31}$}
        & 2 & 11.44 $\pm$ 8.25   &  7.60 $\pm$ 5.13   &  2.58 $\pm$ 1.37   &  0.88 $\pm$ 0.49   &  \textbf{0.60} $\pm$ 0.32   \\
        & 3 & 40.65 $\pm$ 76.90  & 27.36 $\pm$ 47.71  & 13.87 $\pm$ 26.07  &  2.38 $\pm$ 3.32   &  \textbf{1.66} $\pm$ 2.35   \\
        & 4 & 136.68 $\pm$ 244.07 & 97.84 $\pm$ 173.36 & 68.77 $\pm$ 158.72 &  7.42 $\pm$ 12.01  &  \textbf{5.63} $\pm$ 9.34   \\
      \bottomrule
    \end{tabularx}
    \label{tab:oracle-comparison-extended-n5}
\end{table}

\begin{table}[t]
    \small
    \setlength\tabcolsep{2pt}
    \centering
    \caption{Wall‐clock runtime in seconds (mean $\pm$ standard deviation over 100 random zero‐dimensional systems of total degree $d\le4$ and max degree $1$) for five algorithms over polynomial rings $\mathbb{F}_{127}$, $\mathbb{F}_{31}$, and $\mathbb{F}_{7}$ in $n=4$ variables. “Baseline” comprises BBA and IBB; “Ours” comprises IBB+Oracle, IBB+FGE, and IBB+Oracle+FGE. IBB+Oracle+FGE is the fastest in almost every setting.}
    \begin{tabularx}{\textwidth}{c r *{5}{Y}}
      \toprule
      & & \multicolumn{2}{c}{Baseline} & \multicolumn{3}{c}{Ours} \\
      \cmidrule(lr){3-4} \cmidrule(lr){5-7}
      Field & $d$ & BBA & IBB & IBB+Oracle & IBB+FGE & IBB+Oracle+FGE \\
      \midrule
      \multirow{3}{*}{$\mathbb{F}_{127}$}
        & 2 & 0.43 $\pm$ 0.42   & 0.33 $\pm$ 0.24   & 0.20 $\pm$ 0.11   & 0.09 $\pm$ 0.05   & \textbf{0.09} $\pm$ 0.04   \\
        & 3 & 1.71 $\pm$ 2.35   & 1.29 $\pm$ 1.61   & 0.62 $\pm$ 0.64   & \textbf{0.23} $\pm$ 0.21   & 0.24 $\pm$ 0.25   \\
        & 4 & 2.28 $\pm$ 3.66   & 1.76 $\pm$ 2.56   & 0.81 $\pm$ 1.00   & \textbf{0.29} $\pm$ 0.33   & 0.29 $\pm$ 0.33   \\
      \addlinespace
      \multirow{3}{*}{$\mathbb{F}_{31}$}
        & 2 & 0.46 $\pm$ 0.43   & 0.35 $\pm$ 0.25   & 0.22 $\pm$ 0.13   & 0.09 $\pm$ 0.05   & \textbf{0.09} $\pm$ 0.05   \\
        & 3 & 1.39 $\pm$ 1.59   & 1.07 $\pm$ 1.18   & 0.54 $\pm$ 0.53   & 0.21 $\pm$ 0.18   & \textbf{0.21} $\pm$ 0.18   \\
        & 4 & 4.65 $\pm$ 7.77   & 3.64 $\pm$ 6.10   & 2.12 $\pm$ 3.61   & 0.51 $\pm$ 0.66   & \textbf{0.50} $\pm$ 0.59   \\
      \addlinespace
      \multirow{3}{*}{$\mathbb{F}_{7}$}
        & 2 & 0.36 $\pm$ 0.27   & 0.29 $\pm$ 0.17   & 0.19 $\pm$ 0.09   & 0.08 $\pm$ 0.03   & \textbf{0.08} $\pm$ 0.03   \\
        & 3 & 1.25 $\pm$ 1.68   & 0.94 $\pm$ 1.19   & 0.48 $\pm$ 0.57   & 0.18 $\pm$ 0.17   & \textbf{0.18} $\pm$ 0.18   \\
        & 4 & 4.89 $\pm$ 11.42  & 4.11 $\pm$ 9.94   & 2.13 $\pm$ 5.71   & 0.55 $\pm$ 0.94   & \textbf{0.52} $\pm$ 0.84   \\
      \bottomrule
    \end{tabularx}
    \label{tab:oracle-comparison-lowdegree}
\end{table}

\begin{table}[t]
    \small
    \setlength\tabcolsep{2pt}
    \centering
    \caption{Wall‐clock runtime in seconds (mean $\pm$ standard deviation over 100 random zero‐dimensional systems of total degree $4\le d\le6$ and max degree $1$) for five algorithms over polynomial rings $\mathbb{F}_{31}$, $\mathbb{F}_{7}$, and $\mathbb{F}_{127}$ for $n=3$ variables. “Baseline” comprises BBA and IBB; “Ours” comprises IBB+Oracle, IBB+FGE, and IBB+Oracle+FGE. IBB+Oracle+FGE (last column) is the fastest in every setting.}
    \begin{tabularx}{\textwidth}{c r *{5}{Y}}
      \toprule
      & & \multicolumn{2}{c}{Baseline} & \multicolumn{3}{c}{Ours} \\
      \cmidrule(lr){3-4} \cmidrule(lr){5-7}
      Field & $d$ & BBA & IBB & IBB+Oracle & IBB+FGE & IBB+Oracle+FGE \\
      \midrule
      \multirow{3}{*}{$\mathbb{F}_{31}$}
        & 4 & 0.07 $\pm$ 0.11   & 0.06 $\pm$ 0.09   & 0.06 $\pm$ 0.08   & 0.03 $\pm$ 0.03   & \textbf{0.03} $\pm$ 0.03   \\
        & 5 & 0.30 $\pm$ 0.76   & 0.24 $\pm$ 0.56   & 0.19 $\pm$ 0.45   & 0.07 $\pm$ 0.12   & \textbf{0.07} $\pm$ 0.11   \\
        & 6 & 0.42 $\pm$ 0.77   & 0.36 $\pm$ 0.69   & 0.28 $\pm$ 0.53   & 0.10 $\pm$ 0.15   & \textbf{0.10} $\pm$ 0.14   \\
      \addlinespace
      \multirow{3}{*}{$\mathbb{F}_{7}$}
        & 4 & 0.09 $\pm$ 0.14   & 0.07 $\pm$ 0.09   & 0.06 $\pm$ 0.07   & 0.03 $\pm$ 0.03   & \textbf{0.03} $\pm$ 0.03   \\
        & 5 & 0.18 $\pm$ 0.41   & 0.15 $\pm$ 0.34   & 0.13 $\pm$ 0.26   & 0.05 $\pm$ 0.08   & \textbf{0.05} $\pm$ 0.08   \\
        & 6 & 0.28 $\pm$ 0.54   & 0.24 $\pm$ 0.45   & 0.20 $\pm$ 0.37   & 0.07 $\pm$ 0.11   & \textbf{0.07} $\pm$ 0.11   \\
      \addlinespace
      \multirow{3}{*}{$\mathbb{F}_{127}$}
        & 4 & 0.06 $\pm$ 0.08   & 0.05 $\pm$ 0.07   & 0.05 $\pm$ 0.07   & 0.02 $\pm$ 0.02   & \textbf{0.02} $\pm$ 0.02   \\
        & 5 & 0.14 $\pm$ 0.25   & 0.12 $\pm$ 0.18   & 0.10 $\pm$ 0.16   & 0.04 $\pm$ 0.05   & \textbf{0.04} $\pm$ 0.05   \\
        & 6 & 0.45 $\pm$ 0.70   & 0.35 $\pm$ 0.51   & 0.27 $\pm$ 0.39   & 0.10 $\pm$ 0.11   & \textbf{0.10} $\pm$ 0.11   \\
      \bottomrule
    \end{tabularx}
    \label{tab:oracle-comparison-highdeg}
\end{table}

\newpage

\subsection{Border Basis Bottleneck}
To characterize the computational bottleneck of the border‐basis algorithm, we measure the fraction of the total runtime spent in the final stage (i.e., the phase after the last universe enlargement) and, within that stage, the fraction attributable to the last $k\in\{1,2,3,4,5\}$ Gaussian‐elimination steps. Our empirical results in \cref{tab:ratio-analysis} indicate that the final stage overwhelmingly dominates execution time—on average consuming about $95\%$ of the total runtime. Moreover, the last five Gaussian eliminations alone account for roughly $70 \%–95 \%$ of the final‐stage runtime.
\begin{table}[htbp]
    \centering
    \caption{Proportion of total runtime spent in the final stage (FS) of Border Basis computation, and cumulative share of the last $k$ expansions (L$k$) within FS, averaged over 100 runs. The final stage typically dominates runtime ($>$70\%, often $>$95\%), with the last 5 expansions accounting for 70--90\% of FS across all settings. Interestingly, the very last expansion already account for about 30\% of the final stage.}
    \begin{tabular}{ccccccccc}
    \toprule
    Field & $n$ & Deg & FS & L1 & L2 & L3 & L4 & L5 \\
    \midrule
    \midrule
     &  & 4 & 0.986 & 0.270 & 0.516 & 0.646 & 0.699 & 0.721 \\
     & 3 & 5 & 0.979 & 0.279 & 0.511 & 0.624 & 0.678 & 0.694 \\
     &  & 6 & 0.982 & 0.741 & 0.893 & 0.922 & 0.937 & 0.944 \\
    \cmidrule(lr){2-9}
     &  & 2 & 0.976 & 0.318 & 0.594 & 0.738 & 0.780 & 0.800 \\
     $\mathbb{F}_{7}$ & 4 & 3 & 0.972 & 0.275 & 0.513 & 0.664 & 0.750 & 0.793 \\
     &  & 4 & 0.986 & 0.321 & 0.558 & 0.687 & 0.757 & 0.785 \\
    \cmidrule(lr){2-9}
     &  & 2 & 0.969 & 0.244 & 0.481 & 0.678 & 0.805 & 0.880 \\
     & 5 & 3 & 0.970 & 0.270 & 0.514 & 0.704 & 0.806 & 0.862 \\
     &  & 4 & 0.963 & 0.301 & 0.573 & 0.768 & 0.852 & 0.889 \\
    \midrule
     &  & 4 & 0.986 & 0.285 & 0.527 & 0.652 & 0.705 & 0.721 \\
     & 3 & 5 & 0.973 & 0.309 & 0.532 & 0.638 & 0.683 & 0.693 \\
     &  & 6 & 0.726 & 0.448 & 0.649 & 0.738 & 0.781 & 0.800 \\
    \cmidrule(lr){2-9}
     &  & 2 & 0.964 & 0.303 & 0.563 & 0.717 & 0.794 & 0.831 \\
     $\mathbb{F}_{31}$ & 4 & 3 & 0.980 & 0.274 & 0.520 & 0.669 & 0.749 & 0.792 \\
     &  & 4 & 0.963 & 0.313 & 0.569 & 0.713 & 0.780 & 0.815 \\
    \cmidrule(lr){2-9}
    &  & 2 & 0.960 & 0.326 & 0.588 & 0.736 & 0.826 & 0.877 \\
     & 5 & 3 & 0.970 & 0.275 & 0.524 & 0.714 & 0.805 & 0.858 \\
     &  & 4 & 0.568 & 0.289 & 0.560 & 0.744 & 0.852 & 0.895 \\
    \midrule
     &  & 4 & 0.981 & 0.287 & 0.525 & 0.655 & 0.708 & 0.727 \\
     & 3 & 5 & 0.979 & 0.284 & 0.523 & 0.649 & 0.702 & 0.726 \\
     &  & 6 & 0.983 & 0.569 & 0.759 & 0.821 & 0.844 & 0.857 \\
    \cmidrule(lr){2-9}
     &  & 2 & 0.992 & 0.346 & 0.591 & 0.709 & 0.758 & 0.789 \\
     $\mathbb{F}_{127}$ & 4 & 3 & 0.977 & 0.317 & 0.613 & 0.773 & 0.819 & 0.842 \\
    &  & 4 & 0.982 & 0.284 & 0.546 & 0.692 & 0.760 & 0.786 \\
    \cmidrule(lr){2-9}
     &  & 2 & 0.992 & 0.311 & 0.580 & 0.745 & 0.830 & 0.878 \\
     & 5 & 3 & 0.978 & 0.279 & 0.520 & 0.692 & 0.797 & 0.866 \\
     &  & 4 & 0.956 & 0.290 & 0.558 & 0.746 & 0.841 & 0.891 \\
    \bottomrule
    \end{tabular}
    \label{tab:ratio-analysis}
    \end{table}
\subsection{Border Gap vs Border Distance}
Detecting the final stage of Border Basis computation is critical for the efficiency of the proposed method. We empirically investigate the relationship between border distance, which is unknown in practice, and border gap, which can be an input parameter as we can measure it. To make the border gap independent of the scale of the problem, we consider the relative border gap defined as $\frac{|\mathcal{V}|}{|\mathcal{L}|}$.
\begin{figure}[htbp]
    \centering
    \includegraphics[width=0.7\linewidth]{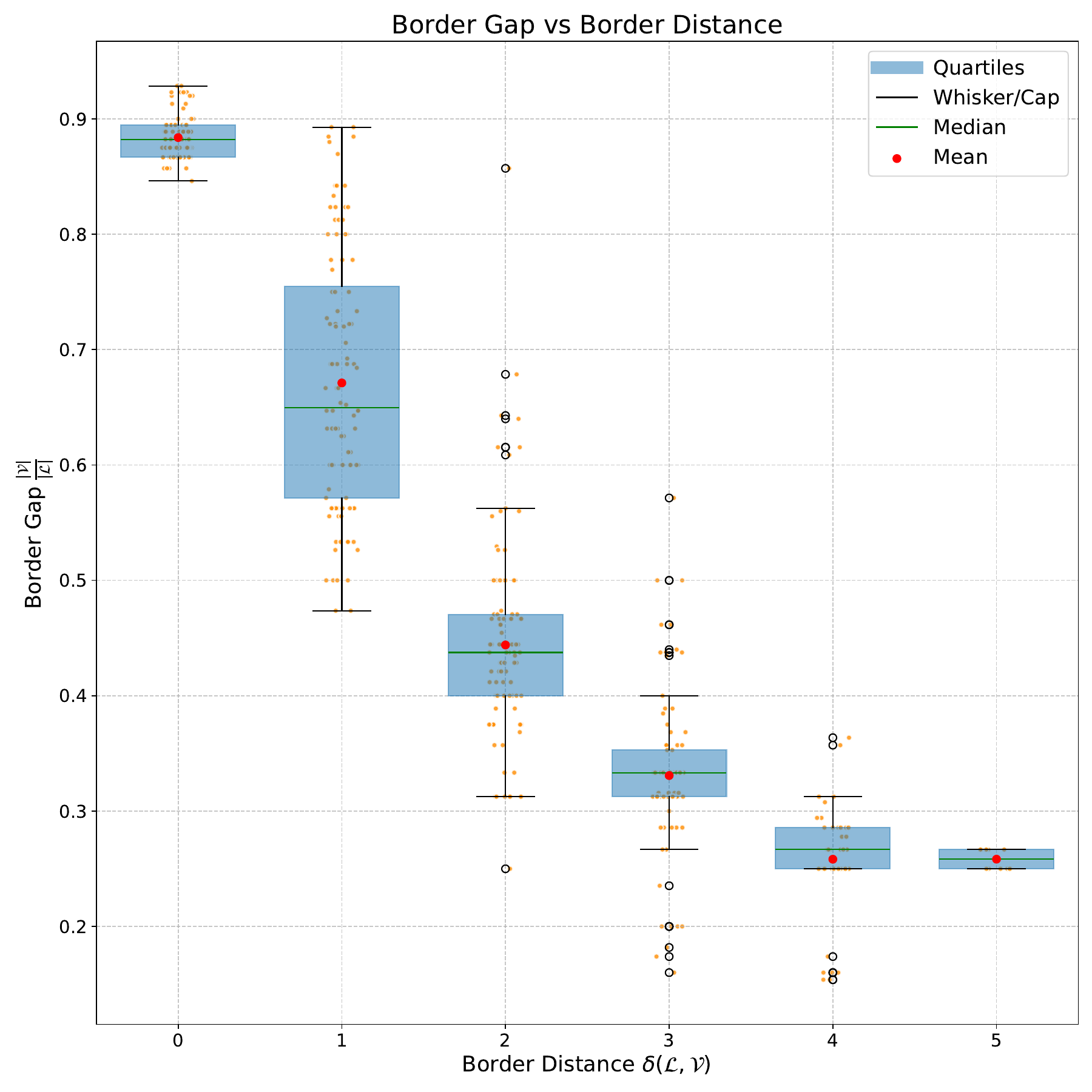}
    \caption{Border gap vs border distance for $n=3$ over $\mathbb{F}_{31}$.}
    \label{fig:border-gap-n3}
\end{figure}

\begin{figure}[htbp]
    \centering
    \includegraphics[width=0.7\linewidth]{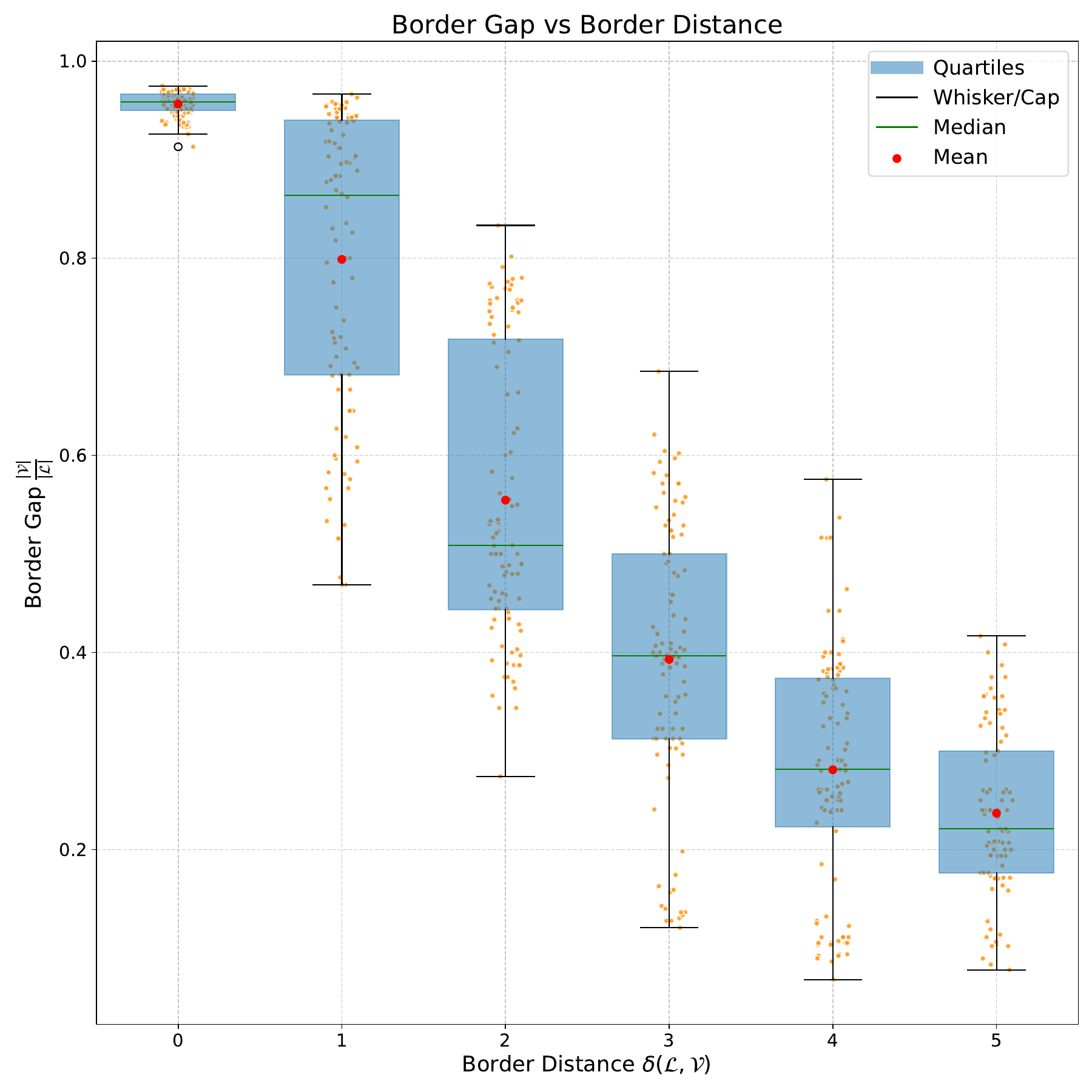}
    \caption{Border gap vs border distance for $n=4$ over $\mathbb{F}_{31}$.}
    \label{fig:border-gap-n4}
\end{figure}

\begin{figure}[htbp]
    \centering
    \includegraphics[width=0.7\linewidth]{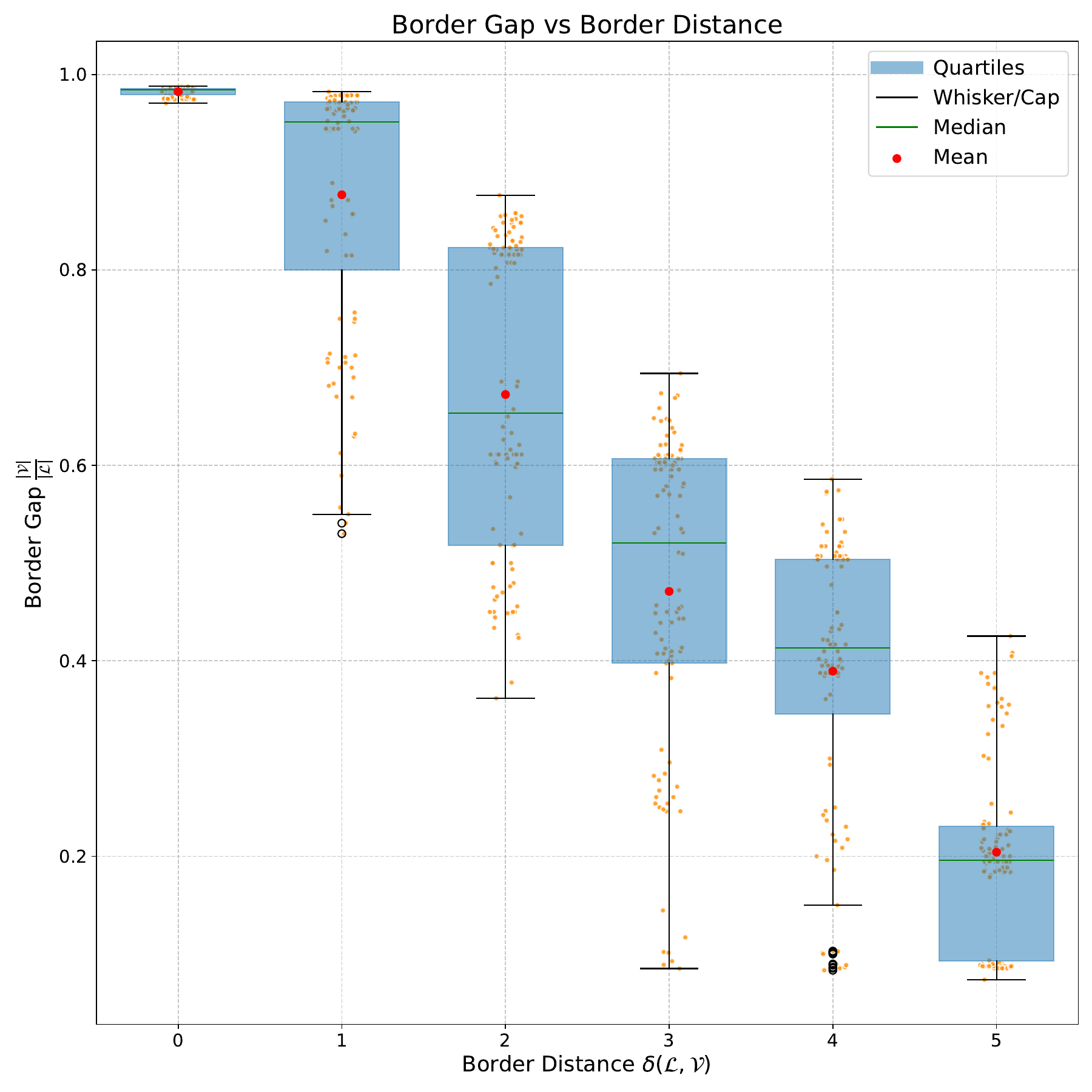}
    \caption{Border gap vs border distance for $n=5$ over $\mathbb{F}_{31}$.}
    \label{fig:border-gap-n5}
\end{figure}

\end{document}